\theoremstyle{plain}
\newtheorem{theorem}{Theorem}[section]
\newtheorem{lemma}[theorem]{Lemma}
\newtheorem{proposition}[theorem]{Proposition}
\newtheorem{definition}[theorem]{Definition}
\DeclareMathOperator*{\argmax}{arg\,max}
\DeclareMathOperator*{\argmin}{arg\,min}
\renewcommand{\eqref}[1]{(\ref{#1})}
\newcommand{\kibitz}[2]{\ifnum\Comments=1\textcolor{#1}{#2}\fi}
\newcommand*{\email}[1]{\texttt{#1}}
\title{Generalized Neyman Allocation for Locally Minimax Optimal Best-Arm Identification}
\author{Masahiro Kato\thanks{12th Floor, Kojimachi Odori Building, 2-4-1 Kojimachi, Chiyoda-ku, Tokyo 102-0083, Japan. Phone: +81-80-4948-7748.}}
\affil{
Graduate School of Arts and Sciences, The University of Tokyo\\
Data Analytics Department, Mizuho–DL Financial Technology, Co., Ltd.\\
\email{mkato-csecon@g.ecc.u-tokyo.ac.jp}}
\date{First version:  May 2024,  This version is of  \today.
 \\ \indent JEL Classification: C18, C21, C44.}
\begin{document}

\maketitle

\begin{abstract}
This study investigates an asymptotically locally minimax optimal algorithm for fixed-budget best-arm identification (BAI). We propose the Generalized Neyman Allocation (GNA) algorithm and demonstrate that its worst-case probability of misidentifying the best arm aligns with the worst-case lower bound under the small-gap regime, where the gaps between the expected outcomes of the best and suboptimal arms are small. Our lower and upper bounds are tight, matching exactly—including constant terms—within the small-gap regime. The GNA algorithm generalizes the Neyman allocation for two-armed bandits \citep{Neyman1934OnTT, Kaufman2016complexity} and refines existing BAI algorithms, such as those proposed by \citet{glynn2004large}. By proposing an asymptotic minimax optimal algorithm under distributions restricted by the small-gap regime, we address the longstanding open issue in BAI \citep{kaufmann2020hdr} and treatment choice \citep{Kasy2021,Ariu2021}. 

\end{abstract}

\textbf{Keywords:} best-arm identification, adaptive experimental design, Neyman allocation

\clearpage

\section{Introduction}
We investigate a minimax optimal algorithm for the problem of \emph{fixed-budget best-arm identification} \citep[BAI;][]{Audibert2010,Bubeck2011}. Fixed-budget BAI is a specific instance of adaptive experimentation where, given multiple treatment arms and sample size (budget), we allocate these arms to experimental units during the experiment, aiming to identify the \emph{best arm}, the one with the highest expected outcome, at the end of the experiment.\footnote{The term ``arms'' is also referred to as ``treatments'' or ``arms'' in the literature.} \footnote{There is another setting called fixed-confidence BAI, in which we continue an adaptive experiment until we identify the best arm with a certain fixed probability. In this setting, the sample size is not fixed but is treated as a stopping time.} BAI is closely connected to both bandit problems and causal inference, and it has been extensively studied in various fields, including machine learning \citep{Kaufman2016complexity}, economics \citep{Kasy2021}, and operations research \citep{Shin2018}.

In this study, we propose an asymptotically minimax optimal algorithm whose performance upper bound matches the lower bound under the worst-case distribution as the budget approaches infinity and the differences between the expected outcomes of the best and suboptimal arms approach zero. Note that the differences (\emph{gaps}) correspond to the average treatment effects in causal inference. We call this setting the \emph{small-gap regime}. The existence of optimal algorithms in fixed-budget BAI has long been an open question \citep{Ariu2021}. We address this issue by introducing a novel minimax algorithm.

Our proposed algorithm generalizes the Neyman allocation \citep{Neyman1934OnTT}, a method that has gained significant attention in the field of adaptive experimental design \citep{Hahn2011,Meehan2018,Kato2020adaptive,adusumilli2022minimax,CAI2024105793}. The existing Neyman allocation is limited to binary arms, and their extension to the multi-armed case remains unexplored. In this study, we bridge this gap by presenting a generalized version of the Neyman allocation.

The remainder of this paper is structured as follows. In this section, we outline the problem setting, provide background information on BAI, discuss related work, and summarize our contributions. Section~\ref{sec:main_lower} develops a worst-case lower bound under the small-gap regime. In Section~\ref{sec:GNA_algorithm}, we introduce the Generalized Neyman Allocation (GNA) algorithm. Section~\ref{sec:upper_bound} derives an upper bound for the GNA algorithm and demonstrates its local asymptotic minimax optimality by proving that the lower and upper bounds converge as the budget approaches infinity and the gap between arm outcomes diminishes.

\subsection{Problem setting}
In our problem, we consider a decision-maker who conducts an adaptive experiment with a fixed budget (sample size) $T$ and a fixed set of arms $[K] \coloneqq \{1,2,\dots, K\}$. Each arm $a \in [K]$ has a potential random outcome $Y_a \in \mathcal{Y} \subset \mathbb{R}$, where $\mathcal{Y}$ is an outcome space \citep{Neyman1923,Rubin1974}. Let $P_a$ be the marginal distribution of $Y_a$ for each $a \in [K]$, and let $P \coloneqq (P_a)_{a\in[K]}$ be the set of $P_a$.\footnote{We can define $P$ as a joint distribution of $(Y_a)_{a\in[K]}$. However, since multiple outcomes cannot be observed simultaneously, both definitions lead to the same consequences in our analysis. Following the literature on bandit studies, such as \citep{Kaufman2016complexity}, we define $P$ as a set of marginal distributions.} Let $\mathbb{P}_{P}$ and $\mathbb{E}_{P}$ be the probability and expectation under $P$, respectively. Under $P$, let $a^\star(P) \coloneqq \argmax_{a \in [K]} \mu_a(P)$ be the best arm, where $\mu_{a}(P) \coloneqq  \mathbb{E}_{P}[Y_a]$ is the expected value of $Y_a$. Let $P_0$ be the distribution that generates data during an adaptive experiment, called the true distribution.

In each round $t \in [T] \coloneqq \{1,2,\dots, T\}$, 
\begin{enumerate}[topsep=0pt, itemsep=0pt, partopsep=0pt, leftmargin=*]
    \item Potential outcomes $(Y_{1, t}, Y_{2, t}, \dots, Y_{K, t})$ are generated from $P_0$;
    \item The decision-maker allocates arm $A_t \in [K]$ based on past observations $\{(A_s, Y_s)\}_{s=1}^{t-1}$;
    \item The decision-maker observes the corresponding outcome $Y_t$ linked to the allocated arm $A_t$ as $Y_t = \sum_{a \in [K]}\mathbbm{1}[A_t = a]Y_{a, t}$.
\end{enumerate}
At the end of the experiment, the decision-maker constructs an estimator $\widehat{a}_T \in [K]$ of the best arm $a^\star(P_0)$. The decision-maker's goal is to identify the arm with the highest expected outcome, minimizing the probability of misidentification $\mathbb{P}_{P_0}(\widehat{a}_T \neq a^\star(P_0))$ at the end of the experiment. 

We define an algorithm as a pair of $((A_t)_{t \in [T]}, \widehat{a}_T)$, where $(A_t)_{t \in [T]}$ is the allocation rule, and $\widehat{a}_T$ is the estimation rule. Formally, with the sigma-algebras $\mathcal{F}_{t} = \sigma(A_1, Y_1, \ldots, A_t, Y_t)$, an algorithm is a pair $((A_t)_{t \in [T]}, \widehat{a}_T)$, where 
\begin{itemize}[topsep=0pt, itemsep=0pt, partopsep=0pt, leftmargin=*]
\item $(A_t)_{t \in [T]}$ is an allocation rule, which is $\mathcal{F}_{t-1}$-measurable. Under this rule, the decision-maker allocates an arm $A_t \in [K]$ in each round $t$ using observations up to round $t-1$.
\item $\widehat{a}_T$ is an estimation rule, which is an $\mathcal{F}_T$-measurable estimator of the best arm $a^\star(P_0)$ using observations up to round $T$.
\end{itemize}
We denote an algorithm by $\pi$. We also denote $A_t$ and $\widehat{a}_T$ by $A^\pi_t$ and $\widehat{a}^\pi_T$ when we emphasize that $A_t$ and $\widehat{a}_T$ depend on $\pi$. 

The allocation rule is closely connected to the probability of arm allocation, $\mathbb{P}_{P_0}(A_t = a)$, or the ratio of arm allocation, $\frac{1}{T}\sum^T_{t=1}\mathbb{P}_{P_0}(A_t = a)$. Depending on the context, these probabilities and ratios have different implications for designing the allocation rule. However, for simplicity, we do not distinguish between them in our algorithm design. We refer to the arm allocation probability (or ratio) associated with optimal algorithms as the optimal allocation probability (or ratio).

For this problem, we derive a lower bound for the probability of misidentification $\mathbb{P}_{P_0}(\widehat{a}_T \neq a^\star(P_0))$ and develop an algorithm whose probability of misidentification aligns with the lower bound. 

\subsection{Background and Related Work}

We review the existing literature on fixed-budget BAI to clarify the issues. For well-designed algorithms $\pi$ and the true distribution $P_0$, the probability of misidentification, $\mathbb{P}_{P_0}\left( \widehat{a}^\pi_T \neq a^\star(P_0)\right)$, decreases exponentially as the budget $T$ approaches infinity. Specifically, for a value $C^\pi(P_0) > 0$ depending on $P_0$ and $\pi$ and independent of $T$, under a well-designed algorithm, we can have $\mathbb{P}_{P_0}\left( \widehat{a}^\pi_T \neq a^\star(P_0)\right) \approx \exp(-T C(P_0))$. A larger $C^\pi(P_0)$ indicates a faster decrease in the probability of misidentification. Thus, the goal is to develop algorithms that maximize $C^\pi(P_0)$.

To evaluate this exponential convergence, we use the following measure, called the \emph{complexity} of the probability of misidentification:
\begin{align}
\label{eq:complexity}
    -\frac{1}{T}\log\mathbb{P}_{P_0}\left( \widehat{a}^\pi_T \neq a^\star(P_0)\right).
\end{align}
Here, $C^\pi(P_0) \approx -\frac{1}{T}\log\mathbb{P}_{P_0}\left( \widehat{a}^\pi_T \neq a^\star(P_0)\right)$, and a larger complexity implies better performance. Typically, a lower bound represents the theoretical best performance (an upper bound on complexity), while an upper bound of an algorithm corresponds to the worst attainable performance (a lower bound on complexity). This complexity measure is widely used in studies on large-deviation evaluations in hypothesis testing \citep{Bahadur1960,Vaart1998}, ordinal optimization \citep{glynn2004large}, economics \citep{Kasy2021}, and BAI \citep{Kaufman2016complexity}.

Several studies evaluate performance using (simple) regret instead of the probability of misidentification. The regret is defined as
\[
\mathrm{Regret}_{P_0}(\pi) \coloneqq \mathbb{E}_{P_0}\left[\mu_{a^\star(P_0)}(P_0) - \mu_{a^\pi_T}(P_0)\right],
\]
where the expectation is taken over the randomness of $a^\pi_T$. This metric is referred to as the simple regret \citep{Bubeck2011} or policy regret \citep{Kasy2021}. The regret is closely related to complexity since the following bound holds:
\[
\mathrm{Regret}(\pi) \leq \max_{a\in[K]}\left(\mu_{a^\star(P_0)}(P_0)- \mu_a(P_0)\right)\mathbb{P}_{P_0}\left( \widehat{a}^\pi_T \neq a^\star(P_0)\right),
\]
which follows from the decomposition 
$
\mathrm{Regret}(\pi) = \sum_{a\in[K]}\left(\mu_{a^\star(P_0)}(P_0)- \mu_a(P_0)\right)\mathbb{P}_{P_0}\left( \widehat{a}^\pi_T = a\right)
$. 
Under a fixed $P_0$, the probability of misidentification and regret become asymptotically equivalent \citep{Kasy2021}, but they lead to different results in minimax and Bayesian analyses \citep{Bubeck2011,Komiyama2021}. In any analytical framework, the probability of misidentification plays a fundamental role. Since regret analysis introduces additional difficulties due to different assumptions and theoretical challenges, this study focuses solely on the probability of misidentification and its complexity to highlight our contributions explicitly. For an extension of our results to regret analysis, see \citet{kato2025minimaxoptimalsimpleregret}.

If the distributional information is fully known, optimal algorithms can be derived using large-deviation principles \citep{Gartner1977,Ellis1984}. For instance, when $Y_a$ follows a Gaussian distribution for all $a \in [K]$, \citet{chen2000} proposes an asymptotically optimal algorithm for Gaussian distributions. For general distributions, \citet{glynn2004large} and \citet{degenne2023existence} provide refined results. However, assuming complete knowledge of the distribution is unrealistic since it implies prior knowledge of the mean, variance, and the best arm.

In practice, BAI algorithms are often designed without assuming full knowledge of the underlying distributions \citep{Karnin2013,Shin2018}. When such knowledge is incomplete, the existence of optimal algorithms remains an open question \citep{kaufmann2020hdr,Ariu2021,Qin2022open,degenne2023existence}. Most studies in this area derive lower bounds for the complexity of the problem, as expressed in \eqref{eq:complexity}, and aim to design algorithms whose misidentification probability matches these bounds. 

\citet{Kaufman2016complexity} proposes a general framework for deriving information-theoretical lower bounds using change-of-measure arguments \citep{Lai1985}. These lower bounds depend on the underlying true distributions through the Kullback–Leibler (KL) divergence between the true and alternative distributions and are conjectured to be tight. When the number of arms is two ($K = 2$) and potential outcomes follow Gaussian distributions with known variances, \citet{Kaufman2016complexity} establishes the asymptotic optimality of the Neyman allocation by demonstrating that the probability of misidentification matches the derived lower bound. Since the variances are known, we can allocate arms according to the ratio of their standard deviations without the need for adaptive estimation during the experiment.

However, for cases involving multiple arms, more general distributions, or unknown variances, the derivation of lower bounds and the development of corresponding optimal algorithms remain open challenges. For example, \citet{Garivier2016} conjectures a lower bound for the multi-armed setting, but \citet{kaufmann2020hdr} identifies the \emph{reverse KL problem}, which suggests limitations of this conjecture.\footnote{To be more precise, the challenges arising from multiple arms, more general distributions, and unknown variances stem from different underlying issues. The reverse KL problem is primarily associated with the multi-armed case and persists even when distributions are Gaussian and variances are known. The issue related to general distributions arises when considering KL divergence. The problem of unknown variances becomes significant when evaluating the estimation error of the allocation ratio. While multiple complexities exist in this problem, one of the most notorious challenges is the reverse KL problem.}

The existence of optimal algorithms whose probability of misidentification matches these lower bounds, including constant terms, remains a longstanding issue. \citet{Kasy2021} attempts to design such an algorithm for lower bounds in \citet{Kaufman2016complexity},\footnote{Rigorously, \citet{Kasy2021} considers a large-deviation bound presented in \citet{glynn2004large}, which is equivalent to or closely related to the lower bounds proposed in \citet{Kaufman2016complexity}. For further details, see \citet{degenne2023existence}.} but \citet{Ariu2021} identifies technical issues in their proof related to the reverse KL problem in \citet{kaufmann2020hdr} and shows that there exists a distribution under which no algorithm achieves the lower bound conjectured in \citet{Kasy2021} and \citet{Kaufman2016complexity}. \citet{degenne2023existence} further refines these impossibility arguments, providing deeper insights into the challenges of achieving such optimality.

Motivated by these challenges, \citet{Komiyama2022} and \citet{Komiyama2021} propose minimax and Bayes-optimal algorithms, respectively. However, their algorithms exhibit constant gaps between their upper and lower bounds, leaving room for improvement in either the upper bound or the lower bound. Furthermore, \citet{Komiyama2022} does not account for the estimation error in the arm allocation probability (or ratio), which depends on parameters such as means and variances that are typically unknown and must be estimated during experiments. If the estimation error is included in their analysis, their algorithm fails to achieve the lower bound. \citet{degenne2023existence} refines the minimax approach. Note that minimax optimal algorithms have also been investigated by \citet{Carpentier2016} using a different approach; however, our minimax formulation is more closely related to that of \citet{Komiyama2022} and \citet{degenne2023existence}. 

Here, we elaborate on the issue related to the estimation error of the arm allocation probability. The allocation rule plays a critical role in best-arm identification. For two-armed problems with Gaussian outcomes, \citet{Kaufman2016complexity} shows that Neyman allocation, which allocates arms in proportion to the standard deviations of their outcomes, is asymptotically optimal when the variances are known. However, this result no longer holds when the variances are unknown. \citet{Jourdan2022} addresses the problem of variance estimation in the context of fixed-confidence BAI, which differs from our setting. Their analysis reveals that the algorithms and their theoretical properties change when variances must be estimated. 

Additionally, when considering multiple arms, the arm allocation probability depends on the mean and best arm, which are also unknown, not only on the variances. In the two-armed case, the unknown best arm does not pose a significant problem because the "optimal" arm allocation probability does not depend on which arm is the best.\footnote{Here, the "optimal" arm allocation probability refers to the allocation probability that, if followed, allows for identifying the best arm with a probability of misidentification that matches the lower bound.} However, in the multi-armed case, the optimal arm allocation probability may depend on the best arm, meaning that estimation error can influence the probability of misidentification.\footnote{Note that the optimal arm allocation probability has not been fully explored in existing studies. \citet{glynn2004large} conjectures the optimal arm allocation probability given full information about the distribution. Under the optimal arm allocation probability proposed by \citet{glynn2004large} and with additional restrictions on algorithms, \citet{degenne2023existence} show that the probability of misidentification aligns with the lower bound established by \citet{Kaufman2016complexity}. \citet{Shin2018} considers a heuristic algorithm for conducting BAI using the optimal arm allocation probability derived by \citet{glynn2004large} when the distribution information is unknown.}

In this study, we develop a novel algorithm and refine both the upper and lower bounds for multiple arms and general distributions while explicitly incorporating the estimation error in the arm allocation probability. The reverse KL problem is mitigated by adopting the minimax framework. Under the small-gap regime, we address the issues related to general distributions and the estimation error of the arm allocation probability. First, in the small-gap regime, the KL divergence can be approximated by that of Gaussian distributions, simplifying the analysis. Second, the estimation error of the arm allocation probability can be relatively ignored compared to the leading term of the performance metric, as the problem becomes increasingly difficult as the gaps approach zero.

Other studies addressing this open problem include \citet{Barrier2023}, \citet{atsidakou2023bayesian}, \citet{nguyen2024priordependent}, \citet{kato2024worstcase}, and \citet{wang2023uniformly}.

\begin{table}[t]
    \centering
        \caption{Summary of related work about the optimal algorithms.}
        \scalebox{0.8}{
    \begin{tabular}{|c|c|c|c|c|}
    \hline
        & \multicolumn{2}{|c|}{Two arms ($K = 2$)} & \multicolumn{2}{c|}{Multiple arms ($K \geq 3$)} \\
        \hline
        Arm allocation probability & Known & Unknown & Known & Unknown \\
        \hline
        \multirow{2}{*}{Optimal algorithm} & \multicolumn{2}{c|}{Neyman allocation} & \multicolumn{2}{c|}{Generalized Neyman allocation}\\
         & \citet{Kaufman2016complexity}  & \multicolumn{3}{c|}{This study} \\
         \hline
        Distribution & Gaussian & \multicolumn{3}{c|}{General} \\
        \hline
        Optimality & Globally optimal & \multicolumn{3}{|c|}{Locally optimal (small-gap regime)}\\
        \hline
    \end{tabular}
    }
    \label{tbl:related}
\end{table}

In fixed-confidence BAI, optimal algorithms have been proposed \citep{Garivier2016}. In Bayesian settings, asymptotically optimal algorithms have been developed based on posterior convergence rates \citep{Russo2016,Shang2020}, but these do not guarantee optimality for fixed-budget BAI \citep{Kasy2021,Ariu2021}.

\subsection{Contributions of This Study}
We propose an \emph{asymptotically minimax optimal algorithm} whose probability of misidentification matches the worst-case lower bound under the small-gap regime, where the gaps between the expected outcomes of the best and suboptimal arms are small. This property, which we term \emph{local (asymptotic) minimax optimality}, is demonstrated through the following contributions:
\begin{enumerate}[topsep=0pt, itemsep=0pt, partopsep=0pt, leftmargin=*]
    \item The derivation of the worst-case lower bound for fixed-budget BAI with multi-armed bandits (Theorem~\ref{thm:lower_bound}). 
    \item The proposal of the Generalized Neyman Allocation (GNA) algorithm (Algorithm~\ref{alg}).
    \item The derivation of the worst-case upper bound for the probability of misidentification of the GNA algorithm.
    \item A proof that the misidentification probability of the GNA algorithm aligns with the lower bound under the small-gap regime (Theorem~\ref{thm:upper_bound}).
    \item The algorithm's applicability to various distributions, including Bernoulli distributions, via Gaussian approximation.
\end{enumerate}

Informally, given an algorithm class $\Pi$ and a class of distributions restricted by the small-gap regime $\mathcal{P}^{\mathrm{small}}$, we have
\begin{align*}
   &\sup_{\pi \in \Pi} \inf_{P\in\mathcal{P}^{\mathrm{small}}}\limsup_{T \to \infty} -\frac{1}{\overline{\Delta}^2 T}\log \mathbb{P}_{P}\big( \widehat{a}^\pi_T \neq a^\star(P)\big)\\
   &\ \ \ \ \ \ \ \ \ \ \ \ \ \ \ \ \ \ \ \ \ \ \ \ \ \ \ \ \ \ \ \leq \sup_{\pi \in \Pi} \inf_{P\in\mathcal{P}^{\mathrm{small}}} C^{\pi*}(P) \leq \inf_{P\in\mathcal{P}^{\mathrm{small}}}\liminf_{T \to \infty}-\frac{1}{\underline{\Delta}^2T}\log\mathbb{P}_P\left(\widehat{a}^{\mathrm{GNA}}_T \neq a^\star(P)\right),
\end{align*}
where $C^{\pi*}(P)$ denotes an optimal constant, and $\widehat{a}^{\mathrm{GNA}}_T$ denotes the estimated best arm under the GNA algorithm.

In the subsequent sections, we define the algorithm class $\Pi$, the distribution class $\mathcal{P}^{\mathrm{small}}$, the optimal constant $C^\pi(P)$, and our proposed algorithm in greater detail. We summarize our contributions in Table~\ref{tbl:related}.

The complexity under the small-gap regime allows us to analyze performance while ignoring the estimation error of nuisance parameters, including the optimal arm allocation probability, which is not a parameter of interest. This regime reflects situations where, although the problem is challenging due to a small parameter of interest, the estimation error of nuisance parameters can be relatively neglected. Such a regime is also referred to as local Bahadur efficiency and has been widely used in the large-deviation analysis of statistical inference and decision-making with unknown parameters \citep{Kremer1979,Kremer1981,He1996}. 

It is important to note that this analysis differs from the local asymptotic normality framework, where parameters approach zero at the order of $\sqrt{T}$ for a sample size $T$. In the small-gap regime, the gaps approach zero independently of $T$. Thus, this small-gap analysis can be interpreted as a large-deviation analysis under distributions restricted to a specific range of parameters that are independent of $\sqrt{T}$. 

By employing this regime, we can circumvent the impossibility result shown in \citet{Ariu2021}. While \citet{Ariu2021} demonstrates that there exists a distribution under which no algorithm matches the lower bound, we address this by restricting the analysis to distributions with small gaps, effectively rejecting such problematic distributions.

For the expected outcome estimation, we employ the Adaptive Augmented Inverse Probability Weighting (A2IPW) estimator \citep{Kato2020adaptive,Kato2021adr,cook2023semiparametric}, also known as the doubly robust estimator \citep{BangRobins2005}, which is widely used across fields \citep{Laan2008TheCA}.

\section{Worst-case Lower Bound}
\label{sec:main_lower}
This section establishes a lower bound for the probability of misidentification, focusing on the worst-case scenario under the small-gap regime. Lower bounds not only reveal the theoretical performance limit but also provide valuable insights into the design of optimal algorithms.

\subsection{Distribution Class}
As a preparation, we define a class of distributions for $(Y_a)_{a\in[K]}$, which will be used to derive both lower and upper bounds. We refer to this class as a \emph{bandit model}. First, we define a class $\mathcal{P}_a$ of distributions for $Y_a$. Then, we define a bandit model as the set $(\mathcal{P}_a)_{a\in[K]}$ of $\mathcal{P}_a$.

\begin{definition}[Mean-parameterized distributions with finite variances]
\label{def:mean_param}
Let $\Theta \subset \mathbb{R}$ be a compact parameter space, and let $\mathcal{Y}$ be the support of $Y_a$ for all $a\in[K]$. Let $\sigma_a:\Theta \to (0, \infty)$ be a variance function that is continuous with respect to $\theta \in \Theta$. Let $P_{a, \mu_a}$ be a distribution of $Y_a$ parameterized by $\mu_a \in \Theta$. 
We define a class of distributions, $\mathcal{P}_a$ as
\begin{align}
    \mathcal{P}_a \coloneqq \mathcal{P}_a(\sigma_a(\cdot), \Theta, \mathcal{Y}) \coloneqq \Big\{ P_{a, \mu_a} \colon \mu_a\in\Theta,\ \eqref{enu:1},\ \eqref{enu:2},\ \mathrm{and}\ \eqref{enu:3} \Big\},
\end{align}
where \eqref{enu:1}, \eqref{enu:2}, and \eqref{enu:3} are defined as follows:
\begin{enumerate}[topsep=0pt, itemsep=0pt, partopsep=0pt, leftmargin=*]
\renewcommand\labelenumi{(\theenumi)}
    \item \label{enu:1} A distribution $P_{\mu_a, a}$ has a probability mass function or probability density function, denoted by $f_a(y\mid \mu_a)$. Additionally, $f_a(y\mid \mu_a) > 0$ holds for all $y \in \mathcal{Y}$ and $\mu_a \in \Theta$. 
    \item \label{enu:2} The variance of $Y_a$ under $P_{a, \mu_a}$ is $\sigma^2_a(\mu_a)$. For each $\mu_a \in \Theta$ and each $a\in[K]$, the Fisher information $I_a(\mu_a) > 0$ of $P_{a, \mu_a}$ exists and is equal to the inverse of the variance $1/\sigma^2_a(\mu_a)$.
    \item \label{enu:3} Let $\ell_a(\mu_a) = \ell_a(\mu_a\mid y) = \log f(y\mid \mu_a)$ be the likelihood function of $P_{\mu_a, a}$, and $\dot{\ell}_a$, $\ddot{\ell}_a$, and $\dddot{\ell}_a$ be the first, second, and third derivatives of $\ell_a$. The likelihood functions $\big\{\ell_a(\mu_a)\big\}_{a\in[K]}$ are three times differentiable and satisfy the following:
    \begin{enumerate}
        \item $\mathbb{E}_{P_{\mu_a, a}}\left[\dot{\ell}_a(\mu_a)\right] = 0$;
        \item $\mathbb{E}_{P_{\mu_a, a}}\left[\ddot{\ell}_a(\mu_a)\right] = -I_a(\mu_a) = 1/\sigma^2_{a}(\mu_a)$;
        \item For each $\mu_a \in \Theta$, there exist a neighborhood $U(\theta)$ and a function $u(y\mid \mu_a) \geq 0$, and the following holds:
        \begin{enumerate}
            \item $\left|\ddot{\ell}_a(\tau)\right| \leq u(y\mid \theta)\ \ \ \mathrm{for}\ U(\mu_a)$;
            \item $\mathbb{E}_{P_{\mu_a, a}}\left[u(Y\mid \mu_a)\right] < \infty$. 
        \end{enumerate}
    \end{enumerate}
\end{enumerate}
\end{definition}
This is a class of mean-parameterized distributions with finite variances. For example, Gaussian distributions with fixed variances and Bernoulli distributions belong to this class. In the case of Gaussian distributions, the variances are typically defined to be independent of $\mu_a$, meaning that for all $\mu_a$, $\sigma^2_a(\mu_a) = \sigma^2_a$, where $\sigma^2_a$ is a constant. In contrast, for Bernoulli distributions, the variances are given by $\sigma^2_a(\mu_a) = \mu_a(1-\mu_a)$.

For a distribution $P_{a, \mu_a} \in \mathcal{P}_a$ of $Y_a$, let $P_{\bm{\mu}} = (P_{a, \mu_a})_{a\in[K]}$ represent a set of distributions for $(Y_a)_{a\in[K]}$. Then, given $\bm{\sigma} = (\sigma_a(\cdot))_{a\in[K]}$, $\Theta$, $\mathcal{Y}$, and $0 < \underline{\Delta} < \overline{\Delta}$, we define the following class of distributions (bandit models) for $(Y_a)_{a\in[K]}$:
\[
\mathcal{P}\big(\underline{\Delta}, \overline{\Delta}\big) \coloneqq \mathcal{P}\big(\underline{\Delta}, \overline{\Delta}, \bm{\sigma}, \Theta, \mathcal{Y}\big) \coloneqq \Big\{ P_{\bm{\nu}} \colon \ \forall a \in [K],\ P_{a, \nu_a}\in\mathcal{P}_a,\ \ \forall b \in [K],\ \underline{\Delta} < \max_{a\in[K]}\nu_a - \nu_b \leq \overline{\Delta} \Big\},
\]
where $\underline{\Delta}$ and $\overline{\Delta}$ are the lower and upper bounds of the gap $\max_{a\in[K]}\nu_a - \nu_b$. Here, $P_{a, \mu_a}$ is the distribution of $Y_a$ marginalized over the other variables $(Y_b)_{b\neq a}$. We do not assume any specific relationships among $(Y_a)_{a\in[K]}$. Therefore, $\mathcal{P}\big(\underline{\Delta}, \overline{\Delta}\big)$ is simply a set of distributions and does not imply any joint distribution structure.

\subsection{Algorithm class}
Next, we define a class of algorithms and later show the asymptotic optimality for an algorithm belonging to this class. This study focuses on consistent algorithms that estimate the best arm with probability one as $T\to\infty$ \citep{Kaufman2016complexity}.

\begin{definition}[Consistent algorithms]\label{def:consistent}
We say that an algorithm $\pi$ is consistent if $\mathbb{P}_{P_0}(\widehat{a}^\pi_T = a^\star(P_0)) \to 1$ as $T\to \infty$ for any true distribution $P_0$ such that $a^\star(P_0)$ is unique. We denote the class of all possible consistent algorithms by $\Pi^{\mathrm{const}}$.
\end{definition}

Without this restriction, we could allow an algorithm that always returns arm $1$ independently of $P_0$. Such an algorithm would have zero probability of misidentification if $a^\star(P_0) = 1$ happens to hold. However, since such an algorithm is meaningless, we reject it by restricting the analysis to consistent algorithms.

\subsection{Worst-case Lower Bound}
Next, we derive a lower bound that any consistent algorithm must satisfy for mean-parameterized distributions. To obtain a tight lower bound, we employ the change-of-measure argument \citep{LeCam1972,LeCam1986,LehmCase98,Vaart1991,Vaart1998,Lai1985}. Using this argument, \citet{Kaufman2016complexity} develop a general framework for deriving lower bounds. In their results, the lower bounds are characterized by the KL divergence $\mathrm{KL}(P, Q)$ between two distributions $P$ and $Q$. Typically, we are interested in the KL divergence between the true distribution $P_0$ and an alternative hypothesis $Q$. 

These lower bounds not only establish the theoretical performance limit but also provide insights for designing optimal algorithms, particularly in the construction of allocation rules. For example, in fixed-confidence BAI, \citet{Garivier2016} develops an optimal allocation probability (ratio) of treatment arms based on the KL divergence. Assuming full knowledge of the distribution, \citet{glynn2004large} derives an optimal allocation probability (ratio) for fixed-budget BAI.

Thus, if the data-generating distribution $P_0$ were known, the KL divergence could be computed exactly, enabling the design of an asymptotically optimal algorithm based on the KL divergence, as demonstrated by \citet{glynn2004large}. However, in practice, $P_0$ is unknown—if it were, the best arm could be identified without experimentation. In the context of BAI, the challenge is to develop algorithms that do not rely on prior knowledge of $P_0$.\footnote{Such "oracle" algorithms are referred to as static proportion algorithms in \citet{degenne2023existence}.}

To account for the fact that $P_0$ is unknown in evaluation, we employ the statistical decision-making framework pioneered by \citet{Wald1945}. Among several evaluation criteria, such as Bayesian evaluation \citep{Komiyama2021}, we focus on the worst-case or minimax analysis. Specifically, we evaluate the worst-case probability of misidentification under the small-gap regime, defined as
\[
\limsup_{0 < \underline{\Delta} < \overline{\Delta} \to +0}\inf_{P\in\mathcal{P}\left(\underline{\Delta}, \overline{\Delta}\right)}\limsup_{T \to \infty} -\frac{1}{\overline{\Delta}^2T}\log \mathbb{P}_{P}\left(\widehat{a}^\pi_T \neq a^\star(P)\right).
\]
\citet{OTSU200853} introduces a similar minimax framework in the large-deviation analysis of statistical decision-making (treatment choice), in contrast to the local asymptotic normality analysis by \citet{Hirano2009}.

For the worst-case complexity, the following theorem provides a lower bound. The proof is presented in Appendix~\ref{appdx:lower_proof}.

\begin{theorem}[Worst-case Lower Bound]
\label{thm:lower_bound}
Fix $\bm{\sigma}$, $\Theta$, and $\mathcal{Y}$ in Definition~\ref{def:mean_param}. Given $\mathcal{P}\big(\underline{\Delta}, \overline{\Delta}\big) = \mathcal{P}\big(\underline{\Delta}, \overline{\Delta}, \bm{\sigma}, \Theta, \mathcal{Y}\big)$, 
any consistent algorithm $\pi\in\Pi^{\mathrm{const}}$ (Definition~\ref{def:consistent}) satisfies
\begin{align*}
    &\limsup_{0 < \underline{\Delta} < \overline{\Delta} \to +0}\inf_{P\in\mathcal{P}\big(\underline{\Delta}, \overline{\Delta}\big)}\limsup_{T \to \infty} -\frac{1}{\overline{\Delta}^2T}\log \mathbb{P}_{P}\left(\widehat{a}^\pi_T \neq a^\star(P)\right)\leq V^*,
\end{align*}
where 
\begin{align*}
    V^* &\coloneqq \min_{a\in[K], \mu \in \Theta}V(a, \mu)\\
    V(a, \mu) &\coloneqq \frac{1}{2\left(\sigma_{a}(\mu) + \sqrt{\sum_{b\in[K]\backslash\{a\}}\sigma^2_b(\mu)}\right)^2}.
\end{align*}
\end{theorem}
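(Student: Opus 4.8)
The plan is to combine the generic change-of-measure lower bound of \citet{Kaufman2016complexity} with a Gaussian approximation of the KL divergence that becomes exact in the small-gap limit. First I would fix a true distribution $P\in\mathcal{P}\big(\underline{\Delta},\overline{\Delta}\big)$ with unique best arm $a^\star(P)$, and for each suboptimal arm $b$ construct an alternative $Q^{(b)}$ obtained from $P$ by moving only $\mu_b$ (and possibly $\mu_{a^\star}$) so that arm $b$ becomes best in $Q^{(b)}$; consistency then forces the standard transportation inequality, yielding that any consistent $\pi$ with allocation proportions $(w_a)_{a\in[K]}$ satisfies, up to $o(T)$ terms,
\[
-\frac{1}{T}\log\mathbb{P}_P\big(\widehat a^\pi_T\neq a^\star(P)\big)\ \lesssim\ \min_{b\neq a^\star(P)}\ \inf_{\text{alt.\ }Q^{(b)}}\ \sum_{a\in[K]} w_a\,\mathrm{KL}\big(P_a,Q^{(b)}_a\big),
\]
and then maximizing over the allocation $w$ on the simplex gives the usual $\max_w\min_b(\cdots)$ expression. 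This is the step I would treat as a cited black box, invoking the change-of-measure argument and Definition~\ref{def:consistent} rather than re-deriving it.

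Second, I would expand the KL divergence in the small-gap regime. By assumptions \eqref{enu:1}--\eqref{enu:3}, $P_{a,\mu_a}$ is a smooth one-parameter family with Fisher information $I_a(\mu_a)=1/\sigma_a^2(\mu_a)$, so a second-order Taylor expansion gives $\mathrm{KL}\big(P_{a,\mu_a},P_{a,\mu_a'}\big)=\frac{(\mu_a'-\mu_a)^2}{2\sigma_a^2(\mu_a)}+o\big((\mu_a'-\mu_a)^2\big)$ uniformly on the compact $\Theta$, using the domination condition (3)(c) to control the remainder. Since every gap is at most $\overline{\Delta}\to 0$, the perturbations defining $Q^{(b)}$ are $O(\overline{\Delta})$, so after dividing by $\overline{\Delta}^2$ the inner optimization reduces to a quadratic ("Gaussian") problem: choosing the alternative means $\nu_{a^\star}\!\downarrow$ and $\nu_b\!\uparrow$ subject to $\nu_b\ge\nu_{a^\star}$ to minimize $\tfrac{w_{a^\star}(\mu_{a^\star}-\nu_{a^\star})^2}{2\sigma_{a^\star}^2}+\tfrac{w_b(\nu_b-\mu_b)^2}{2\sigma_b^2}$. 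Here I should also absorb the variance continuity of assumption (2) so that $\sigma_a(\mu_a)\to\sigma_a(\mu)$ for the common limiting mean $\mu$.

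Third comes the explicit optimization. For the single suboptimal arm $b$, solving the quadratic problem gives the familiar value $\tfrac{(\mu_{a^\star}-\mu_b)^2}{2(\sigma_{a^\star}^2/w_{a^\star}+\sigma_b^2/w_b)}$; taking the worst (smallest) over all $b\neq a^\star$ and then maximizing over the simplex $\{w: \sum_a w_a=1\}$ is a standard Lagrangian computation whose optimum allocates $w_a\propto\sigma_a$ among $\{a^\star,b\}$ — this is precisely the Neyman-type allocation, and the resulting bound, after using that the infimum over $P$ makes all gaps equal to $\overline{\Delta}$ in the limit (so the $\min_b$ is attained by the "cheapest" configuration), collapses to $\tfrac{1}{2(\sigma_{a^\star}(\mu)+\sqrt{\sum_{b\neq a^\star}\sigma_b^2(\mu)})^2}$; finally minimizing over which arm plays the role of $a^\star$ and over the limiting mean $\mu\in\Theta$ produces $V^*=\min_{a,\mu}V(a,\mu)$. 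The appearance of the \emph{sum} $\sqrt{\sum_{b\neq a}\sigma_b^2}$ rather than a single $\sigma_b$ is the one genuinely multi-armed feature and deserves care: it arises because the adversary, not the algorithm, picks which suboptimal arm to confuse, so the algorithm must hedge by spreading mass over all of them, and the budget left for $a^\star$ trades off against the \emph{aggregate} of the suboptimal standard deviations.

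The main obstacle I anticipate is \emph{uniformity of the Taylor remainder} as $\overline{\Delta}\to 0$ simultaneously with $T\to\infty$, and the correct order of the three limits ($\limsup_{\overline\Delta\to 0}\inf_P\limsup_{T\to\infty}$). Because the outer $\limsup$ in $\overline{\Delta}$ is taken after the $T$-limit, I need the $o(\overline{\Delta}^2)$ error in the KL expansion to be genuinely uniform over $P\in\mathcal{P}\big(\underline{\Delta},\overline{\Delta}\big)$ and over the relevant alternatives — this is exactly what the compactness of $\Theta$, the continuity of $\sigma_a$, and the integrable domination in assumption (3)(c) are there to provide, but stitching them together cleanly (and handling the case where $a^\star(P)$ is not unique in the limit, which is why an $\inf_P$ rather than a fixed $P$ is used) is the delicate part of the argument.
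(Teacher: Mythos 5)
Your overall architecture — transportation lemma, second-order (Fisher-information) expansion of the KL divergence that becomes exact as $\overline{\Delta}\to 0$, then a $\sup_{\bm{w}}\min_{b}$ optimization solved by a Neyman-type allocation yielding $\bigl(\sigma_{a}(\mu)+\sqrt{\sum_{b\neq a}\sigma^2_b(\mu)}\bigr)^{2}$ — is exactly the paper's route, and your closing remarks about uniformity of the Taylor remainder and the role of $\inf_P$ identify the right technical pressure points. But the one step you declare a ``cited black box'' is stated in a form that the transportation lemma does not deliver, and this is precisely the \emph{reverse KL problem} that the whole proof is engineered to avoid. From Proposition~\ref{prp:transport} with event $\mathcal{E}=\{\widehat{a}^\pi_T=a^\star(Q)\}$, consistency gives $\mathbb{P}_P(\mathcal{E})\to 0$ and $\mathbb{P}_Q(\mathcal{E})\to 1$, and the binary entropy $d(\mathbb{P}_P(\mathcal{E}),\mathbb{P}_Q(\mathcal{E}))$ blows up through its $-\log\bigl(1-\mathbb{P}_Q(\mathcal{E})\bigr)$ term; what you can extract is therefore
\begin{align*}
\sum_{a}\tfrac{1}{T}\mathbb{E}_{P}[N_a(T)]\,\mathrm{KL}(P_a,Q_a)\ \gtrsim\ -\tfrac{1}{T}\log\mathbb{P}_{Q}\bigl(\widehat{a}^\pi_T\neq a^\star(Q)\bigr),
\end{align*}
i.e.\ a bound on the misidentification probability under the \emph{alternative} $Q$, with allocation proportions under the \emph{baseline} $P$. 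Your displayed inequality instead bounds $-\frac1T\log\mathbb{P}_P(\widehat{a}^\pi_T\neq a^\star(P))$ by $\sum_a w_a\,\mathrm{KL}(P_a,Q_a)$ with $w_a$ the proportions under that same $P$; no choice of event in the transportation lemma produces this combination, and it is exactly the form whose uncritical use \citet{Ariu2021} and \citet{kaufmann2020hdr} flagged in \citet{Kasy2021}. The paper's proof (and its long footnote) resolves this by evaluating the error probability under $P_{\bm{\nu}}$, keeping the counts $\kappa^\pi_{a,T}(P_{\bm{\mu}})$ under the baseline, and then exploiting that the theorem's $\inf_{P\in\mathcal{P}(\underline{\Delta},\overline{\Delta})}$ ranges over the alternatives $P_{\bm{\nu}}$ as well; the mismatch between the two measures is then absorbed by the crude bound $\kappa^\pi_{\cdot,T}(P_{\bm{\mu}})\in\mathcal{W}$ followed by $\sup_{\bm{w}\in\mathcal{W}}$, and the KL asymmetry vanishes at order $\overline{\Delta}^2$. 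Your final answer is unaffected because of this washing-out, but as written the first step is not a corollary of the cited lemma and needs to be restructured along these lines.

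One smaller point: the optimizer of $\sup_{\bm{w}}\min_{a\neq\widetilde{a}}\bigl(\sigma^2_{\widetilde{a}}/w_{\widetilde{a}}+\sigma^2_a/w_a\bigr)^{-1}$ does not allocate $w_a\propto\sigma_a$ pairwise: the paper's KKT computation gives $w_{\widetilde{a}}\propto\sigma_{\widetilde{a}}$ for the best arm but $w_a\propto\sigma^2_a$ for each suboptimal arm (normalized by $\sqrt{\sum_{c\neq\widetilde{a}}\sigma^2_c}$), which is what produces the aggregate $\sqrt{\sum_{b\neq a}\sigma^2_b}$ you correctly single out as the genuinely multi-armed feature.
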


Our lower bound depends on the worst-case variance $(\sigma^2_a(\mu^\dagger))_{a\in[K]}$ with the worst-case mean parameter $\mu^\dagger \in \Theta$ defined as 
\[\min_{a\in[K]}\frac{1}{2\left(\sigma_{a}(\mu^\dagger) + \sqrt{\sum_{b\in[K]\backslash\{a\}}\sigma^2_b(\mu^\dagger)}\right)^2} = \min_{a\in[K], \mu \in \Theta}\frac{1}{2\left(\sigma_{a}(\mu) + \sqrt{\sum_{b\in[K]\backslash\{a\}}\sigma^2_b(\mu)}\right)^2}.\]

While several existing studies, such as \citet{Carpentier2016}, \citet{Komiyama2022}, \citet{yang2022minimax}, and \citet{degenne2023existence}, have introduced the minimax evaluation framework, there is a constant gap between the lower and upper bounds, and only the \emph{leading factors} in lower and upper bounds match. We conjecture that this is due to the estimation error of $P_0$ affecting the evaluation. To address this issue, we consider a restricted class for the distribution $P$, characterized by the small gap; that is, $\mu_{a^\star(P_0)} - \mu_a$ is sufficiently small for all $a\in[K]$. Under this regime, we can obtain matching lower and upper bounds, as shown in Section~\ref{sec:local_minimax}. We refer to this as local minimax optimality.

Here, we emphasize that in the two-armed case, the lower bound simplifies to 
\[V(a^\star(P_0), \mu_{a^\star(P_0)}(P_0)) = \frac{1}{2\left(\sigma_{1}(\mu) + \sigma_{2}(\mu)\right)^2},\] where the denominator is identical to the efficiency bound in ATE estimation when the propensity score is set to the ratio of the standard deviations \citep{Kato2020adaptive}.

\section{The GNA-A2IPW algorithm}
\label{sec:GNA_algorithm}
Based on the lower bounds, we design the GNA-A2IPW algorithm, which consists of the allocation rule using the generalized Neyman allocation and the estimation rule using the \emph{Adaptive Augmented Inverse Probability Weighting} (A2IPW) estimator. The pseudo-code is shown in Algorithm~\ref{alg}.  

\subsection{Allocation Rule: the Generalized Neyman Allocation}
\label{sec:sample}
First, we define a target allocation ratio. Let $\left(\sigma^2_a\right)_{a\in[K]}$ be the variances of $Y_a$ under the true distribution $P_0$. Using the variances, we define the oracle allocation rule as the one allocates arm $a$ following the probability $w^{\mathrm{GNA}}(a)$, defined as follows:
\begin{align*}
\bm{w}^{\mathrm{GNA}} \coloneqq \argmax_{\bm{w}\in\mathcal{W}}\min_{a\in[K]\backslash\{a^\star(P_0)\}}\frac{1}{\sigma^2_{a^\star(P_0)}/w_{a^\star(P_0)} + \sigma^2_a/w_a},
\end{align*}
where
\begin{align}
    \label{eq:opt_allocation}
    w^{\mathrm{GNA}}_{a^\star(P_0)} &= \frac{\sigma_{a^\star(P_0)}}{\sigma_{a^\star(P_0)} + \sqrt{\sum_{c\in[K]\backslash\{a^\star(P_0)\}}\sigma^2_c}},\\
    w^{\mathrm{GNA}}_a &= \frac{\sigma^2_a / \sqrt{\sum_{c\in[K]\backslash\{a^\star(P_0)\}}\sigma^2_c}}{\sigma_{a^\star(P_0)} + \sqrt{\sum_{c\in[K]\backslash\{a^\star(P_0)\}}\sigma^2_c}}=\frac{\left(1 - w^{\mathrm{GNA}}_{a^\star(P_0)}\right)\sigma^2_a}{\sum_{c\in[K]\backslash\{a^\star(P_0)\}}\sigma^2_c}\qquad \forall a \in [K]\backslash\{a^\star(P_0)\}.\nonumber
    \end{align}
Note that this probability is unknown since the best arm $a^\star(P_0)$ and the variance $(\sigma^2_a)_{a\in[K]}$ are unknown. This target allocation ratio is given from the proof of the lower bound in Theorem~\ref{thm:lower_bound}. We conjecture that an algorithm using the ratio \eqref{eq:opt_allocation} is optimal and aim to design an algorithm using it. We confirm that such an algorithm is actually optimal by checking that the performance matches the worst-case lower bound derived in the previous section; that is, the target allocation ratio is optimal.   
    
During our experiment, in each round $t$, we estimate $\bm{w}^{\mathrm{GNA}}$ using observed data until the round. Then, by using an estimated allocation ratio, we define our allocation rule. Our allocation rule is characterized by a sequence $\{\widehat{\bm{w}}^{\mathrm{GNA}}_t\}^T_{t=1}$, where $\widehat{\bm{w}}^{\mathrm{GNA}}_t = \left(\widehat{w}^{\mathrm{GNA}}_{a, t}\right)_{a\in[K]} \in \Delta^K$. The first $K$ rounds are the initialization phases. In each round $t = 1,\dots, K$, we set $\widehat{w}^{\mathrm{GNA}}_{1, t} = \cdots = \widehat{w}^{\mathrm{GNA}}_{K, t} = 1/K$ and sample $A_t = t$. Next, in each round $t = K + 1, K + 2, \dots, T$, we estimate $\sigma^2_a$ by using the past observations up to ($t-1$)-th round $\mathcal{F}_{t-1}$.
By using the estimate $\widehat{\sigma}^2_{a, t}$, we obtain $\widehat{\bm{w}}^{\mathrm{GNA}}_t$ as
\begin{align}
    \label{eq:sample_ave_weight}
    \widehat{w}^{\mathrm{GNA}}_{\widehat{a}_t,t} &= \frac{\widehat{\sigma}_{\widehat{a}_t, t}}{\widehat{\sigma}_{\widehat{a}_t, t} + \sqrt{\sum_{b\in[K]\backslash\{\widehat{a}_t\}}\widehat{\sigma}^2_{b, t}}},\\
    \widehat{w}^{\mathrm{GNA}}_{a,t} &= \frac{\widehat{\sigma}^2_{a, t} / \sqrt{\sum_{b\in[K]\backslash\{\widehat{a}_t\}}\widehat{\sigma}^2_{b, t}}}{\widehat{\sigma}_{\widehat{a}_t, t} + \sqrt{\sum_{b\in[K]\backslash\{\widehat{a}_t\}}\widehat{\sigma}^2_{b, t}}}=\left(1 - \widehat{w}^{\mathrm{GNA}}_{\widehat{a}_t}\right)\frac{\widehat{\sigma}^2_{a, t}}{\sum_{b\in[K]\backslash\{\widehat{a}_t\}}\widehat{\sigma}^2_{b, t}}\qquad \forall a \in [K]\backslash\{\widehat{a}_t\},\nonumber
    \end{align}
where $\widehat{a}_t = \argmax_{a\in[K]}\widetilde{\mu}_{a, t}$ and $\widetilde{\mu}_{a, t} = \frac{1}{\sum^{t-1}_{s=1}\mathbbm{1}\left[A_s = a\right]}\sum^{t-1}_{s=1}\mathbbm{1}\left[A_s = a\right]Y_{a, s}$. 
Then, we allocate arm $a$ with probability $\widehat{w}^{\mathrm{GNA}}_{a, t}$. 

In this study, we define an estimator $\widehat{\sigma}^2_{a,t}$ as 
\[\widehat{\sigma}^2_{a,t} \coloneqq \begin{cases}
    \widetilde{\sigma}^2_{a,t} &\mathrm{if}\ \ \widetilde{\sigma}^2_{a,t} \neq 0\\
    \eta &\mathrm{otherwise}
\end{cases},\]
 where $\eta > 0$ is a small positive value, $\widetilde{\sigma}^2_{a,t} \coloneqq \frac{1}{\sum^{t-1}_{s=1}\mathbbm{1}\left[A_s = a\right]}\sum^{t-1}_{s=1}\mathbbm{1}\left[A_s = a\right]\Big(Y_{a, s} - \widetilde{\mu}_{a, t}\Big)^2$. 

Note that $\widetilde{\sigma}^2_{a,t} > 0$ holds almost surely. However, to ensure a well-defined algorithm, we introduce $\eta$. The performance can be stabilized by modifying $\widehat{w}^{\mathrm{GNA}}_{a,t}$, provided that such modifications do not affect its consistency. For the modification ideas, see Section~3.3 in \citet{Kato2020adaptive}, which studies ATE estimation using the Neyman allocation in an adaptive experiment.

\subsection{Estimation Rule}
\label{sec:aipw_est}
At the end of the experiment (after observing $Y_T$), we estimate the best arm, an arm with the highest estimated expected outcome. We estimate the best arm by estimating the expected outcome $\mu_a$ for each $a\in[K]$. In this section, we define an estimation rule that employs an A2IPW estimator, proposed in \citet{Kato2020adaptive}. 

Let $\overline{C}_{\mu} > 0$ be a sufficiently large value. 
With a truncated version of the estimated expected reward $\widehat{\mu}_{a, t} \coloneqq \mathrm{thre}(\widetilde{\mu}_{a, t}, -\overline{C}_{\mu}, \overline{C}_{\mu})$, we define the A2IPW estimator of $\mu_a$ for each $a\in[K]$ as
\begin{align}
\label{eq:aipw}
&\widehat{\mu}^{\mathrm{A2IPW}}_{a, T} \coloneqq\frac{1}{T} \sum^T_{t=1}\left(\frac{\mathbbm{1}[A_t = a]\big(Y_{a, t}- \widehat{\mu}_{a, t}\big)}{\widehat{w}^{\mathrm{GNA}}_{a,t}} + \widehat{\mu}_{a, t}\right).
\end{align}
Here, $\overline{C}_{\mu}$ is introduced for technical purposes to bound the estimators and any large positive value can be used. 

At the end of the experiment (after the round $t=T$), we estimate $a^\star(P_0)$ as 
\begin{align}
\label{eq:recommend}
\widehat{a}^{\mathrm{GNA}}_T \coloneqq \argmax_{a\in[K]} \widehat{\mu}^{\mathrm{A2IPW}}_{a, T}
\end{align}

The A2IPW estimator consists of a martingale difference sequence (MDS), which enables the application of various asymptotic analysis tools and simplifies the theoretical analysis. Utilizing the MDS property, \citet{Kato2020adaptive} establish the asymptotic normality of the A2IPW estimator via the central limit theorem for an MDS. The unbiasedness of the A2IPW estimator follows directly from the definition of an MDS.

Although the A2IPW estimator is typically used in settings with covariates, we employ it in this study without covariates, as its MDS property significantly simplifies the theoretical analysis. While we conjecture that a naive sample mean estimator also possesses the same asymptotic theoretical properties, proving this result is not straightforward, as noted by \citet{hirano2003}.

\begin{algorithm}[tb]
   \caption{GNA algorithm}
   \label{alg}
\begin{algorithmic}
   \STATE {\bfseries Parameter:} Positive constants $\overline{C}_{\mu}$ and $\overline{C}_{\sigma^2}$.
   \STATE {\bfseries Initialization:} 
   \STATE For each $t = 1,2,\dots,K$, allocate $A_t=t$. For $a\in[K]$, set $\widehat{w}^{\mathrm{GNA}}_{a, t} = 1/K$.
   \FOR{$t=3$ to $T$}
   \STATE Estimate $\bm{w}^{\mathrm{GNA}}$ following \eqref{eq:sample_ave_weight}.
   \STATE Allocate $A_t = a$ with probability $\widehat{w}^{\mathrm{GNA}}_{a, t}$.
   \STATE Observe $Y_t = \sum_{a\in[K]}\mathbbm{1}[A_t = a]Y_{a, t}$. 
   \ENDFOR
   \STATE Construct $\widehat{\mu}^{\mathrm{A2IPW}}_{a, T}$ for $a\in[K]$. following \eqref{eq:aipw}.
   \STATE Estimate $a^\star(P_0)$ as $\widehat{a}_T$ following \eqref{eq:recommend}.
\end{algorithmic}
\end{algorithm}

\section{Worst-case Upper Bound and Local Asymptotic Minimax Optimality}
\label{sec:upper_bound}
This section provides an upper bound of the probability of misidentification of the GNA algorithm. 

\subsection{Worst-case Upper Bound}
We show the following upper bound for the probability of misidentification of the GNA algorithm, held for each $P_0$. We show the proof in Appendix~\ref{appdx:upper_proof}.
\begin{lemma}[Upper bound of the GNA algorithm]\label{lem:upper_bound}
Fix $\bm{\sigma}$, $\Theta$, and $\mathcal{Y}$ in Definition~\ref{def:mean_param}. Given $\mathcal{P}\big(\underline{\Delta}, \overline{\Delta}\big) = \mathcal{P}\big(\underline{\Delta}, \overline{\Delta}, \bm{\sigma}, \Theta, \mathcal{Y}\big)$, under the GNA algorithm, for any $\epsilon > 0$, there exist  $0 < \Delta < \Delta_0(\epsilon)$, the following holds: there exists $T_0(\Delta, \epsilon)$ such that for all $T > T_0(\Delta, \epsilon)$, it holds that 
\begin{align*}
     & - \frac{1}{T}\log \mathbb{P}_{P_0}
     \left(\widehat{a}^{\mathrm{GNA}}_T \neq a^\star(P_0)\right) \geq - \underline{\Delta}^2V(a^\star(P_0), \mu_{a^\star(P_0)}(P_0)) - \epsilon \underline{\Delta}^2.
\end{align*}
for all $P_0 \in \mathcal{P}\big(\underline{\Delta}, \overline{\Delta}\big)$ such that $\underline{\Delta} < \overline{\Delta} < \Delta$. 
\end{lemma}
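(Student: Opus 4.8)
The plan is to establish the exponential bound on $\mathbb{P}_{P_0}(\widehat{a}^{\mathrm{GNA}}_T \neq a^\star(P_0))$ by decomposing the misidentification event and analyzing the large-deviation rate of the A2IPW estimator, then showing that the leading-order behaviour of this rate is governed by $V(a^\star(P_0),\mu_{a^\star(P_0)}(P_0))$ up to an $\epsilon\underline{\Delta}^2$ error that vanishes as the gap shrinks. First I would write
\[
\mathbb{P}_{P_0}\big(\widehat{a}^{\mathrm{GNA}}_T \neq a^\star(P_0)\big) \leq \sum_{a \in [K]\setminus\{a^\star(P_0)\}} \mathbb{P}_{P_0}\big(\widehat{\mu}^{\mathrm{A2IPW}}_{a,T} \geq \widehat{\mu}^{\mathrm{A2IPW}}_{a^\star(P_0),T}\big),
\]
so that $-\tfrac{1}{T}\log$ of the left side is, up to the $o(1)$ cost of the finite sum, the minimum over $a$ of the rates for the pairwise events. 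For each fixed $a$, the event $\{\widehat{\mu}^{\mathrm{A2IPW}}_{a,T} \geq \widehat{\mu}^{\mathrm{A2IPW}}_{a^\star,T}\}$ says that the empirical average of a martingale difference sequence (the difference of the two A2IPW summands, which has mean $\mu_a - \mu_{a^\star} = -\Delta_a < 0$) exceeds zero. Applying an exponential (Chernoff-type / Azuma–Bernstein-type) inequality for the MDS, and using that the A2IPW estimator with allocation $\widehat{\bm w}^{\mathrm{GNA}}_t$ has conditional variance converging to $\sigma^2_{a^\star}/w^{\mathrm{GNA}}_{a^\star} + \sigma^2_a/w^{\mathrm{GNA}}_a$, I would obtain a rate of the form $\tfrac{\Delta_a^2}{2(\sigma^2_{a^\star}/w^{\mathrm{GNA}}_{a^\star} + \sigma^2_a/w^{\mathrm{GNA}}_a)}(1 + o(1))$. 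Minimizing over $a$ and plugging in the closed-form GNA weights \eqref{eq:opt_allocation}, one checks by direct algebra that $\min_a \tfrac{1}{\sigma^2_{a^\star}/w^{\mathrm{GNA}}_{a^\star} + \sigma^2_a/w^{\mathrm{GNA}}_a} = \tfrac{1}{(\sigma_{a^\star} + \sqrt{\sum_{b\neq a^\star}\sigma^2_b})^2} = 2V(a^\star,\mu_{a^\star})$, and bounding $\Delta_a \geq \underline{\Delta}$ gives the claimed bound.

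The technical core, and the main obstacle, is controlling the estimation error in the allocation ratio: $\widehat{\bm w}^{\mathrm{GNA}}_t$ is built from $\widehat\sigma^2_{a,t}$ and $\widehat a_t$, both of which are random and only converge to their population counterparts. I would handle this in two layers. First, a consistency argument: since the GNA algorithm samples every arm with probability bounded below (the weights are bounded away from $0$ and $1$ on the compact parameter space with $\sigma_a > 0$), each arm is pulled $\Theta(T)$ times, so $\widetilde\mu_{a,t} \to \mu_a$ and $\widehat\sigma^2_{a,t} \to \sigma^2_a$ a.s., hence $\widehat a_t \to a^\star(P_0)$ a.s. and $\widehat{\bm w}^{\mathrm{GNA}}_t \to \bm w^{\mathrm{GNA}}$ a.s.; this is where Definition~\ref{def:mean_param} (continuity of $\sigma_a(\cdot)$, positivity, Fisher-information regularity) and the truncations $\overline C_\mu$, $\eta$ are used. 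Second — and this is the delicate part — in the large-deviation (Chernoff) computation I cannot simply replace $\widehat{\bm w}^{\mathrm{GNA}}_t$ by its limit, because on the rare misidentification event the allocation ratio may behave atypically; I would instead condition on a high-probability ``good'' event $\mathcal G_T$ on which $\sup_{t \geq \delta T}\|\widehat{\bm w}^{\mathrm{GNA}}_t - \bm w^{\mathrm{GNA}}\| \leq \epsilon'$, show $\mathbb{P}_{P_0}(\mathcal G_T^c)$ decays faster than the target rate (or at least is subexponential relative to it — this is exactly where the small-gap regime helps, since the target rate is $O(\underline\Delta^2) \to 0$ while the deviation of $\widehat{\bm w}$ from its limit decays at a gap-independent rate), and carry the $\epsilon'$ through the variance term so that it only perturbs $2V(a^\star,\mu_{a^\star})$ by a factor $1 + O(\epsilon')$.

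To make the last step quantitative I would introduce the cumulant/log-moment-generating function of the per-round A2IPW increments, use the tower property and the MDS structure to write $\mathbb{E}_{P_0}[\exp(\lambda \sum_t D_t)] = \mathbb{E}_{P_0}[\prod_t \mathbb{E}_{P_0}[\exp(\lambda D_t)\mid\mathcal F_{t-1}]]$, and bound each conditional MGF by its second-order Taylor expansion plus a remainder that is uniformly controlled by the truncation constants and the lower bound on the allocation probabilities — here the $\overline C_\mu$-truncation of $\widehat\mu_{a,t}$ and the boundedness of $\mathcal Y$-moments from Definition~\ref{def:mean_param} ensure the third-moment remainder is $O(|\lambda|^3)$. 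Optimizing over $\lambda = \Theta(\underline\Delta)$ then yields the rate $\Delta_a^2/(2 v_a)(1 + O(\epsilon' ) + O(\underline\Delta))$ with $v_a$ the limiting variance, and choosing $\Delta_0(\epsilon)$ small enough and then $T_0(\Delta,\epsilon)$ large enough absorbs all error terms into $\epsilon\underline\Delta^2$. The bookkeeping — keeping every error term proportional to $\underline\Delta^2$ rather than to a gap-independent constant, which is what distinguishes this ``local'' bound from earlier results with constant gaps — is the part that requires the most care, but it is precisely enabled by the fact that in the small-gap regime the leading rate itself is $O(\underline\Delta^2)$.
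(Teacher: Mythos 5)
Your proposal is correct and follows essentially the same route as the paper: a union bound over suboptimal arms, a Chernoff bound on the martingale difference sequence of A2IPW increments via the tower property, a second-order Taylor expansion of the conditional log-moment-generating function with $\lambda = \Theta(\underline{\Delta})$ so that the remainder is absorbed into $\epsilon\underline{\Delta}^2$, almost-sure consistency of $\widehat{\bm{w}}^{\mathrm{GNA}}_t$, and the closed-form algebra $\sigma^2_{a^\star}/w^{\mathrm{GNA}}_{a^\star} + \sigma^2_a/w^{\mathrm{GNA}}_a = \bigl(\sigma_{a^\star} + \sqrt{\sum_{b\neq a^\star}\sigma^2_b}\bigr)^2$. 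The one point where you differ slightly is in handling the estimation error of the allocation ratio on the rare event: you propose conditioning on a high-probability good event, whereas the paper instead uses boundedness of the conditional second moments together with an almost-sure Ces\`{a}ro-type convergence (Lemma~\ref{lem:cesaro}) to control $\mathbb{E}_{P_0}\bigl[\exp\bigl(\tfrac{\lambda^2}{2}\sum_t(\mathbb{E}_{P_0}[\Psi^2_{a,t}\mid\mathcal{F}_{t-1}]-1)\bigr)\bigr]$ directly — both devices serve the same purpose within the same overall architecture.
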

Here, recall that 
$V(a, \mu) = \frac{1}{2\left(\sigma_{a}(\mu) + \sqrt{\sum_{b\in[K]\backslash\{a\}}\sigma^2_b(\mu)}\right)^2}$. 

Then, by considering the worst-case scenario for $P_0$, we obtain the following worst-case upper bound.
\begin{theorem}[Worst-case upper bound of the GNA algorithm]\label{thm:upper_bound}
Fix $\bm{\sigma}$, $\Theta$, and $\mathcal{Y}$ in Definition~\ref{def:mean_param}. Given $\mathcal{P}\big(\underline{\Delta}, \overline{\Delta}\big) = \mathcal{P}\big(\underline{\Delta}, \overline{\Delta}, \bm{\sigma}, \Theta, \mathcal{Y}\big)$, the GNA algorithm satisfies
\begin{align*}
   &\liminf_{0 < \underline{\Delta} < \overline{\Delta} \to +0} \inf_{P \in \mathcal{P}\big(\underline{\Delta}, \overline{\Delta}\big)} \liminf_{T \to \infty}-\frac{1}{\underline{\Delta}^2T}\log\mathbb{P}_P\left(\widehat{a}^{\mathrm{GNA}}_T \neq a^\star(P)\right) \geq V^* = \min_{a\in [K], \mu \in \Theta} V(a, \mu).
\end{align*}
\end{theorem}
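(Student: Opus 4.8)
The plan is to read Theorem~\ref{thm:upper_bound} off Lemma~\ref{lem:upper_bound}: all of the probabilistic content already lives in the lemma, and what remains is to divide through by $\underline{\Delta}^2$, take the three operations in the nesting $\liminf_{T\to\infty}$, then the infimum over $P$, then the outer $\liminf$ as the gaps vanish, and finally send the slack parameter $\epsilon$ to zero.

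Concretely I would proceed as follows. Fix $\epsilon>0$ and let $\Delta(\epsilon)$ be the threshold supplied by Lemma~\ref{lem:upper_bound}, so that for every pair $0<\underline{\Delta}<\overline{\Delta}<\Delta(\epsilon)$, every $P_0\in\mathcal{P}(\underline{\Delta},\overline{\Delta})$, and every $T>T_0(\Delta,\epsilon)$ (with $T_0$ uniform in $P_0$),
\[
-\frac{1}{T}\log\mathbb{P}_{P_0}\big(\widehat{a}^{\mathrm{GNA}}_T\neq a^\star(P_0)\big)\ \geq\ \big(V\big(a^\star(P_0),\mu_{a^\star(P_0)}(P_0)\big)-\epsilon\big)\,\underline{\Delta}^2 .
\]
Dividing by $\underline{\Delta}^2>0$ and taking $\liminf_{T\to\infty}$ (the right-hand side is $T$-free) would give, for each such pair and each $P_0$,
\[
\liminf_{T\to\infty}-\frac{1}{\underline{\Delta}^2 T}\log\mathbb{P}_{P_0}\big(\widehat{a}^{\mathrm{GNA}}_T\neq a^\star(P_0)\big)\ \geq\ V\big(a^\star(P_0),\mu_{a^\star(P_0)}(P_0)\big)-\epsilon .
\]
Since $\big(a^\star(P_0),\mu_{a^\star(P_0)}(P_0)\big)$ is a feasible point for the minimization defining $V^*=\min_{a\in[K],\mu\in\Theta}V(a,\mu)$, the right-hand side is $\geq V^*-\epsilon$, so taking the infimum over $P_0\in\mathcal{P}(\underline{\Delta},\overline{\Delta})$ yields
\[
\inf_{P_0\in\mathcal{P}(\underline{\Delta},\overline{\Delta})}\liminf_{T\to\infty}-\frac{1}{\underline{\Delta}^2 T}\log\mathbb{P}_{P_0}\big(\widehat{a}^{\mathrm{GNA}}_T\neq a^\star(P_0)\big)\ \geq\ V^*-\epsilon
\]
for every pair with $\overline{\Delta}<\Delta(\epsilon)$. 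This inequality holds along the whole tail of the family of pairs $(\underline{\Delta},\overline{\Delta})$ with $\overline{\Delta}\to+0$, hence is preserved under $\liminf_{0<\underline{\Delta}<\overline{\Delta}\to+0}$, and letting $\epsilon\downarrow0$ gives the asserted bound $\geq V^*$. Combined with the matching upper bound $\leq V^*$ of Theorem~\ref{thm:lower_bound} (and noting that for small $\overline{\Delta}$ one can pick $P_0$ whose best arm and mean realize the minimizing pair, so the infimum is in fact $V^*$), this establishes the claimed local minimax optimality.

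The deduction above is pure bookkeeping; the genuine obstacle is Lemma~\ref{lem:upper_bound} itself (Appendix~\ref{appdx:upper_proof}). Proving it requires a large-deviation analysis of the A2IPW estimator $\widehat{\mu}^{\mathrm{A2IPW}}_{a,T}$: exploiting its martingale-difference structure together with the convergence of the estimated weights $\widehat{\bm{w}}^{\mathrm{GNA}}_t$ to the oracle GNA allocation $\bm{w}^{\mathrm{GNA}}$, one controls $\mathbb{P}_{P_0}(\widehat{a}^{\mathrm{GNA}}_T\neq a^\star(P_0))$ by an exponential bound whose rate---after the Gaussian approximation of the log-likelihood ratios valid in the small-gap regime---matches $V(a^\star(P_0),\mu_{a^\star(P_0)}(P_0))\,\underline{\Delta}^2$ up to $o(\underline{\Delta}^2)$. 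The small gap is exactly what lets the variance- (hence weight-) estimation error, as well as the discrepancy between $\sigma_b(\mu_b(P_0))$ and $\sigma_b(\mu_{a^\star(P_0)}(P_0))$ across arms $b$, be absorbed into the $\epsilon\underline{\Delta}^2$ slack; it is also where the restriction to $\mathcal{P}(\underline{\Delta},\overline{\Delta})$ sidesteps the impossibility result of \citet{Ariu2021}.
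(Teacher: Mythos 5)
Your proposal is correct and follows exactly the route the paper takes: Theorem~\ref{thm:upper_bound} is obtained from Lemma~\ref{lem:upper_bound} by dividing by $\underline{\Delta}^2$, noting that $\bigl(a^\star(P_0),\mu_{a^\star(P_0)}(P_0)\bigr)$ is feasible for the minimization defining $V^*$, and passing the resulting uniform bound $V^*-\epsilon$ through $\liminf_{T\to\infty}$, $\inf_{P}$, the outer $\liminf$, and finally $\epsilon\downarrow 0$, with all the substantive work deferred to the lemma. You have also (correctly) read the lemma's conclusion with a positive sign on $\underline{\Delta}^2 V(\cdot)$, consistent with its appendix proof, rather than the sign as literally printed in its statement.
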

Recall that $\inf_{P \in \mathcal{P}\big(\underline{\Delta}, \overline{\Delta}\big)}$ represents the worst case, as a smaller value of 
$-\frac{1}{T}\log\mathbb{P}_P\left(\widehat{a}^{\mathrm{GNA}}_T \neq a^\star(P)\right)$
implies a lower probability of misidentification, i.e., $\mathbb{P}_P\left(\widehat{a}^{\mathrm{GNA}}_T \neq a^\star(P)\right)$ becomes smaller.

\subsection{Local Minimax Optimality}
\label{sec:local_minimax}
This section proves the local minimax optimality of our proposed GNA algorithm. The following result demonstrates that the probability of misidentification of the GNA algorithm matches the worst-case lower bound derived under the small-gap regime. This establishes that our proposed algorithm is \emph{local asymptotic minimax optimal}.

\begin{theorem}[Local asymptotic minimax optimality]
Fix $\bm{\sigma}$, $\Theta$, and $\mathcal{Y}$ in Definition~\ref{def:mean_param}. Given $\mathcal{P}\big(\underline{\Delta}, \overline{\Delta}\big) = \mathcal{P}\big(\underline{\Delta}, \overline{\Delta}, \bm{\sigma}, \Theta, \mathcal{Y}\big)$, for each $P_0 \in \mathcal{P}\big(\underline{\Delta}, \overline{\Delta}\big)$, the GNA algorithm satisfies
\begin{align*}
   &\sup_{\pi \in \Pi^{\mathrm{const}}} \limsup_{0 < \underline{\Delta} < \overline{\Delta} \to +0}\inf_{P\in\mathcal{P}\left(\underline{\Delta}, \overline{\Delta}\right)}\limsup_{T \to \infty} -\frac{1}{\overline{\Delta}^2 T}\log \mathbb{P}_{P}\big( \widehat{a}^\pi_T \neq a^\star(P)\big)\\
   &\ \ \ \ \ \ \ \ \ \ \ \ \ \ \ \ \ \ \ \ \ \ \ \ \ \ \ \ \ \ \ \ \ \leq V^* \leq \liminf_{0 < \underline{\Delta} < \overline{\Delta} \to +0}\inf_{P \in \mathcal{P}\big(\underline{\Delta}, \overline{\Delta}\big)}\liminf_{T \to \infty}-\frac{1}{\underline{\Delta}^2T}\log\mathbb{P}_P\left(\widehat{a}^{\mathrm{GNA}}_T \neq a^\star(P)\right).
\end{align*}
\end{theorem}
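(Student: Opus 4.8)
The final statement is an immediate corollary of the two main results already established, so the proof is essentially a chaining argument. The plan is to combine Theorem~\ref{thm:lower_bound}, which gives the upper bound $V^*$ on the worst-case complexity over all consistent algorithms, with Theorem~\ref{thm:upper_bound}, which gives the matching lower bound $V^*$ on the worst-case complexity of the specific GNA algorithm. The left inequality is exactly the content of Theorem~\ref{thm:lower_bound}: since every $\pi \in \Pi^{\mathrm{const}}$ satisfies $\limsup_{0<\underline\Delta<\overline\Delta\to+0}\inf_{P}\limsup_{T\to\infty} -\frac{1}{\overline\Delta^2 T}\log\mathbb{P}_P(\widehat a^\pi_T\neq a^\star(P)) \leq V^*$, taking the supremum over $\pi \in \Pi^{\mathrm{const}}$ preserves the inequality. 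The right inequality is exactly Theorem~\ref{thm:upper_bound} applied to $\pi = \mathrm{GNA}$.

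Concretely, I would proceed in three short steps. First, invoke Theorem~\ref{thm:lower_bound} and note that the bound $V^*$ on the right-hand side is uniform in $\pi \in \Pi^{\mathrm{const}}$ (it depends only on $\bm\sigma$, $\Theta$, $\mathcal{Y}$, not on the algorithm), so
\[
\sup_{\pi \in \Pi^{\mathrm{const}}} \limsup_{0 < \underline{\Delta} < \overline{\Delta} \to +0}\inf_{P\in\mathcal{P}(\underline{\Delta}, \overline{\Delta})}\limsup_{T \to \infty} -\frac{1}{\overline{\Delta}^2 T}\log \mathbb{P}_{P}\big( \widehat{a}^\pi_T \neq a^\star(P)\big) \leq V^*.
\]
Second, verify that the GNA algorithm is itself consistent (it estimates each $\mu_a$ via the A2IPW estimator, which is consistent under the described allocation, and selects the argmax), so that it lies in $\Pi^{\mathrm{const}}$ and the preceding display applies to it as well; this is the only point requiring a small argument rather than pure quotation, and it can be handled by citing the asymptotic normality / consistency of the A2IPW estimator from \citet{Kato2020adaptive} together with the fact that the estimated allocation ratios $\widehat{\bm w}^{\mathrm{GNA}}_t$ stay bounded away from zero (guaranteed by the $\eta$-truncation). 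Third, invoke Theorem~\ref{thm:upper_bound} to obtain
\[
\liminf_{0 < \underline{\Delta} < \overline{\Delta} \to +0} \inf_{P \in \mathcal{P}(\underline{\Delta}, \overline{\Delta})} \liminf_{T \to \infty}-\frac{1}{\underline{\Delta}^2T}\log\mathbb{P}_P\big(\widehat{a}^{\mathrm{GNA}}_T \neq a^\star(P)\big) \geq V^*,
\]
and concatenate the two displays through the common value $V^*$.

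The main obstacle, such as it is, is not analytic but bookkeeping: one must make sure the two theorems are stated with compatible normalizations (the lower bound uses $\overline\Delta^2$ and a $\limsup_{T}$, the upper bound uses $\underline\Delta^2$ and a $\liminf_{T}$), which is precisely why the final statement is phrased with the two different normalizing factors rather than claiming a single equality. Since $\underline\Delta < \overline\Delta$ and both tend to $+0$, the ratio $\overline\Delta^2/\underline\Delta^2$ is not controlled in general, so the theorem does not assert that the GNA complexity equals $V^*$ exactly — only that it is sandwiched in the stated sense; I would remark explicitly that matching up to constants in the genuine sense requires additionally taking $\overline\Delta/\underline\Delta \to 1$, which is the regime referred to in the abstract as "tight, matching exactly." No further estimates are needed: everything reduces to the two already-proven theorems plus the membership $\mathrm{GNA} \in \Pi^{\mathrm{const}}$.
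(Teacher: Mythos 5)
Your proposal is correct and matches the paper's (implicit) argument exactly: the theorem is stated as an immediate concatenation of Theorem~\ref{thm:lower_bound} (uniform over $\Pi^{\mathrm{const}}$, hence surviving the supremum) and Theorem~\ref{thm:upper_bound} applied to the GNA algorithm, with no further proof given. Your additional remarks on the consistency of GNA and on the mismatched normalizations $\overline{\Delta}^2$ versus $\underline{\Delta}^2$ are accurate and, if anything, more careful than the paper itself.
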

Note again that the upper and lower bounds appear flipped due to the properties of the logarithm function, $\log(x)$.

\begin{figure}[t]
  \centering
\includegraphics[width=0.7\linewidth]{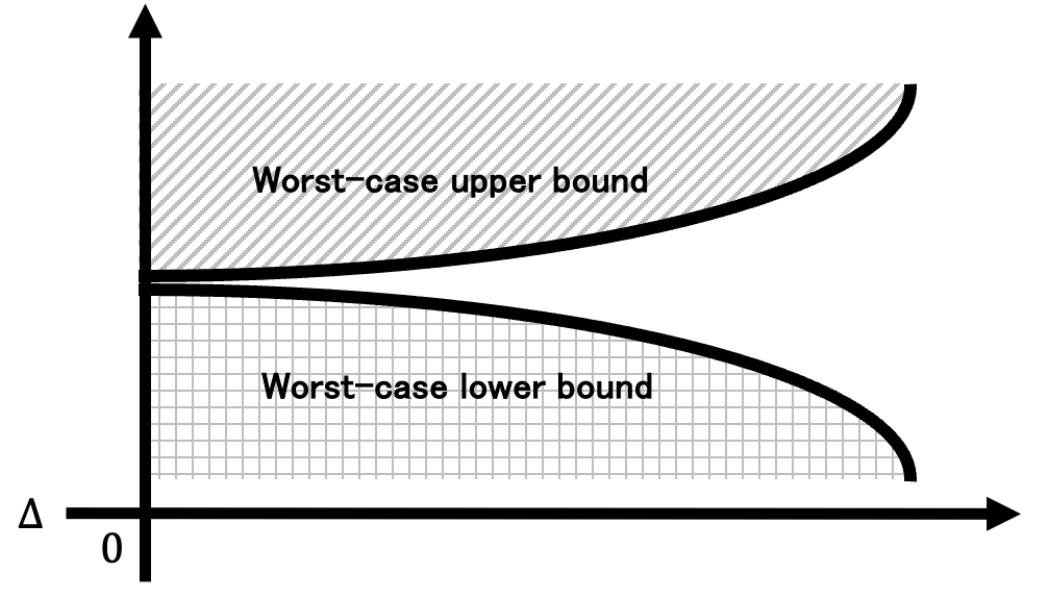}
\caption{Illustration of local asymptotic minimax optimality. The $y$-axis represents the probability of misidentification, while the $x$-axis represents the gap $\Delta = \overline{\Delta} = \underline{\Delta}$ (for simplicity, we set $\Delta = \overline{\Delta} = \underline{\Delta}$). The upper (red) region represents the upper bound, and the lower (blue) region represents the lower bound, which converges as $\Delta \to 0$ (small-gap regime).}
\label{fig:concept:smallgap}
\end{figure}

We illustrate the concept of local asymptotic minimax optimality in Figure~\ref{fig:concept:smallgap}. In the figure, the upper (red) region represents the upper bound, while the lower (blue) region represents the lower bound, and they converge as $\Delta \to 0$ (small-gap). That is, although a discrepancy remains between the lower and upper bounds, it diminishes as the gap approaches zero.

\subsection{Intuitive Explanation}
We provide an intuitive explanation of why the GNA algorithm is effective. BAI algorithms are closely connected to constructing tight confidence intervals for testing whether $\mu_{a^\star(P_0)}(P_0) \neq \mu_a(P_0)$ for all $a\neq a^\star(P_0)$. To accurately identify the best arm $a^\star(P_0)$, given $a^\star(P_0)$, we aim to allocate treatment arms such that the discrepancies between the confidence intervals of the expected outcomes of the best and suboptimal arms become large.

First, we consider how the GNA algorithm operates when $K = 2$, which corresponds to the standard Neyman allocation algorithm. For instance, assuming that the best arm is $a^\star(P_0) = 1$, we estimate the ATE $\mu_1(P_0) - \mu_2(P_0)$ to test whether $a^\star(P_0) \neq 1$. If the null hypothesis is rejected, we conclude that the alternative hypothesis $a^\star(P_0) = 1$ is accepted. To efficiently test this hypothesis using asymptotic properties, it is crucial to minimize the asymptotic variance of the ATE estimators. In ATE estimation, the asymptotic variance of an efficient ATE estimator is given by $
\frac{\sigma^2_{1}(\mu_1(P_0))}{w_1} + \frac{\sigma^2_{2}(\mu_2(P_0))}{w_2}$, 
where $(w_1, w_2)$ represents the probability of treatment assignment [propensity score,][]\citep{hahn1998role}. This variance is minimized when applying the Neyman allocation for $(w_1, w_2)$. Notably, the Neyman allocation does not depend on which arm is the best, ensuring its effectiveness when $K = 2$.

Next, we consider the case where $K \geq 3$. In this scenario, assuming that the best arm is $a^\star(P_0) = a^\dagger \in [K]$, we test whether $a^\star(P_0) \neq a^\dagger$. If the null hypothesis is rejected, we accept the alternative hypothesis $a^\star(P_0) = a^\dagger$. To efficiently test this hypothesis using asymptotic properties, we aim to estimate the ATEs $\mu_{a^\dagger}(P_0) - \mu_a(P_0)$ for all $a\neq a^\dagger$ as efficiently as possible. A challenge arises because focusing solely on one arm $b\neq a^\dagger$ to estimate the ATE $\mu_{a^\dagger}(P_0) - \mu_b(P_0)$ efficiently may result in inefficient estimation of other ATEs $\mu_{a^\dagger}(P_0) - \mu_c(P_0)$ for $c\neq a^\dagger, b$. Therefore, we set the propensity score as 
\[
\bm{w}^* \coloneqq \argmin_{\bm{w}\in\mathcal{W}}\max_{a\neq a^\dagger} \left\{ \frac{\sigma^2_{a^\dagger}(\mu_{a^\dagger}(P_0))}{w_{a^\dagger}} + \frac{\sigma^2_{a}(\mu_{a^\dagger}(P_0))}{w_a} \right\}.
\]
The minimizer $\bm{w}^*$ in this case is given by $\bm{w}^{\mathrm{GNA}}$ in the GNA algorithm. Thus, by setting the arm allocation probability to $\bm{w}^{\mathrm{GNA}}$, we can efficiently test whether $a^\star(P_0) \neq a^\dagger$.

\section{BAI with Bernoulli Bandits}
This section examines the behavior of the GNA strategy when potential outcomes follow Bernoulli distributions, representing a specific case of the general results. Specifically, we consider the following model:
\begin{align*}
    \mathcal{P}^{\mathrm{B}}_a \coloneqq \Big\{\mathrm{Bernoulli}(\mu_a) \colon \mu_a \in \Theta\Big\},
\end{align*}
which is a subset of $\mathcal{P}_a$, with $\sigma_a(\mu) = \mu(1 - \mu)$ representing the variance of the Bernoulli outcomes.

In the case of two-armed Bernoulli bandits, the GNA strategy allocates arms uniformly, with the allocation ratio $w^{\mathrm{GNA}}_1 = w^{\mathrm{GNA}}_2 = \frac{1}{2}$. This uniform allocation occurs because the variances are identical under the small-gap regime. This result is consistent with the findings of \citet{Kaufman2016complexity} and \citet{wang2023uniformly}, which state that such a uniform allocation is optimal.

For multi-armed Bernoulli bandits with $K \geq 2$, the allocation ratio becomes
\begin{align*}
    w^{\mathrm{GNA}}_{a^\star(P_0)} &= \frac{1}{1 + \sqrt{K-1}}, \\
    w^{\mathrm{GNA}}_{a} &= \frac{1}{\sqrt{K-1}(1 + \sqrt{K-1})} = \frac{1}{K - 1 + \sqrt{K-1}} \quad \forall a \in [K] \setminus \{a^\star(P_0)\}.
\end{align*}
This allocation ratio depends on the best arm $a^\star(P_0)$, which is needed to be estimated.

The lower and upper bounds are given by
\begin{align*}
    V^* = \frac{1}{2\left(0.5 + \sqrt{(K - 1) \cdot 0.5}\right)^2}.
\end{align*}

In Bernoulli bandits, we do not have to estimate the variances since, under the small-gap regime with the worst-case analysis, they are determined from the worst-case mean parameters. Additionally, the variances approach the same values as the gaps approach zero. Therefore, there is no need to estimate the variances. However, note that when $K \geq 3$, we need to estimate the best arm during the experiment. 

Our results also address an open issue highlighted in \citet{Kasy2021corrigendum}, which provides a correction to \citet{Kasy2021}. The technical issue in \citet{Kasy2021} arises in their proof due to flipping $\mathrm{KL}(Q_a, P_a)$ and $\mathrm{KL}(P_a, Q_a)$, where $P_a$ is the true distribution and $Q_a$ is the alternative distribution. In their correction, they say that ``An interesting question for future work will be to bound the differences (between $\mathrm{KL}(Q_a, P_a)$ and $\mathrm{KL}(P_a, Q_a)$) in the implied optimal allocations; in many settings, these will be very small.''

Our small-gap regime provides a case where these differences are indeed small. In this regime, while \citet{Kasy2021} proposes exploration sampling for BAI with Bernoulli outcomes using a Bayesian method to compute the allocation rule, we demonstrate that this method can be significantly simplified. As shown above, the allocation rule can be computed in closed form.


\begin{table}[t]
\caption{The results with $\mu_1 = 1.00$, $\mu_2 = 0.90$, $\mu_a \sim \mathrm{Uniform}[0.90, 0.95]$ for all $a\in[K]\backslash\{1, 2\}$, and $\overline{\sigma} = 3$ for $K = 3$ (Upper table) and $K = 5$ (Lower table). We report the empirical probability of misidentification (\%) at $T \in \{5000, 10000, 20000, 30000, 40000, 50000\}$.}
    \centering
\scalebox{0.8}{
\begin{tabular}{lllllll}
\toprule
T & 5000 & 10000 & 20000 & 30000 & 40000 & 50000 \\
\midrule
GNA (\%) & 0.60 \% & 1.20 \% & 2.20 \% & 0.20 \% & 3.80 \% & 2.60 \% \\
GJ (Oracle) (\%) & 2.60 \% & 0.80 \% & 9.00 \% & 2.40 \% & 5.60 \% & 1.60 \% \\
Uniform (\%) & 2.40 \% & 1.20 \% & 4.20 \% & 5.60 \% & 5.40 \% & 1.20 \% \\
SH (\%) & 2.60 \% & 0.20 \% & 5.00 \% & 0.60 \% & 8.60 \% & 1.00 \% \\
\bottomrule
\end{tabular}
}
\vspace{5mm}
\scalebox{0.8}{
\begin{tabular}{lllllll}
\toprule
T & 5000 & 10000 & 20000 & 30000 & 40000 & 50000 \\
\midrule
GNA (\%) & 2.60 \% & 14.00 \% & 2.60 \% & 2.00 \% & 14.20 \% & 7.00 \% \\
GJ (Oracle) (\%) & 5.40 \% & 11.80 \% & 19.20 \% & 11.00 \% & 10.20 \% & 8.00 \% \\
Uniform (\%) & 7.60 \% & 17.80 \% & 35.60 \% & 9.00 \% & 10.60 \% & 5.80 \% \\
SH (\%) & 8.00 \% & 18.00 \% & 13.40 \% & 13.60 \% & 7.60 \% & 15.60 \% \\
\bottomrule
\end{tabular}
}
\label{tab:res1}
\caption{The results with $\mu_1 = 1.00$, $\mu_2 = 0.90$, and $\mu_a \sim \mathrm{Uniform}[0.90, 0.95]$ for all $a\in[K]\backslash\{1, 2\}$, with $\overline{\sigma} = 5$ for $K = 3$ (Upper table) and $K = 5$ (Lower table). We report the empirical probability of misidentification (\%) at $T \in \{5000, 10000, 20000, 30000, 40000, 50000\}$.}
    \centering
\scalebox{0.8}{
\begin{tabular}{lllllll}
\toprule
T & 5000 & 10000 & 20000 & 30000 & 40000 & 50000 \\
\midrule
GNA (\%) & 3.20 \% & 6.00 \% & 1.80 \% & 1.20 \% & 6.40 \% & 60.40 \% \\
GJ (Oracle) (\%) & 3.60 \% & 1.40 \% & 19.00 \% & 2.80 \% & 9.80 \% & 41.20 \% \\
Uniform (\%) & 2.80 \% & 1.40 \% & 25.20 \% & 7.20 \% & 4.40 \% & 54.60 \% \\
SH (\%) & 2.60 \% & 0.80 \% & 16.40 \% & 3.80 \% & 5.20 \% & 16.40 \% \\
\bottomrule
\end{tabular}
}
\scalebox{0.8}{
\begin{tabular}{lllllll}
\toprule
T & 5000 & 10000 & 20000 & 30000 & 40000 & 50000 \\
\midrule
GNA (\%) & 2.60 \% & 30.20 \% & 4.00 \% & 3.60 \% & 14.60 \% & 9.60 \% \\
GJ (Oracle) (\%) & 9.20 \% & 14.60 \% & 29.00 \% & 14.20 \% & 13.20 \% & 5.80 \% \\
Uniform (\%) & 8.60 \% & 20.60 \% & 50.80 \% & 11.00 \% & 12.00 \% & 5.20 \% \\
SH (\%) & 11.00 \% & 19.80 \% & 20.00 \% & 16.20 \% & 8.20 \% & 7.20 \% \\
\bottomrule
\end{tabular} 
}
\label{tab:res2}
\end{table}

\begin{table}[t]
\caption{The results with $\mu_1 = 1.00$ $\mu_a = 0.95$ for all $a\in [K]\backslash\{1\}$, and $\overline{\sigma} = 3$ for $K = 3$ (Upper table) and $K = 5$ (Lower table). We report the empirical probability of misidentification (\%) at $T \in \{5000, 10000, 20000, 30000, 40000, 50000\}$.}
    \centering
\scalebox{0.8}{
\begin{tabular}{lllllll}
\toprule
T & 5000 & 10000 & 20000 & 30000 & 40000 & 50000 \\
\midrule
GNA (\%) & 1.60 \% & 1.20 \% & 3.00 \% & 0.70 \% & 0.10 \% & 1.90 \% \\
GJ (Oracle) (\%) & 3.80 \% & 0.90 \% & 4.40 \% & 2.30 \% & 0.20 \% & 5.20 \% \\
Uniform (\%) & 1.70 \% & 2.80 \% & 6.00 \% & 8.20 \% & 1.10 \% & 6.10 \% \\
SH (\%) & 2.50 \% & 1.30 \% & 8.20 \% & 2.80 \% & 0.90 \% & 1.40 \% \\
\bottomrule
\end{tabular}
}
\scalebox{0.8}{
\begin{tabular}{lllllll}
\toprule
T & 5000 & 10000 & 20000 & 30000 & 40000 & 50000 \\
\midrule
GNA (\%) & 6.20 \% & 0.20 \% & 13.50 \% & 8.10 \% & 1.50 \% & 0.00 \% \\
GJ (Oracle) (\%) & 4.50 \% & 4.70 \% & 8.50 \% & 6.60 \% & 2.40 \% & 4.30 \% \\
Uniform (\%) & 2.60 \% & 8.10 \% & 6.90 \% & 7.00 \% & 1.40 \% & 3.20 \% \\
SH (\%) & 3.60 \% & 3.30 \% & 15.50 \% & 7.50 \% & 2.10 \% & 5.80 \% \\
\bottomrule
\end{tabular}
}
\label{tab:res3}
\caption{The results with $\mu_1 = 1.00$, $\mu_a = 0.95$ for all $a\in [K]\backslash\{1\}$, and $\overline{\sigma} = 5$ for $K = 3$ (Upper table) and $K = 5$ (Lower table). We report the empirical probability of misidentification (\%) at $T \in \{5000, 10000, 20000, 30000, 40000, 50000\}$.}
    \centering
\scalebox{0.8}{
\begin{tabular}{lllllll}
\toprule
T & 5000 & 10000 & 20000 & 30000 & 40000 & 50000 \\
\midrule
GNA (\%) & 1.60 \% & 0.80 \% & 2.80 \% & 2.60 \% & 0.50 \% & 7.30 \% \\
GJ (Oracle) (\%) & 8.10 \% & 4.30 \% & 12.20 \% & 3.20 \% & 0.40 \% & 6.10 \% \\
Uniform (\%) & 2.50 \% & 8.10 \% & 14.00 \% & 11.40 \% & 1.90 \% & 7.60 \% \\
SH (\%) & 3.50 \% & 2.70 \% & 17.50 \% & 3.40 \% & 1.80 \% & 6.10 \% \\
\bottomrule
\end{tabular}
}
\scalebox{0.8}{
\begin{tabular}{lllllll}
\toprule
T & 5000 & 10000 & 20000 & 30000 & 40000 & 50000 \\
\midrule
GNA (\%) & 8.60 \% & 2.00 \% & 21.20 \% & 7.60 \% & 4.00 \% & 0.00 \% \\
GJ (Oracle) (\%) & 14.40 \% & 9.10 \% & 12.00 \% & 9.40 \% & 5.10 \% & 6.90 \% \\
Uniform (\%) & 12.60 \% & 10.70 \% & 9.30 \% & 10.60 \% & 2.10 \% & 3.60 \% \\
SH (\%) & 5.80 \% & 9.40 \% & 20.40 \% & 10.10 \% & 3.20 \% & 7.30 \% \\
\bottomrule
\end{tabular}
}
\label{tab:res4}
\end{table}

\begin{figure}[t!]
  \centering
  \includegraphics[width=0.9\linewidth]{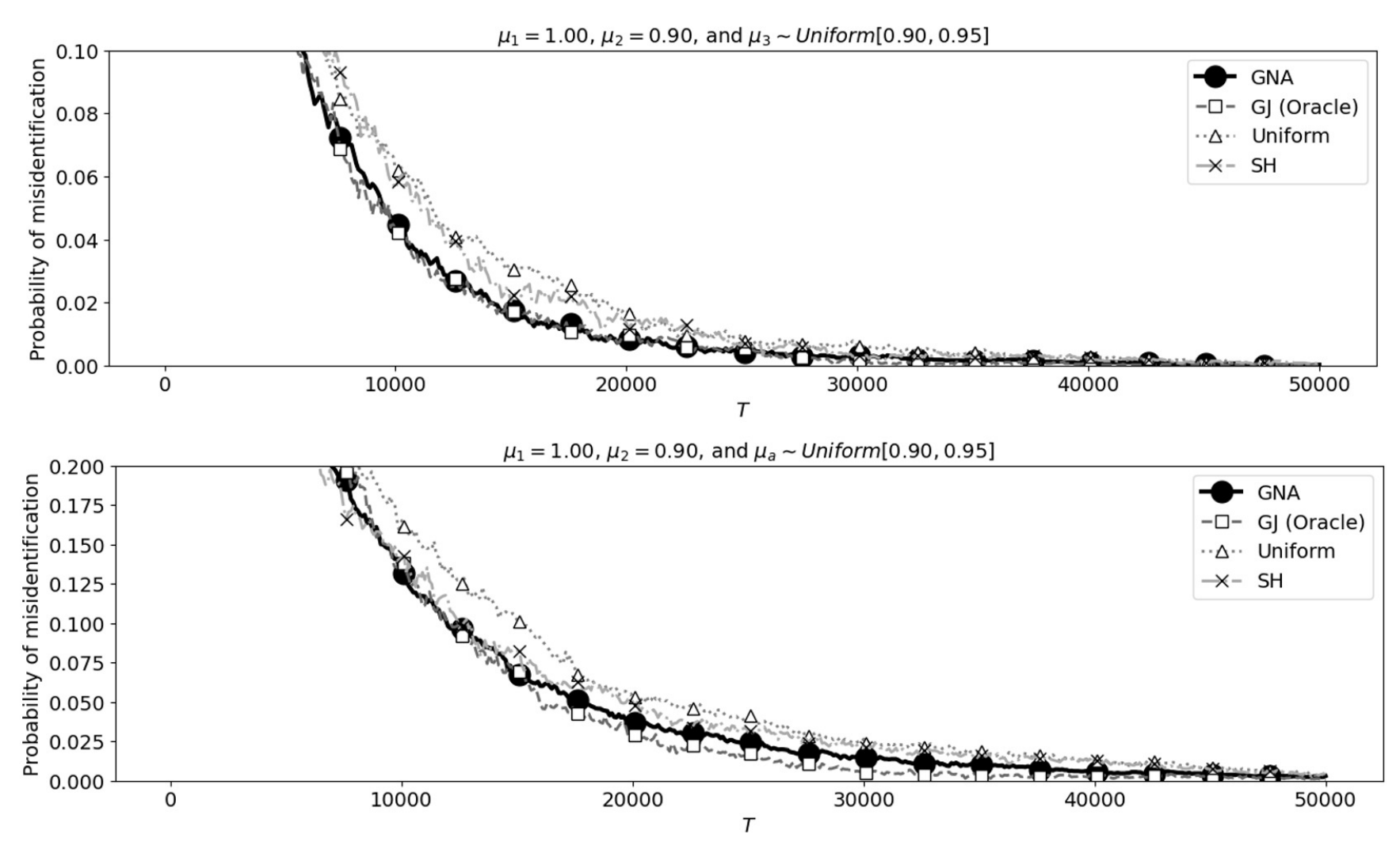}
  \vspace{-3mm}
  \caption{The results with $\mu_1 = 1.00$, $\mu_2 = 0.90$, $\mu_a \sim \mathrm{Uniform}[0.90, 0.95]$ for all $a\in[K]\backslash\{1, 2\}$, and $\overline{\sigma} = 3$ for $K = 3$ (Upper graph) and $K = 5$ (Lower graph). We report the empirical probability of misidentification at $T \in \{100, 200, 300, \dots, 49900, 50000\}$.}
      \label{fig:res1}
    \vspace{3mm}
  \includegraphics[width=0.9\linewidth]{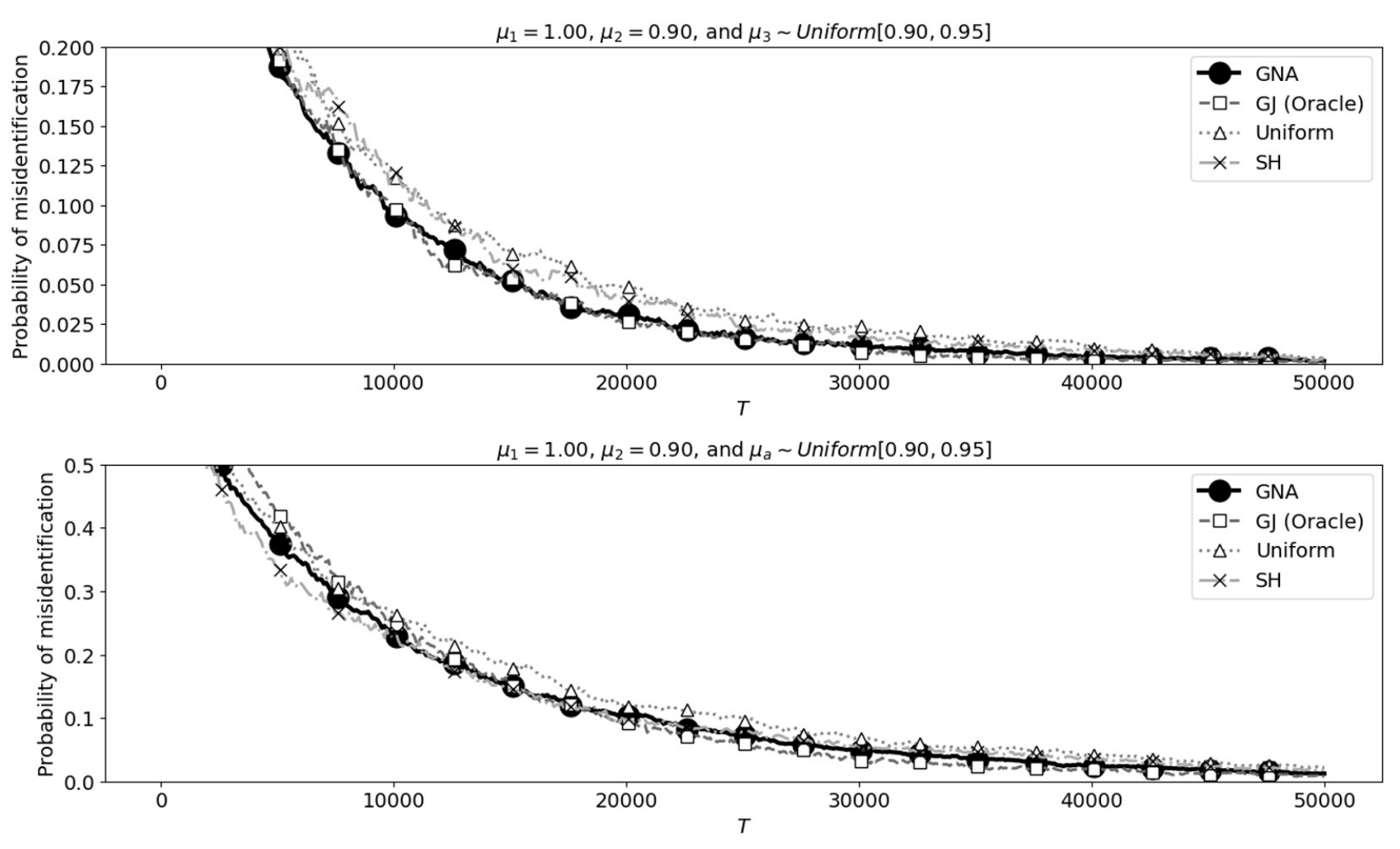}
  \vspace{-3mm}
  \caption{The results with $\mu_1 = 1.00$, $\mu_2 = 0.90$, and $\mu_a \sim \mathrm{Uniform}[0.90, 0.95]$ for all $a\in[K]\backslash\{1, 2\}$, with $\overline{\sigma} = 5$, for $K = 3$ (Upper graph) and $K = 5$ (Lower graph). We report the empirical probability of misidentification at $T \in \{100, 200, 300, \dots, 49900, 50000\}$.}
      \label{fig:res2}
\vspace{-5mm}
\end{figure}

\begin{figure}[t!]
  \centering
  \includegraphics[width=0.9\linewidth]{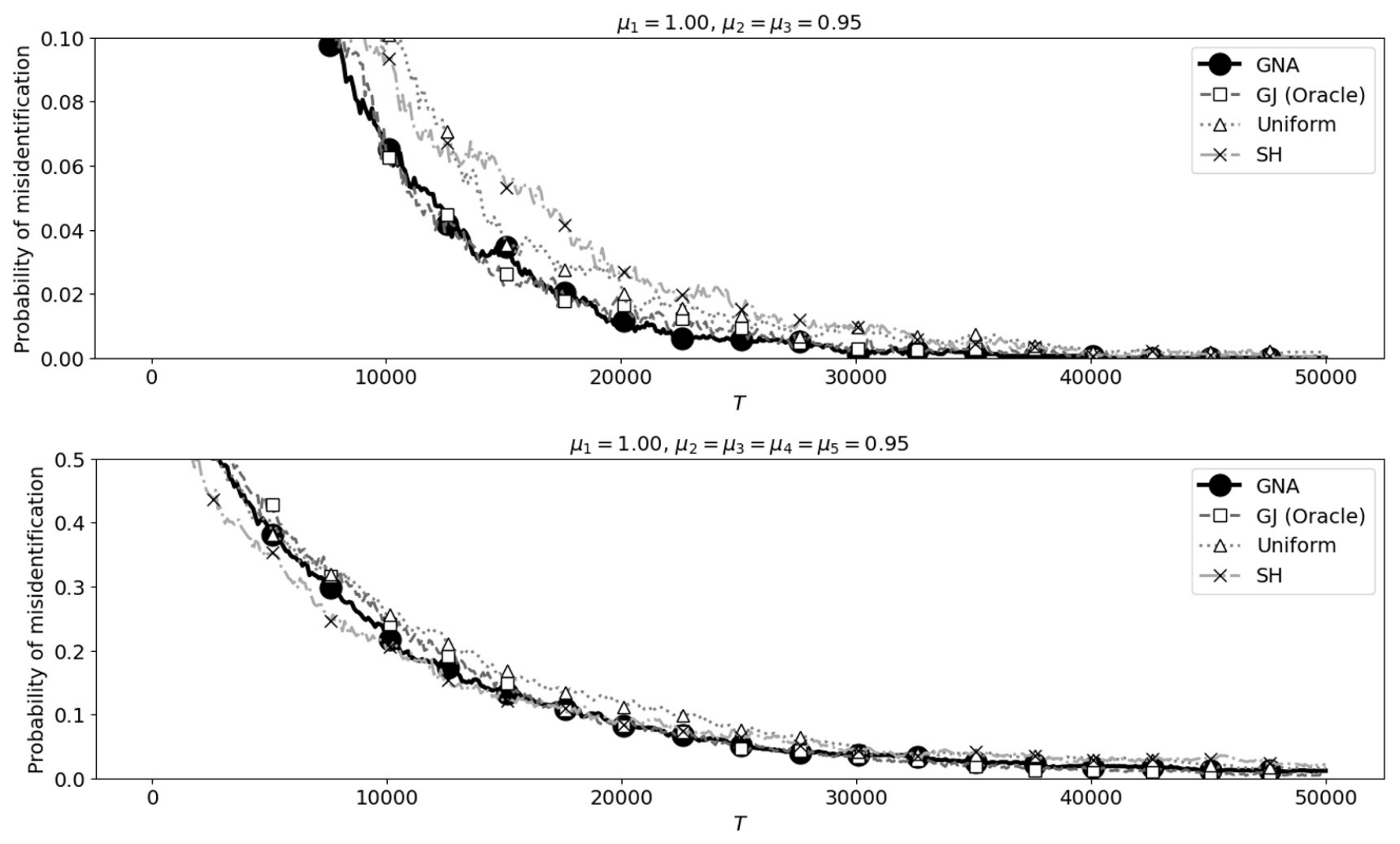}
  \vspace{-3mm}
  \caption{The results with $\mu_1 = 1.00$ $\mu_a = 0.95$ for all $a\in [K]\backslash\{1\}$, and $\overline{\sigma} = 3$ for $K = 3$ (Upper graph) and $K = 5$ (Lower graph). We report the empirical probability of misidentification at $T \in \{100, 200, 300, \dots, 49900, 50000\}$.}
      \label{fig:res3}
    \vspace{3mm}
  \includegraphics[width=0.9\linewidth]{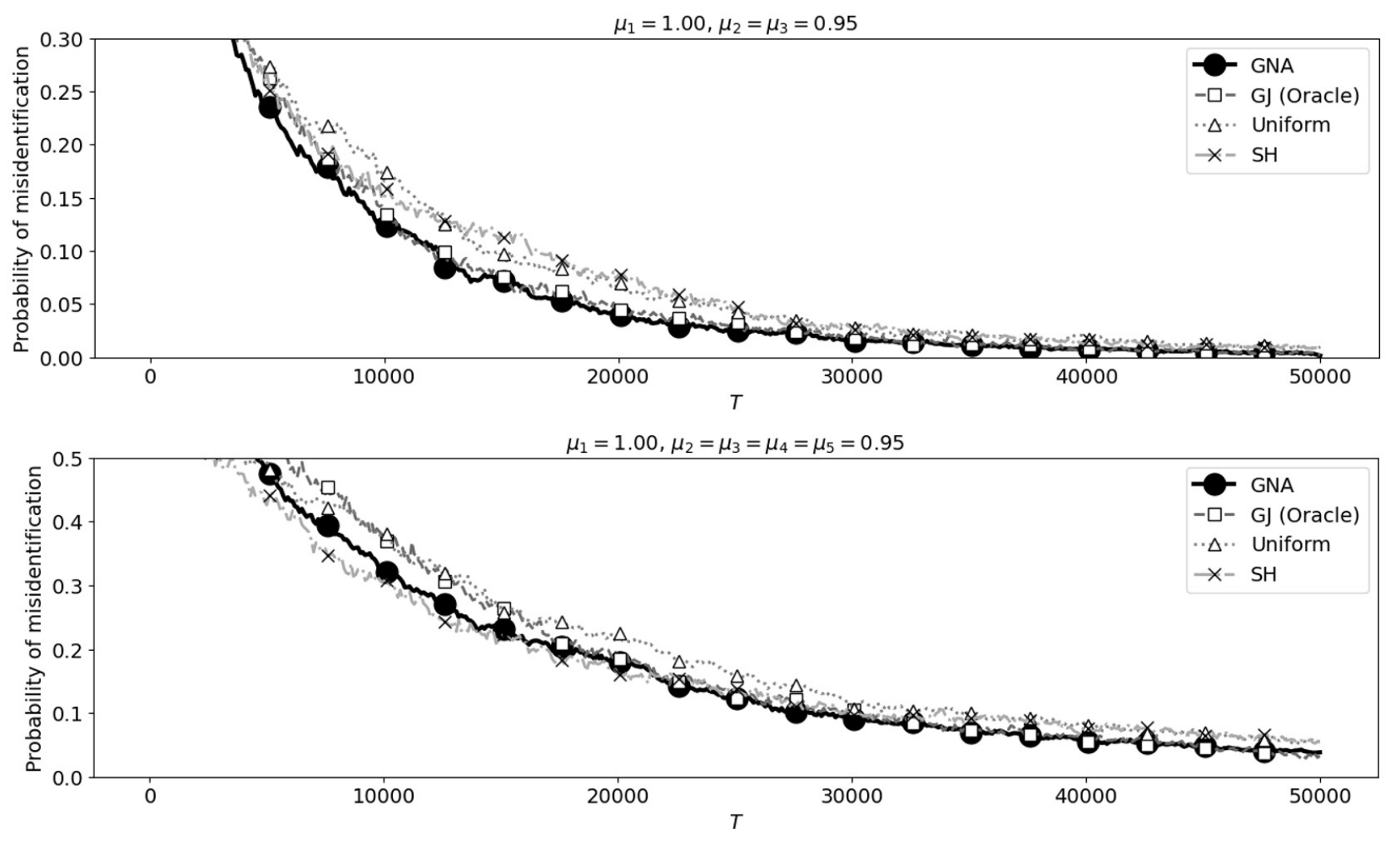}
  \vspace{-3mm}
  \caption{The results with $\mu_1 = 1.00$, $\mu_a = 0.95$ for all $a\in [K]\backslash\{1\}$, and $\overline{\sigma} = 5$ for $K = 3$ (Upper graph) and $K = 5$ (Lower graph). We report the empirical probability of misidentification at $T \in \{100, 200, 300, \dots, 49900, 50000\}$.}
      \label{fig:res4}
\vspace{-5mm}
\end{figure}





\section{Simulation Studies}
\label{sec:exp_results}
In this section, we investigate the empirical performance of our proposed GNA algorithm. We compare GNA with the Uniform-EBan algorithm \citep[Uniform,][]{Bubeck2011}, which allocates arms with the GNA given known variances (GNA); an equal allocation ratio ($1/K$); the successive rejects algorithm \citep[SR,][]{Audibert2010}; and the large-deviation optimal (GJ) algorithm proposed by \citet{glynn2004large}

The GJ algorithm is an oracle algorithm that is proven to be asymptotically optimal in cases where we have full knowledge of the distributions, including mean parameters and the identity of the best arm. Note that this algorithm is practically infeasible. The GNA algorithm does not estimate variances but directly allocates arms according to the ratio $\bm{w}^{\mathrm{GNA}}$ using known variances.

Let $K \in \{3, 5\}$. The best arm is arm $1$ with $\mu_1 = \mu_1(P_0) = 1$. We consider two cases. In the first case, we set $\mu_a = \mu_a(P_0) = 0.95$ for all $a\in[K]\backslash\{1\}$. In the second case, we set $\mu_2 = \mu_2(P_0) = 0.95$ and choose $\mu_a = \mu_a(P_0)$ from a uniform distribution with support $[0.90, 0.95]$ for all $a\in[K]\backslash\{1, 2\}$.

The variances are given as a permutation of the set $\{\overline{\sigma}, \underline{\sigma}, \sigma_{(3)},\dots,\sigma_{(K)}\}$, where $\overline{\sigma}$ is chosen from $\{5, 10\}$, $\underline{\sigma} = 0.1$, and $\sigma_{(3)}$ is chosen from a uniform distribution with support $[\underline{\sigma}, \overline{\sigma}]$.

We investigate the performance for each $T\in\{100, 200, 300, \cdots, 49900, 50000\}$.
We conduct $3000$ independent trials for each setting and compute the empirical probability of misidentification.
We plot the empirical probability of misidentification $\widehat{p}$ in Tables~\ref{tab:res1}--\ref{tab:res4} and Figures~\ref{fig:res1}--~\ref{fig:res4}. Note that the variance of the empirical probability of misidentification $\widehat{p}$ is $\widehat{p}(1 - \widehat{p})$. Thus, $\widehat{p}$ provides sufficient information about its distribution, so we do not show other graphs such as box plots and confidence intervals.

According to our results and existing studies, we theoretically expect the highest performance from the GJ algorithm, followed by the GNA algorithm in large samples. Our GNA algorithm follows these, with the other algorithms trailing.

Our empirical results align with theoretical expectations. The GNA algorithm tends to outperform the other methods, while the GJ is usually the best, which is an oracle algorithm, Additionally, as the variances are largeer, the GNA algorithm outperfoms the other more significantly. Interestingly, the GNA algorithm often outperforms the GJ algorithm. In finite samples, we observe cases where the SR and Uniform algorithms outperform the GJ and GNA algorithms. However, in large samples, the GJ and GNA algorithms tend to outperform them. This result implies that our proposed algorithms are asymptotically optimal but can be suboptimal in finite samples.

\section{Conclusion}
\label{sec:conclusion}

This study develops the GNA algorithm for the fixed-budget BAI problem. We demonstrate that the upper bound on its probability of misidentification aligns with the worst-case lower bound under the small-gap regime, providing an answer to a longstanding open problem regarding the existence of optimal algorithms in fixed-budget BAI, at least within a restricted distribution class. By restricting the class of distributions to the small-gap regime, we address the challenges posed by alternative lower bounds identified in existing studies \citep{Ariu2021, degenne2023existence}. 

The GNA algorithm represents a significant advancement in the theory of BAI. As a generalization of the Neyman allocation \citep{Neyman1934OnTT}, it extends the applicability of this classical method beyond two-armed cases to multi-armed cases. By incorporating adaptive estimation and leveraging properties of the small-gap regime, our approach provides a natural and theoretically grounded solution to this extension. Furthermore, we resolve unresolved technical issues highlighted in previous works, such as those in \citet{Kasy2021,Kasy2021corrigendum}, by demonstrating that the differences in KL divergence terms become negligible under the small-gap regime.

In addition to its theoretical contributions, the simplicity and adaptability of the GNA algorithm make it a promising approach for practical applications. The closed-form allocation rule derived under the small-gap regime offers computational advantages, enabling efficient implementation without requiring Bayesian methods or extensive computational resources.

For future work, it would be interesting to investigate the finite-sample properties of the GNA algorithm and explore finite-sample optimal algorithms. While our results establish asymptotic optimality, practical applications often operate under finite budgets where deviations from the theoretical results may occur. Investigating algorithms that optimize performance in such settings or incorporating robust allocation strategies could further enhance the practical utility of the GNA algorithm. \citet{Kato2020adaptive} examine the finite-sample properties of the A2IPW algorithm using the law-of-iterated logarithms, which has been refined by \citet{cook2023semiparametric}, using the concentration inequalities proposed by \citet{Balsubramani2016} and \citet{Howard2020TimeuniformNN}. In the study of finite-sample optimal algorithms, the lower bound by \citet{Carpentier2016} is particularly insightful, as it has already been used in existing studies \citep{yang2022minimax,Ariu2021,degenne2023existence}.

Another potential research direction is extending the analysis to cases with larger gaps. Additionally, exploring connections between the small-gap regime and other adaptive experimental designs could provide deeper insights into the generalizability of our approach.

In summary, this study contributes a theoretically optimal and practically implementable algorithm for fixed-budget BAI, offering both an answer to open theoretical questions and a foundation for future advancements in adaptive experimentation and BAI research.

\bibliographystyle{tmlr}
\bibliography{ReStud.bbl}

\onecolumn

\appendix

\section{Proof of Theorem~\ref{thm:lower_bound}}
\label{appdx:lower_proof}
This section provides the proof for Theorem~\ref{thm:lower_bound}. Our proof is inspired by \citet{Kaufman2016complexity}, \citet{Garivier2016}, and \citet{kato2024worstcase}.

\subsection{Transportation Lemma}
Let us denote the number of allocated arms by 
\[N_a(T) = \sum^T_{t=1}\mathbbm{1}[A_t = a].\] 
First, we introduce the \emph{transportation} lemma, shown by \citet{Kaufman2016complexity}. 
\begin{proposition}[Transportation lemma. From Lemma~1 in \citet{Kaufman2016complexity}]
\label{prp:transport}
Let $P$ and $Q$ be two bandit
models with $K$ arms such that for all $a$, the distributions $P_a$ and $Q_a$ of $Y_a$ are mutually absolutely continuous. 
Then, we have
\[
\sum_{a=1}^{K} \mathbb{E}_P[N_a(T)] \mathrm{KL}(P_a,Q_a)\geq
\sup_{\mathcal{E} \in \mathcal{F}_T} \ d(\mathbb{P}_P(\mathcal{E}),\mathbb{P}_{Q}(\mathcal{E})),
\]
where $d(x,y):= x\log (x/y) + (1-x)\log((1-x)/(1-y))$ is the binary relative entropy, with the convention 
that $d(0,0)=d(1,1)=0$.
\end{proposition}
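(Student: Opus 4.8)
The plan is to prove the inequality in two stages: first identify the left-hand side $\sum_{a}\mathbb{E}_P[N_a(T)]\,\mathrm{KL}(P_a,Q_a)$ with the Kullback--Leibler divergence between the laws of the entire observed trajectory $(A_1,Y_1,\dots,A_T,Y_T)$ under $\mathbb{P}_P$ and $\mathbb{P}_Q$ restricted to $\mathcal{F}_T$, and then apply a data-processing (contraction) argument to descend from this full divergence to the binary divergence $d(\mathbb{P}_P(\mathcal{E}),\mathbb{P}_Q(\mathcal{E}))$ attached to a single event. Concretely, I would introduce the log-likelihood ratio
\[
L_T \coloneqq \log \frac{\mathbb{P}_P}{\mathbb{P}_Q}\big(A_1,Y_1,\dots,A_T,Y_T\big),
\]
which is well defined $\mathbb{P}_P$-almost surely: the assumed mutual absolute continuity of $P_a$ and $Q_a$ for every $a$ guarantees that $\mathbb{P}_P$ and $\mathbb{P}_Q$ are mutually absolutely continuous on $\mathcal{F}_T$.

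First I would compute $\mathbb{E}_P[L_T]$. The key observation is that the allocation rule is, under both $P$ and $Q$, the same $\mathcal{F}_{t-1}$-measurable (possibly randomized) map from the past to a distribution over $[K]$; since this map does not depend on the underlying outcome model, its contribution cancels in the Radon--Nikodym derivative, leaving only the outcome densities
\[
L_T = \sum_{t=1}^{T} \log \frac{f_{A_t}(Y_t\mid P)}{f_{A_t}(Y_t\mid Q)},
\]
where $f_a(\cdot\mid P)$ denotes the density of $P_a$. Conditioning on $\mathcal{F}_{t-1}$, the arm $A_t$ is drawn according to the policy and, given $A_t=a$, the outcome $Y_t$ is distributed as $P_a$, so each summand has conditional expectation $\sum_{a}\mathbb{P}_P(A_t=a\mid \mathcal{F}_{t-1})\,\mathrm{KL}(P_a,Q_a)$. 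Taking the tower expectation and summing over $t$, the per-arm terms reassemble into $\mathbb{E}_P[\sum_{t}\mathbbm{1}[A_t=a]] = \mathbb{E}_P[N_a(T)]$, yielding
\[
\mathbb{E}_P[L_T] = \sum_{a=1}^{K}\mathbb{E}_P[N_a(T)]\,\mathrm{KL}(P_a,Q_a),
\]
which is exactly the left-hand side and, simultaneously, equals the KL divergence between the two trajectory laws on $\mathcal{F}_T$.

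For the second stage, fix any event $\mathcal{E}\in\mathcal{F}_T$ and view $\mathbbm{1}[\mathcal{E}]$ as a deterministic measurable function of the $\mathcal{F}_T$-data. By the data-processing inequality for KL divergence, pushing forward through this map cannot increase divergence, so the divergence between the trajectory laws dominates the divergence between the two Bernoulli laws of $\mathbbm{1}[\mathcal{E}]$, namely $d(\mathbb{P}_P(\mathcal{E}),\mathbb{P}_Q(\mathcal{E}))$, with the stated conventions $d(0,0)=d(1,1)=0$ covering the degenerate cases. Combining this with the identity from the first stage and taking the supremum over $\mathcal{E}\in\mathcal{F}_T$ gives the claim. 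I expect the main obstacle to be the measure-theoretic bookkeeping in the first stage: justifying rigorously that the randomized allocation kernel contributes identically under $P$ and $Q$ and hence cancels in the likelihood ratio, and handling any external randomization driving the policy so that the conditioning-on-$\mathcal{F}_{t-1}$ computation is valid. Once that cancellation is secured, both the tower-property identity and the data-processing step are routine.
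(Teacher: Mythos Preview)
Your proposal is correct and follows exactly the standard proof of this result as given in \citet{Kaufman2016complexity}: identify the left-hand side with the KL divergence between trajectory laws via the chain rule (the policy-kernel contributions cancel since the algorithm is the same under both models), and then apply the data-processing inequality for any $\mathcal{E}\in\mathcal{F}_T$. Note that the present paper does not supply its own proof of this proposition; it simply cites Lemma~1 of \citet{Kaufman2016complexity}, so there is no alternative argument to compare against.
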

Here, $Q$ corresponds to an alternative hypothesis that is used for deriving lower bounds and not an actual distribution. 

\subsection{KL Divergence and Fisher Information}
We recap the following well-known relationship that holds between the KL divergence and the Fisher information. This result is a cansequence of the Taylor expansion. 
\begin{proposition}[Proposition 15.3.2. in \citet{Duchi2023} and Theorem 4.4.4 in \citet{calin2014geometric}]
\label{prp:kl_fisher}
    For $P_{\mu_a, a}$ and $Q_{\nu_a, a}$ of $P, Q \in \mathcal{P}$, we have
    \begin{align}
        \lim_{\nu_a \to \mu_a}\frac{1}{\left(\mu_a - \nu_a\right)^2}\mathrm{kl}(\mu_a,\nu_a) = \frac{1}{2}I(\mu_a)
    \end{align}
\end{proposition}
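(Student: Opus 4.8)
The plan is to obtain the claim as a second-order Taylor expansion of $\mathrm{kl}(\mu_a, \nu_a) = \mathrm{KL}(P_{\mu_a, a}, Q_{\nu_a, a}) = \mathbb{E}_{P_{\mu_a, a}}[\ell_a(\mu_a) - \ell_a(\nu_a)]$, where $\ell_a(\theta) = \log f_a(Y \mid \theta)$, expanding the integrand in its second argument $\nu_a$ while holding the measure $P_{\mu_a, a}$ fixed. Conditions (2) and (3) of Definition~\ref{def:mean_param} are tailored precisely to make each step rigorous, so the argument reduces to bookkeeping on the three terms of the expansion together with one dominated-convergence step.

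First I would fix $\mu_a$ and apply Taylor's theorem with Lagrange remainder to $\ell_a(\nu_a)$ about $\mu_a$, pointwise in $y$: there is an intermediate point $\xi_y$ between $\mu_a$ and $\nu_a$ with
\[
\ell_a(\nu_a) = \ell_a(\mu_a) + \dot{\ell}_a(\mu_a)(\nu_a - \mu_a) + \tfrac{1}{2}\ddot{\ell}_a(\xi_y)(\nu_a - \mu_a)^2,
\]
which is licensed by the assumed three-times differentiability of $\ell_a$. Subtracting and taking expectation under $P_{\mu_a, a}$ gives
\[
\mathrm{kl}(\mu_a, \nu_a) = -\mathbb{E}_{P_{\mu_a, a}}\big[\dot{\ell}_a(\mu_a)\big]\,(\nu_a - \mu_a) - \tfrac{1}{2}\mathbb{E}_{P_{\mu_a, a}}\big[\ddot{\ell}_a(\xi_Y)\big]\,(\nu_a - \mu_a)^2.
\]
The linear term vanishes by condition (3a), $\mathbb{E}_{P_{\mu_a, a}}[\dot{\ell}_a(\mu_a)] = 0$; this is the step where the orientation of the two measures matters, since the outer expectation is taken under the fixed measure $P_{\mu_a,a}$ at which the score has zero mean.

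It then remains to show $\mathbb{E}_{P_{\mu_a, a}}[\ddot{\ell}_a(\xi_Y)] \to \mathbb{E}_{P_{\mu_a, a}}[\ddot{\ell}_a(\mu_a)]$ as $\nu_a \to \mu_a$, after which dividing by $(\nu_a - \mu_a)^2$ and invoking condition (3b), $\mathbb{E}_{P_{\mu_a, a}}[\ddot{\ell}_a(\mu_a)] = -I(\mu_a)$, produces exactly $\tfrac{1}{2}I(\mu_a)$. For this convergence I would argue by dominated convergence: since $\xi_Y$ is squeezed between $\mu_a$ and $\nu_a$, it satisfies $\xi_Y \to \mu_a$, and continuity of $\ddot{\ell}_a$ yields $\ddot{\ell}_a(\xi_Y) \to \ddot{\ell}_a(\mu_a)$ pointwise; the integrable dominating function is supplied by condition (3c), which bounds $|\ddot{\ell}_a(\tau)|$ by $u(y \mid \mu_a)$ uniformly over a neighborhood $U(\mu_a)$ with $\mathbb{E}_{P_{\mu_a, a}}[u(Y \mid \mu_a)] < \infty$.

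I expect the only genuine obstacle to be this last limit, namely controlling the remainder whose intermediate point $\xi_Y$ depends measurably on $Y$; a naive interchange of limit and expectation is not automatic. Hypothesis (3c) is included in Definition~\ref{def:mean_param} exactly to neutralize this: once $\nu_a$ is close enough to $\mu_a$ that the interval between them lies in $U(\mu_a)$, the bound $|\ddot{\ell}_a(\xi_Y)| \le u(Y \mid \mu_a)$ holds almost surely, so dominated convergence applies directly. Everything else is a matter of substituting (3a) and (3b) into the expansion.
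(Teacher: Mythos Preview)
Your proposal is correct and is precisely the Taylor-expansion argument the paper has in mind: the paper does not supply its own proof of this proposition but simply states that ``this result is a consequence of the Taylor expansion'' and cites \citet{Duchi2023} and \citet{calin2014geometric}. Your write-up fleshes out that one-line remark using exactly the regularity conditions (3a)--(3c) of Definition~\ref{def:mean_param}, so there is nothing to compare beyond noting that you have made explicit what the paper leaves to the references.
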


\subsection{Proof of Theorem~\ref{thm:lower_bound}}
By using Propositions~\ref{prp:transport} and \ref{prp:kl_fisher}, we prove Theorem~\ref{thm:lower_bound} below. 

\begin{proof}[Proof of Theorem~\ref{thm:lower_bound}]
Let $\bm{\mu} = (\mu_a)_{a\in[K]}$ be the baseline mean outcomes of $Y_a$ under $P_{\bm{\mu}}$ whose best arm is fixed at $\widetilde{a} \in [K]$. Corresponding to the baseline mean outcomes $\bm{\mu}$, let $\bm{\nu} = (\nu_a)_{a\in[K]}$ be an alternative mean outcomes of $Y_a$ under $P_{\bm{\nu}}$ whose best arm is \emph{not} $\widetilde{a}$. 

Let $\mathcal{E}$ be the event $\widehat{a}^\pi_T = a^\star(P_{\bm{\nu}}) \neq \widetilde{a}$. 
Between the baseline distribution $P_{\bm{\mu}}$ and an alternative hypothesis $P_{\bm{\nu}}$, from Proposition~\ref{prp:transport}, we have
\[
\sum_{a=1}^{K} \mathbb{E}_P[N_a(T)] \mathrm{KL}(P_{a, \mu_a},P_{a, \nu_a})\geq
\sup_{\mathcal{E} \in \mathcal{F}_T} \ d(\mathbb{P}_{P_{\bm{\mu}}}(\mathcal{E}),\mathbb{P}_{P_{\bm{\nu}} }(\mathcal{E})).
\]

Under any consistent algorithm $\pi \in \Pi^{\mathrm{const}}$, we have $\mathbb{P}_{P_{\bm{\mu}}}(\mathcal{E}) \to 0$ and $\mathbb{P}_{P_{\bm{\nu}} }(\mathcal{E}) \to 1$ as $T \to \infty$. 

Therefore, for any $\varepsilon > 0$, there exists $T(\epsilon)$ such that for all $T \geq T(\varepsilon)$, it holds that
\[0\leq \mathbb{P}_{P_{\bm{\mu}}}(\mathcal{E}) \leq \varepsilon \leq \mathbb{P}_{P_{\bm{\nu}} }(\mathcal{E}) \leq 1.\]

Since $d(x,y)$ is defined as $d(x,y):= x\log (x/y) + (1-x)\log((1-x)/(1-y))$, we have
\begin{align*}
    &\sum_{a=1}^{K} \mathbb{E}_P[N_a(T)] \mathrm{KL}(P_{a, \mu_a},P_{a, \nu_a}) \geq d(\varepsilon,\mathbb{P}_{P_{\bm{\nu}} }(\mathcal{E}))\\
    &\ \ \ = \varepsilon\log \left(\frac{\varepsilon}{\mathbb{P}_{P_{\bm{\nu}} }(\mathcal{E})}\right) + \left(1 - \varepsilon\right)\log \left(\frac{1 - \varepsilon}{1 - \mathbb{P}_{P_{\bm{\nu}} }(\mathcal{E})}\right)\\
    &\ \ \ \geq \varepsilon\log \left(\varepsilon\right) + \left(1 - \varepsilon\right)\log \left(\frac{1 - \varepsilon}{1 - \mathbb{P}_{P_{\bm{\nu}} }(\mathcal{E})}\right)\\
    &\ \ \ \geq \varepsilon\log \left(\varepsilon\right) + \left(1 - \varepsilon\right)\log \left(\frac{1 - \varepsilon}{\mathbb{P}_{P_{\bm{\nu}} }(\widehat{a}^\pi_T \neq a^\star(P_{\bm{\nu}})}\right).
\end{align*}
Note that $\varepsilon$ is closer to $\mathbb{P}_{P_{\bm{\nu}} }(\mathcal{E})$ than $\mathbb{P}_{P_{\bm{\mu}}}(\mathcal{E})$; therefore, we used 
$d(\mathbb{P}_{P_{\bm{\mu}}}(\mathcal{E}),\mathbb{P}_{P_{\bm{\nu}} }(\mathcal{E})) \geq d(\varepsilon,\mathbb{P}_{P_{\bm{\nu}} }(\mathcal{E}))$. 

We divide both sides by $T$, take $\limsup_{T \to \infty}$, and let $\epsilon$ go to zero. Then, we obtain 
\begin{align*}
    &\limsup_{T \to \infty} - \frac{1}{T}\log \mathbb{P}_{P_{\bm{\nu}}}\big(\widehat{a}^\pi_T \neq a^\star(P_{\bm{\nu}})\big) \leq \limsup_{T \to \infty}\sum_{a\in[K]} \kappa^\pi_{a, T}(P_{\bm{\mu}})\mathrm{KL}(P_{a, \mu_a}, P_{a, \nu_a}),
\end{align*}
where $\kappa^\pi_{a, T}(P_{\bm{\mu}}) \coloneqq \frac{1}{T}\mathbb{E}_{P_{\bm{\mu}}}\left[N_a\right]$.\footnote{The proof of this part is inspired by that for Theorem~12 in \citet{Kaufman2016complexity}. A reader of our paper gave us a comment that ``this theorem is erroneous. Properly using Proposition A.1 yields $\sum_{a \in [K]} \mathbb{E}_Q[N_{T,a}] \text{KL}(Q_a, P_a) \geq - \mathbb{P}_Q(a_T \neq a^\star(P)) \log \mathbb{P}_P(a_T \neq a^\star(P)) - \log 2$. Using the consistency of the algorithm, we have $\mathbb{P}_Q(a_T \neq a^\star(P)) \to 0$ as $T \to \infty$. Therefore, the roles of $P$ and $Q$ should be reversed.'' (We keep the original notation in the previous comment, which is different from ours, since it is identical to the notations in \citet{Kaufman2016complexity}.) However, this comment is based on the confusion of the definition of the consistent algorithm and the event considered in the change of measure arguments (Proposition~\ref{prp:transport}), and our theorem holds. Recall that any consistent algorithm returns the true best arm with probability one under each distribution. Here, we defined the event in Proposition~\ref{prp:transport} as $\widehat{a}^\pi_T = a^\star(P_{\bm{\nu}})$, which is different from that in \citet{Kaufman2016complexity} since \citet{Kaufman2016complexity} defines the event as $\widehat{a}^\pi_T = a^\star(P_{\bm{\mu}})$. If we follow the notation in the comments, our event is $\widehat{a}^\pi_T = a^\star(Q)$, not $\widehat{a}^\pi_T = a^\star(P)$. This means that we use the following fact in the proof: under any consistent algorithm, $\mathbb{P}_P(a_T \neq a^\star(Q)) \to 0$ and $\mathbb{P}_Q(a_T \neq a^\star(Q)) \to 1$ as $T \to \infty$.  Therefore, our roles of $P$ and $Q$ are correct, which is reversed from \citet{Kaufman2016complexity}. Similar proof techniques have also been employed in \citet{Komiyama2022} and \citet{degenne2023existence} and confirmed the soundness.} 

Then, taking $\inf_{P_{\bm{\nu}} \in \cup_{b\in[K]\backslash\{\widetilde{a}\}}\mathcal{P}\big(b, \theta^\star(, \overline{\Delta}\big)} $ in both sides, we obtain
\begin{align*}
    &\inf_{\substack{\bm{\nu} \in \Theta^K\colon\\ \argmax_{a\in[K]}\nu_a \neq \widetilde{a}}} \limsup_{T \to \infty} - \frac{1}{T}\log \mathbb{P}_{P_{\bm{\nu}}}\big(\widehat{a}^\pi_T \neq a^\star(P_{\bm{\nu}})\big)\\
    &
    \leq \inf_{\substack{\bm{\nu} \in \Theta^K\colon\\ \argmax_{a\in[K]}\nu_a \neq \widetilde{a}}} \limsup_{T \to \infty}\sum_{a\in[K]} \kappa^\pi_{a, T}(P_{\bm{\mu}})\mathrm{KL}(P_{a, \mu_a}, P_{a, \nu_a}).
\end{align*}

From Proposition~\ref{prp:kl_fisher}, for any $\varepsilon > 0$, there exists $\Xi_a(\varepsilon)$ such that for all $- \Xi_a(\varepsilon) < \xi_a \coloneqq - \mu_a + \nu_a < \Xi_a(\varepsilon)$, the following holds:
\begin{align}
    \mathrm{kl}(\mu_a, \mu_a + \xi_a) \leq \frac{\xi^2_a}{2}I\big(\mu_a\big) + \varepsilon \xi^2_a = \frac{\xi^2_a}{2\sigma_a(\mu_a)} + \varepsilon \xi^2_a,
\end{align}
where we used $I\big(\mu_a\big) = \sigma^2_a(\mu_a)$. 

Then, we have
\begin{align*}
        &\inf_{\substack{\bm{\nu} \in \Theta^K\colon\\ \argmax_{a\in[K]}\nu_a \neq \widetilde{a}}}\limsup_{T \to \infty} - \frac{1}{T}\log \mathbb{P}_{P_{\bm{\nu}}}(\widehat{a}^\pi_T \neq a^\star(P_{\bm{\nu}}))\\
        &\leq \inf_{\substack{\bm{\nu} \in \Theta^K\colon\\ \argmax_{a\in[K]}\nu_a \neq \widetilde{a}}}\limsup_{T \to \infty}\sum_{a\in[K]} \left\{\kappa^\pi_{a, T}(P_{\bm{\mu}})\frac{\left(\mu_a - \nu_a\right)^2}{2\sigma^2_a(\mu_a)} + \varepsilon \left(\mu_a - \nu_a\right)^2\right\}\\
        &\leq  \sup_{\bm{w}\in\mathcal{W}}\inf_{\substack{\bm{\nu} \in \Theta^K\colon\\ \argmax_{a\in[K]}\nu_a \neq \widetilde{a}}}\sum_{a\in[K]}\left\{ w_a\frac{\left(\mu_a - \nu_a\right)^2}{2\sigma^2_a(\mu_a)} + \varepsilon \left(\mu_a - \nu_a\right)^2\right\},
\end{align*}
where $\mathcal{W} \coloneqq \left\{\bm{w} \in [0, 1]^K \colon \sum_{a\in[K]}w_a = 1\right\}$. 

We compute the RHS as follows:
\begin{align*}
    &\inf_{\stackrel{(\nu_a)\in\Theta^K:}{\argmax_{a\in[K]}\nu_a \neq \widetilde{a}}}\sum_{a\in[K]}w_a\frac{\big(\mu_a - \nu_a\big)^2}{2\sigma^2_a(\mu_a)}\\
    &= \min_{a\in[K]\backslash\{\widetilde{a}\}}\inf_{\stackrel{(\nu_a)\in\Theta^K:}{\nu_a > \nu_{\widetilde{a}}}}\sum_{a\in[K]}w_a\frac{\big(\mu_a - \nu_a\big)^2}{2\sigma^2_a(\mu_a)}\\
    &= \min_{a\in[K]\backslash\{\widetilde{a}\}}\inf_{\stackrel{(\nu_{\widetilde{a}}, \nu_a)\in\Theta^2:}{\nu_a > \nu_{\widetilde{a}}}}\left\{w_{\widetilde{a}}\frac{\left(\nu_{\widetilde{a}} - \mu_{\widetilde{a}}\right)^2}{2\sigma^2_{\widetilde{a}}(\mu_{\widetilde{a}})} + w_a\frac{\left(\nu_a - \mu_a\right)^2}{2\sigma^2_a(\mu_a)}\right\}\\
    &= \min_{a\in[K]\backslash\{\widetilde{a}\}}\min_{\nu \in \left[\mu_a, \mu_{\widetilde{a}}\right]}\left\{w_{\widetilde{a}}\frac{\left(\nu - \mu_{\widetilde{a}}\right)^2}{2\sigma^2_{\widetilde{a}}(\mu_{\widetilde{a}})} + w_a\frac{\left(\nu - \mu_a\right)^2}{2\sigma^2_a(\mu_a)}\right\}.
\end{align*}
Then, by solving the optimization problem, we obtain 
\begin{align*}
    & \min_{a\in[K]\backslash\{\widetilde{a}\}}\min_{\nu \in \left[\mu_a, \mu_{\widetilde{a}}\right]}\left\{w_{\widetilde{a}}\frac{\left(\nu - \mu_{\widetilde{a}}\right)^2}{2\sigma^2_a(\mu_{\widetilde{a}})} + w_a\frac{\left(\nu - \mu_a\right)^2}{2\sigma^2_a(\mu_a)}\right\}\\
     &= \min_{a\in[K]\backslash\{\widetilde{a}\}}\frac{\left(\mu_{\widetilde{a}} - \mu_a\right)^2}{2\left(\frac{\sigma^2_{\widetilde{a}}(\mu_{\widetilde{a}})}{w_{\widetilde{a}}} + \frac{\sigma^2_a(\mu_a)}{w_{a}}\right)},
\end{align*}
where the optimizer $v^*$ in the inner minimization problem is 
\[\frac{1}{\left(\frac{\sigma^2_{\widetilde{a}}(\mu_{\widetilde{a}})}{w_{\widetilde{a}}} + \frac{\sigma^2_a(\mu_a)}{w_{a}}\right)}\left(\frac{\sigma^2_a(\mu_a)}{w_{a}}\mu_{\widetilde{a}} + \frac{\sigma^2_{\widetilde{a}}(\mu_{\widetilde{a}})}{w_{\widetilde{a}}}\mu_{a}\right).\]

Therefore, we have
\begin{align*}
&\lim_{\overline{\Delta}\to 0}\inf_{\stackrel{(\nu_a)\in\Theta^K:}{\argmax_{a\in[K]}\nu_a \neq \widetilde{a}}}\limsup_{T \to \infty} - \frac{1}{\overline{\Delta}^2T}\log \mathbb{P}_{P_{\bm{\nu}}}(\widehat{a}^\pi_T \neq a^\star(P_{\bm{\mu}}))\\
&\leq \sup_{\bm{w}\in\mathcal{W}}\min_{a\in[K]\backslash\{\widetilde{a}\}}\frac{\left(\mu_{\widetilde{a}} - \mu_a\right)^2}{2\left(\frac{\sigma^2_{\widetilde{a}}(\mu_{\widetilde{a}})}{w_{\widetilde{a}}} + \frac{\sigma^2_a(\mu_a)}{w_{a}}\right)}.
\end{align*}

Lastly, we solve
\begin{align*}
\max_{w \in \mathcal{W}}\min_{a\neq b} \frac{1}{2\left(\frac{\sigma^2_{\widetilde{a}}(\mu_{\widetilde{a}})}{w_{\widetilde{a}}} + \frac{\sigma^2_a(\mu_a)}{w_{a}}\right)}.
\end{align*}

We solve the optimization problem by solving the following non-linear programming:
\begin{align*}
    &\max_{R > 0, \bm{w} = \{w_1,w_2\dots,w_K\} \in (0,1)^{K} }\ \ \ R\\
    \mathrm{s.t.}&\ \ \ R \left(\frac{\sigma^2_{\widetilde{a}}(\mu_{\widetilde{a}})}{w_{\widetilde{a}}} + \frac{\sigma^2_a(\mu_a)}{w_a}\right) \zeta - 1 \leq 0\qquad \forall a \in [K]\backslash\{\widetilde{a}\},\\
    &\ \ \ \sum_{a\in[K]}w_a - 1 = 0,\\
    &\ \ \ w_a > 0 \qquad \forall a\in [K].
\end{align*}
For simplicity, we denote $(\sigma^2_{a}(\mu_{a}))_{a\in[K]}$ by $(\sigma^2_a)_{a\in[K]}$.

Let $\bm{\lambda} = \{\lambda_a\}_{a\in[K]\backslash\{\widetilde{a}\}} \in [-\infty, 0]^{K-1}$ and $\gamma \geq 0$ be Lagrangian multipliers. Then, we define the following Lagrangian function:
\begin{align*}
    &L(\bm{\lambda}, \bm{\gamma}; R, \bm{w}) = R + \sum_{a\in [K]\backslash \{\widetilde{a}\}}\lambda_a \left(R \left(\frac{\sigma^2_{\widetilde{a}}}{w_{\widetilde{a}}} + \frac{\sigma^2_a}{w_a}\right) - 1\right) - \gamma\left(\sum_{a\in[K]}w_a - 1\right).
\end{align*}
Note that the objective ($R$) and constraints ($R \left(\frac{\sigma^2_{\widetilde{a}}}{w_{\widetilde{a}}} + \frac{\sigma^2_a}{w_a}\right)  - 1 \leq 0$ and $\sum_{a\in[K]}w_a - 1 = 0$) are differentiable convex functions for $R$ and $\bm{w}$. 

Here, the global optimizer $R^\dagger$ and $\bm{w}^\dagger = \{w^\dagger_a\} \in (0,1)^{K}$ satisfies the following KKT conditions:
\begin{align}
\label{eq:cond11}
&1 +  \sum_{a\in[K]\backslash\{b\}}\lambda^\dagger_a\left(\frac{\sigma^2_{\widetilde{a}}}{w^\dagger_{\widetilde{a}}} + \frac{\sigma^2_a}{w^\dagger_a}\right) = 0\\
\label{eq:cond21}
&-2\sum_{a\in [K]\backslash\{\widetilde{a}\}}\lambda^\dagger_aR^\dagger\frac{\sigma^2_{\widetilde{a}}}{(w^\dagger_{\widetilde{a}})^2}  = \gamma^\dagger\\
\label{eq:cond41}
&-2\lambda^\dagger_aR^\dagger\frac{\sigma^2_a}{(w^\dagger_a)^2} = \gamma^\dagger\qquad \forall a \in [K]\backslash\{b\}\\
\label{eq:cond31}
&\lambda^\dagger_a \left(R^\dagger\left(\frac{\sigma^2_{\widetilde{a}}}{w^\dagger_{\widetilde{a}}} + \frac{\sigma^2_a}{w^\dagger_a}\right) - 1\right) = 0\qquad \forall a \in [K]\backslash\{\widetilde{a}\}\\
&\gamma^\dagger \left(\sum_{c\in[K]}w^\dagger(c) - 1\right) = 0\nonumber\\
&\lambda^\dagger_a\leq 0\qquad \forall a \in [K]\backslash\{\widetilde{a}\}\nonumber.
\end{align}

Here, \eqref{eq:cond11} implies that there exists 
$a\in[K]\backslash\{\widetilde{a}\}$ such that $\lambda^\dagger_a < 0$ holds. This is because if $\lambda^\dagger_a = 0$ for all $a\in [K]\backslash \{\widetilde{a}\}$, $1 + 0 = 1 \neq 0$.

With $\lambda^\dagger_a < 0$, since $- \lambda^\dagger_aR^\dagger\frac{\sigma^2_a}{(w^\dagger_a)^2} > 0$ for all $a\in[K]$, it follows that $\gamma^\dagger > 0$. This also implies that $\sum_{c\in[K]}w^{c\dagger} - 1 = 0$. 

Then, \eqref{eq:cond31} implies that 
\begin{align*}
    R^\dagger\left(\frac{\sigma^2_{\widetilde{a}}}{w^\dagger_{\widetilde{a}}} + \frac{\sigma^2_a}{w^\dagger_a}\right) = 1 \qquad \forall a \in [K]\backslash\{\widetilde{a}\}.
\end{align*}
Therefore, we have
\begin{align}
\label{eq:cond511}
    \frac{\sigma^2_a}{w^\dagger_a} =  \frac{\sigma^2_c}{w^\dagger_c} \qquad \forall a,c \in [K]\backslash\{\widetilde{a}\}.
\end{align}
Let $\frac{\sigma^2_a}{w^\dagger_a} =  \frac{\sigma^2_c}{w^\dagger_c} = \frac{1}{R^\dagger} - \frac{\sigma^2_c}{w^\dagger_c} = U$. 
From \eqref{eq:cond511} and \eqref{eq:cond11}, 
\begin{align}
\label{eq:cond61}
    \sum_{c\in [K]\backslash\{\widetilde{a}\}}\lambda^\dagger_c = - \frac{1}{\frac{\sigma^2_c}{w^\dagger_c} + U}
\end{align}
From \eqref{eq:cond21} and \eqref{eq:cond41}, we have
\begin{align}
\label{eq:cond71}
&\frac{\sigma^2_{\widetilde{a}}}{(w^\dagger_{\widetilde{a}})^2}\sum_{c\in [K]\backslash\{\widetilde{a}\}}\lambda^\dagger_c = \lambda^\dagger_a\frac{\sigma^2_a}{(w^\dagger_a)^2}\qquad \forall a \in [K]\backslash\{\widetilde{a}\}.
\end{align}
From \eqref{eq:cond61} and \eqref{eq:cond71}, we have
\begin{align}
\label{eq:cond81}
    -\frac{\sigma^2_{\widetilde{a}}}{(w^\dagger_{\widetilde{a}})^2} = \lambda^\dagger_a\frac{\sigma^2_a}{(w^\dagger_a)^2}\left(\frac{\sigma^2_{\widetilde{a}}}{w^\dagger_{\widetilde{a}}} + U\right)\qquad \forall a \in [K]\backslash\{\widetilde{a}\}.
\end{align}
From \eqref{eq:cond11} and \eqref{eq:cond81}, we have
\begin{align*}
    w^\dagger_{\widetilde{a}} = \sqrt{\sigma^2_{\widetilde{a}}\sum_{a\in[K]\backslash\{\widetilde{a}\}}\frac{(w^\dagger_a)^2}{\sigma^2_a}}.
\end{align*}

In summary, the KKT conditions are given as follows:
\begin{align*}
&w^\dagger_{\widetilde{a}} = \sqrt{\sigma^2_{\widetilde{a}}\sum_{a\in[K]\backslash\{\widetilde{a}\}}\frac{(w^\dagger_{a})^2}{\sigma^2_a}}\\
&\frac{\sigma^2_{\widetilde{a}}}{(w^\dagger_{\widetilde{a}})^2} = - \lambda^\dagger_a\frac{\sigma^2_a}{(w^\dagger_a)^2}\left(\left(\frac{\sigma^2_{\widetilde{a}}}{w^\dagger_{\widetilde{a}}} + \frac{\sigma^2_a}{w^\dagger_a}\right)\right)\qquad \forall a \in [K]\backslash\{\widetilde{a}\}\\
&-\lambda^\dagger_a\frac{\sigma^2_a}{(w^\dagger_a)^2} = \widetilde{\gamma}^\dagger\qquad \forall a \in [K]\backslash\{\widetilde{a}\}\\
&\frac{\sigma^2_a}{w^\dagger_a}= \frac{1}{R^\dagger} - \frac{\sigma^2_{\widetilde{a}}}{w^\dagger_{\widetilde{a}}} \qquad \forall a\in [K]\backslash\{\widetilde{a}\} \\
&\sum_{a\in[K]}w^\dagger_a = 1\nonumber\\
&\lambda^\dagger_a\leq 0\qquad \forall a \in [K]\backslash\{\widetilde{a}\},
\end{align*}
where $\widetilde{\gamma}^\dagger = \gamma^\dagger/2 R^\dagger$. 

From $w^\dagger_{b} = \sqrt{\sigma^2_{b}\sum_{a\in[K]\backslash\{\widetilde{a}\}}\frac{(w^\dagger_a)^2}{\sigma^2_a}}$ and $-\lambda^\dagger_a\frac{\sigma^2_a}{(w^\dagger_a)^2} = \widetilde{\gamma}^\dagger$, we have
\begin{align*}
    &w^\dagger_{\widetilde{a}} = \sigma_{\widetilde{a}}\sqrt{\sum_{a\in[K]\backslash\{\widetilde{a}\}}-\lambda^\dagger_a}/\sqrt{\widetilde{\gamma}^\dagger}\\
    &w^\dagger_a = \sqrt{-\lambda^\dagger_a/\widetilde{\gamma}^\dagger}\sigma_a.
\end{align*}
From $\sum_{a\in[K]}w^\dagger_a = 1$, we have
\begin{align*}
\sigma_b\sqrt{\sum_{a\in[K]\backslash\{\widetilde{a}\}}-\lambda^\dagger_a}/\sqrt{\widetilde{\gamma}^\dagger} + \sum_{a\in[K]\backslash\{\widetilde{a}\}}\sqrt{-\lambda^\dagger_a/\widetilde{\gamma}^\dagger}\sigma_a = 1.
\end{align*}
Therefore, the following holds:
\begin{align*}
    \sqrt{\widetilde{\gamma}^\dagger} = \sigma_{\widetilde{a}}\sqrt{\sum_{a\in[K]\backslash\{\widetilde{a}\}}-\lambda^\dagger_a} + \sum_{a\in[K]\backslash\{\widetilde{a}\}}\sqrt{-\lambda^\dagger_a}\sigma_a. 
\end{align*}
Hence, the allocation ratio is computed as
\begin{align*}
    &w^\dagger_{\widetilde{a}} = \frac{\sigma_{\widetilde{a}}\sqrt{\sum_{a\in[K]\backslash\{\widetilde{a}\}}-\lambda^\dagger_a}}{\sigma_{\widetilde{a}}\sqrt{\sum_{a\in[K]\backslash\{\widetilde{a}\}}-\lambda^\dagger_a} + \sum_{a\in[K]\backslash\{\widetilde{a}\}}\sqrt{-\lambda^\dagger_a}\sigma_a}\\
    &w^\dagger_a = \frac{\sqrt{-\lambda^\dagger_a}\sigma_a}{\sigma_{\widetilde{a}}\sqrt{\sum_{a\in[K]\backslash\{\widetilde{a}\}}-\lambda^\dagger_a} + \sum_{a\in[K]\backslash\{\widetilde{a}\}}\sqrt{-\lambda^\dagger_a}\sigma_a},
\end{align*}
where from $\frac{\sigma^2_{\widetilde{a}}}{(w^\dagger_{\widetilde{a}})^2} = - \lambda^\dagger_a\frac{\sigma^2_a}{(w^\dagger_a)^2}\left(\frac{\sigma^2_{b}}{w^\dagger_{b}} + \frac{\sigma^2_a}{w^\dagger_a}\right)$, $(\lambda^\dagger_a)_{a\in[K]\backslash\{\widetilde{a}\}}$ satisfies,
\begin{align*}
    &\frac{1}{\sum_{a\in[K]\backslash\{\widetilde{a}\}}-\lambda^\dagger_a}\\
    &= \left(\frac{\sigma_{\widetilde{a}}}{\sqrt{\sum_{a\in[K]\backslash\{\widetilde{a}\}}-\lambda^\dagger_a}} + \frac{\sigma_a}{\sqrt{-\lambda^\dagger_a}}\right)\left(\sigma_{\widetilde{a}}\sqrt{\sum_{c\in[K]\backslash\{\widetilde{a}\}}-\lambda^{c\dagger}} + \sum_{c\in[K]\backslash\{\widetilde{a}\}}\sqrt{-\lambda^{c\dagger}}\sigma^c_0\right)\\ 
    &= \left(\sigma_{\widetilde{a}} + \frac{\sigma_a}{\sqrt{-\lambda^\dagger_a}}\sqrt{\sum_{c\in[K]\backslash\{\widetilde{a}\}}-\lambda^{c\dagger}}\right)\left(\sigma_{\widetilde{a}} + \frac{\sum_{c\in[K]\backslash\{\widetilde{a}\}}\sqrt{-\lambda^{c\dagger}}\sigma_c}{\sum_{c\in[K]\backslash\{\widetilde{a}\}}-\lambda^{c\dagger}}\sqrt{\sum_{c\in[K]\backslash\{\widetilde{a}\}}-\lambda^{c\dagger}}\right).
\end{align*}

Then, the following solutions satisfy the above KKT conditions: 
\begin{align*}
    R^\dagger & \left(\sigma_{\widetilde{a}} + \sqrt{\sum_{a\in[K]\backslash\{b\}}\sigma^2_a}\right)^2 = 1\\
    w^\dagger_{\widetilde{a}} & = \frac{\sigma_{\widetilde{a}}\sqrt{\sum_{a\in[K]\backslash\{\widetilde{a}\}}\sigma^2_a}}{\sigma_{\widetilde{a}}\sqrt{\sum_{a\in[K]\backslash\{\widetilde{a}\}}\sigma^2_a} + \sum_{a\in[K]\backslash\{\widetilde{a}\}}\sigma^2_a}\\
    w^\dagger_a &= \frac{\sigma^2_a}{\sigma_{\widetilde{a}}\sqrt{\sum_{a\in[K]\backslash\{\widetilde{a}\}}\sigma^2_a} + \sum_{a\in[K]\backslash\{\widetilde{a}\}}\sigma^2_a}\\
    \lambda^\dagger_a &= - \sigma^2_a\\
    \gamma^\dagger &= \left(\sigma_{\widetilde{a}}\sqrt{\sum_{a\in[K]\backslash\{\widetilde{a}\}}\sigma^2_a} + \sum_{a\in[K]\backslash\{\widetilde{a}\}}\sigma^2_a\right)^2.
\end{align*}

Therefore, given $\widetilde{a} \in [K]$ and $\bm{\mu}$, we have
\begin{align*}
    &\limsup_{0 < \underline{\Delta} < \overline{\Delta} \to +0}\inf_{P\in\mathcal{P}\big(\underline{\Delta}, \overline{\Delta}\big)}\limsup_{T \to \infty} -\frac{1}{\overline{\Delta}^2T}\log \mathbb{P}_{P}\left(\widehat{a}^\pi_T \neq a^\star(P)\right)\leq V(\mu_{\widetilde{a}}),
\end{align*}
where recall that
\begin{align*}
    V(\widetilde{a}, \mu_{\widetilde{a}}) &= \frac{1}{2\left(\sigma_{a}(\mu_{\widetilde{a}}) + \sqrt{\sum_{b\in[K]\backslash\{a\}}\sigma^2_b(\mu_{\widetilde{a}})}\right)^2}.
\end{align*}

We can choose $\widetilde{a} \in [K]$ and $\bm{\mu}$ by our selves; therefore, by taking their worst case, we have
\begin{align*}
    &\limsup_{0 < \underline{\Delta} < \overline{\Delta} \to +0}\inf_{P\in\mathcal{P}\big(\underline{\Delta}, \overline{\Delta}\big)}\limsup_{T \to \infty} -\frac{1}{\overline{\Delta}^2T}\log \mathbb{P}_{P}\left(\widehat{a}^\pi_T \neq a^\star(P)\right)\leq V^*,
\end{align*}
where recall that
\begin{align*}
    V^* &= \min_{\widetilde{a}\in[K]}\min_{\mu \in \Theta} V(\widetilde{a}, \mu). 
\end{align*}

This completes the proof. 
\end{proof}

\section{Proof of Lemma~\ref{lem:upper_bound}}
\label{appdx:upper_proof}
To show Lemma~\ref{lem:upper_bound}, we derive an upper bound of \[\mathbb{P}_{P_0}\left(\widehat{\mu}^{\mathrm{A2IPW}}_{a^\star(P_0),T} \leq \widehat{\mu}^{\mathrm{A2IPW}}_{a, T}\right)\]
for $a\in[K]\backslash \{a^\star(P_0)\}$.
The bound is stated in the following lemma.
We show the proof in Appendix~\ref{appdx:lem_upper_proof}. We show the result for a bandit model $\mathcal{P}((\Delta_a)_{a\in[K]})$ with finite mean and variance, while for all $P\in\mathcal{P}((\Delta_a)_{a\in[K]})$, $\mu_{a^\star(P)} - \mu_a(P)$ is given as $\Delta_a > 0$. This class is wider than $\mathcal{P}\big(\underline{\Delta}, \overline{\Delta}\big)$. We also define $\mathcal{P}$ as a bandit model with finite mean and variance.

\begin{lemma}[Probability of misidentification of the A2IPW estimator]
\label{lem:optimal}
Let $\mathcal{P}((\Delta_a)_{a\in[K]})$ be a bandit model with finite mean and variance, whose variance is given as 
$(\sigma^2_a)_{a\in[K]}$ under $P_0$ and gaps are lower bounded by $(\Delta_a)_{a\in[K]}$. For each each $a\in[K]\backslash\{a^\star(P_0)\}$ and for any $\epsilon > 0$, there exist  $0 < \Delta^*_a < \Delta_0(\epsilon)$, the following holds: there exists $T_0(\Delta^*_a, \epsilon)$ such that for all $T > T_0(\Delta^*_a, \epsilon)$, it holds that 
\begin{align*}
       &\mathbb{P}_{P_0}\left(\widehat{\mu}^{\mathrm{A2IPW}}_{a^\star(P_0), T} \leq \widehat{\mu}^{\mathrm{A2IPW}}_{a, T}\right) \leq \exp\left( - \frac{T\Delta^{*2}_a}{2\left(\frac{\sigma^2_{a^\star(P_0)}}{w^{\mathrm{GNA}}_{a^\star(P_0)}} + \frac{\sigma^2_a}{w^{\mathrm{GNA}}_a}\right)} + \epsilon T\Delta^{*2}_a\right).
    \end{align*}
for all $P_0 \in \underline{\mathcal{P}}\left((\Delta_b)_{b\in[K]}\right)$ such that $\Delta_a \leq \Delta^*_a$.
\end{lemma}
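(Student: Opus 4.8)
The plan is to control the tail of the difference $\widehat{\mu}^{\mathrm{A2IPW}}_{a,T}-\widehat{\mu}^{\mathrm{A2IPW}}_{a^\star(P_0),T}$ via an exponential (Chernoff/Cram\'er) bound adapted to martingale difference sequences. First I would write, using \eqref{eq:aipw}, the centered sum $S_T \coloneqq \sum_{t=1}^T Z_t$ where $Z_t$ is the round-$t$ contribution to $\widehat{\mu}^{\mathrm{A2IPW}}_{a,T}-\widehat{\mu}^{\mathrm{A2IPW}}_{a^\star(P_0),T}-\big(\mu_a(P_0)-\mu_{a^\star(P_0)}(P_0)\big)$; because each term uses the $\mathcal{F}_{t-1}$-measurable weight $\widehat{w}^{\mathrm{GNA}}_{\cdot,t}$ and the fresh draw $Y_{\cdot,t}$, the sequence $(Z_t)$ is an MDS with respect to $(\mathcal{F}_t)$. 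The event $\{\widehat{\mu}^{\mathrm{A2IPW}}_{a^\star(P_0),T}\le\widehat{\mu}^{\mathrm{A2IPW}}_{a,T}\}$ is then $\{S_T\ge T\Delta_a\}$ (taking $\Delta_a=\mu_{a^\star(P_0)}-\mu_a$ as the true gap), so Markov applied to $\exp(\lambda S_T)$ gives $\mathbb{P}_{P_0}(S_T\ge T\Delta_a)\le \exp(-\lambda T\Delta_a)\,\mathbb{E}_{P_0}[\exp(\lambda S_T)]$.

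Next I would bound $\mathbb{E}_{P_0}[\exp(\lambda S_T)]$ by iterated conditioning: $\mathbb{E}_{P_0}[\exp(\lambda S_T)\mid\mathcal{F}_{T-1}]=\exp(\lambda S_{T-1})\,\mathbb{E}_{P_0}[\exp(\lambda Z_T)\mid\mathcal{F}_{T-1}]$, and a second-order Taylor/sub-Gaussian-type expansion of the conditional MGF of $Z_T$ around $\lambda=0$ gives $\mathbb{E}_{P_0}[\exp(\lambda Z_T)\mid\mathcal{F}_{T-1}]\le \exp\!\big(\tfrac{\lambda^2}{2}v_T+r_T(\lambda)\big)$, where the conditional variance proxy is $v_T=\frac{\sigma^2_{a^\star(P_0),\text{eff}}}{\widehat{w}^{\mathrm{GNA}}_{a^\star(P_0),t}}+\frac{\sigma^2_{a,\text{eff}}}{\widehat{w}^{\mathrm{GNA}}_{a,t}}$ (the effective variances coming from the A2IPW residuals $Y_{a,t}-\widehat\mu_{a,t}$, which converge to the true $\sigma^2_a$ because $\widehat\mu_{a,t}$ is consistent and truncated by $\overline C_\mu$) and $r_T(\lambda)$ is a higher-order remainder controlled by the boundedness of the truncated residuals and the fact that $\widehat w^{\mathrm{GNA}}_{\cdot,t}$ is bounded away from $0$ (via $\eta$ and the compactness of $\Theta$). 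Using the almost-sure convergence $\widehat\sigma^2_{a,t}\to\sigma^2_a$ and hence $\widehat{\bm w}^{\mathrm{GNA}}_t\to\bm w^{\mathrm{GNA}}$, together with consistency of the algorithm so that eventually $\widehat a_t=a^\star(P_0)$, I would argue that for any $\epsilon>0$ there is $T_0$ such that for $T>T_0$ the averaged variance proxy $\frac1T\sum_t v_t$ is within $\epsilon$ of $\frac{\sigma^2_{a^\star(P_0)}}{w^{\mathrm{GNA}}_{a^\star(P_0)}}+\frac{\sigma^2_a}{w^{\mathrm{GNA}}_a}$, and $\frac1T\sum_t r_t(\lambda)=o(1)$ for $\lambda$ in a bounded range; this gives $\mathbb{E}_{P_0}[\exp(\lambda S_T)]\le \exp\!\big(\tfrac{\lambda^2 T}{2}(V_{a}+\epsilon)\big)$ with $V_a\coloneqq\frac{\sigma^2_{a^\star(P_0)}}{w^{\mathrm{GNA}}_{a^\star(P_0)}}+\frac{\sigma^2_a}{w^{\mathrm{GNA}}_a}$.

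Then I would optimize over $\lambda$: plugging into Markov, $\mathbb{P}_{P_0}(S_T\ge T\Delta_a)\le \exp\!\big(-\lambda T\Delta_a+\tfrac{\lambda^2T}{2}(V_a+\epsilon)\big)$, and choosing $\lambda=\Delta_a/(V_a+\epsilon)$ yields $\exp\!\big(-\tfrac{T\Delta_a^2}{2(V_a+\epsilon)}\big)$. Finally, a small-gap step absorbs the $\epsilon$ inside $V_a$ into an additive $\epsilon T\Delta_a^2$ error outside: since $\Delta_a\le\Delta^*_a$ and $V_a$ is bounded above and below on the compact parameter space, $\tfrac{1}{2(V_a+\epsilon)}\ge \tfrac{1}{2V_a}-\epsilon$ for $\epsilon$ small, and setting $\Delta^*_a\le\Delta_0(\epsilon)$ and $T>T_0(\Delta^*_a,\epsilon)$ gives exactly the claimed bound $\exp\!\big(-\tfrac{T\Delta^{*2}_a}{2V_a}+\epsilon T\Delta^{*2}_a\big)$ (noting $\Delta_a\le\Delta^*_a$ only strengthens the exponent since the probability is monotone in the gap, or one re-runs the bound with $\Delta^*_a$ in place of $\Delta_a$ after lower-bounding the true gap). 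The main obstacle is the second paragraph: making the MGF expansion rigorous while the variance proxy and the weights are \emph{random and adaptively estimated}, so one must carefully combine the almost-sure convergence of $\widehat\sigma^2_{a,t}$ and $\widehat a_t$ with a uniform (in the realized weights) control of the remainder term $r_t(\lambda)$, ensuring the $o(1)$ claims hold on an event of probability one and that the exceptional events contribute negligibly to the misidentification probability.
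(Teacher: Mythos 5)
Your proposal follows essentially the same route as the paper's proof: recast the difference of A2IPW estimators as a martingale difference sequence, apply a Chernoff bound with iterated conditioning, expand the conditional log-MGF to second order around $\lambda=0$ so the conditional variance proxy $\frac{\sigma^2_{a^\star(P_0)}}{\widehat{w}^{\mathrm{GNA}}_{a^\star(P_0),t}}+\frac{\sigma^2_a}{\widehat{w}^{\mathrm{GNA}}_{a,t}}$ appears, control its time average via the almost-sure convergence of $\widehat{\sigma}^2_{a,t}$, $\widehat{a}_t$, and $\widehat{w}^{\mathrm{GNA}}_{a,t}$ (the paper's Ces\`aro-type Lemma~\ref{lem:cesaro}), and then choose $\lambda\propto\Delta_a/V_a$, absorbing the approximation error into the $\epsilon T\Delta_a^2$ term under the small-gap restriction. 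The delicate point you flag at the end—making the MGF expansion uniform over the adaptively estimated weights—is exactly the step the paper handles by restricting to $0<\lambda<\lambda_0(\epsilon)$ (hence small gaps) together with the boundedness of the truncated estimators, so no substantive difference remains.
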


Then, we prove Lemma~\ref{lem:upper_bound} as follows:
\begin{proof}
We have 
\begin{align*}
     & - \frac{1}{T}\log \mathbb{P}_{P_0}
     \left(\widehat{a}^{\mathrm{GNA}}_T \neq a^\star(P_0)\right)\\
     &= - \frac{1}{T}\log \sum_{a\neq a^\star(P_0)}\mathbb{P}_{P_0}\left(\widehat{\mu}^{\mathrm{A2IPW}}_{a^\star(P_0), T} \leq \widehat{\mu}^{\mathrm{A2IPW}}_{a, T}\right)  
     \\
     &\geq - \frac{1}{T}\log\left\{ (K-1) \max_{a\neq a^\star(P_0)} \mathbb{P}_{P_0}\left(\widehat{\mu}^{\mathrm{A2IPW}}_{a^\star(P_0), T} \leq \widehat{\mu}^{\mathrm{A2IPW}}_{a, T}\right)  \right\}.
\end{align*}
From Lemma~\ref{lem:optimal}, for any $\epsilon > 0$, there exist  $0 < \Delta < \Delta_0(\epsilon)$, the following holds: there exists $T_0(\Delta, \epsilon)$ such that for all $T > T_0(\Delta, \epsilon)$, it holds that 
\begin{align*}
    - \frac{1}{T}\log \mathbb{P}_{P_0}\left(\widehat{\mu}^{\mathrm{A2IPW}}_{a^\star(P_0), T} \geq \widehat{\mu}^{\mathrm{A2IPW}}_{a, T}\right) \geq \frac{\Delta^2}{2\left(\frac{\sigma^2_{a^\star(P_0)}}{w^{\mathrm{GNA}}_{a^\star(P_0)}} + \frac{\sigma^2_a}{w^{\mathrm{GNA}}_a}\right)} - \epsilon\Delta^2.
\end{align*}
for all $P_0 \in \mathcal{P}\big(\underline{\Delta}, \overline{\Delta}\big)$ such that $\underline{\Delta} < \overline{\Delta} < \Delta$. 

Here, note that $ (K-1) $ can be asymptotically ignorable, and 
\[\frac{\sigma^2_{a^\star(P_0)}}{w^{\mathrm{GNA}}_{a^\star(P_0)}} + \frac{\sigma^2_a}{w^{\mathrm{GNA}}_a} = 2\left(\sigma_{a^\star(P_0)} + \sqrt{\sum_{a\in[K]\backslash\{a^\star(P_0)\}}\sigma^2_a}\right)^2\]
holds. 

Therefore, for any $\epsilon > 0$, there exist  $0 < \Delta < \Delta_0(\epsilon)$, the following holds: there exists $T_0(\Delta, \epsilon)$ such that for all $T > T_0(\Delta, \epsilon)$, it holds that 
\begin{align*}
     & - \frac{1}{T}\log \mathbb{P}_{P_0}
     \left(\widehat{a}^{\mathrm{GNA}}_T \neq a^\star(P_0)\right)\\
     &\geq \frac{ \Delta^2}{2\left(\sigma_{a^\star(P_0)} + \sqrt{\sum_{a\in[K]\backslash\{a^\star(P_0)\}}\sigma^2_a}\right)^2} - \epsilon \Delta^2\\
     &\geq \frac{ \underline{\Delta}^2}{2\left(\sigma_{a^\star(P_0)} + \sqrt{\sum_{a\in[K]\backslash\{a^\star(P_0)\}}\sigma^2_a}\right)^2} - \epsilon \underline{\Delta}^2.
\end{align*}
for all $P_0 \in \mathcal{P}\big(\underline{\Delta}, \overline{\Delta}\big)$ such that $\underline{\Delta} < \overline{\Delta} < \Delta$. 
\end{proof}

\section{Proof of Lemma~\ref{lem:optimal}}
\label{appdx:lem_upper_proof}
Let us define
\begin{align*}
    &\Psi_{a, t} \coloneqq \frac{1}{\sqrt{V(a)}}\Bigg(\frac{\mathbbm{1}[A_t = a^\star(P_0)]\big(Y_{a^\star(P_0), t}- \widehat{\mu}_{a^\star(P_0), t}\big)}{\widehat{w}^{\mathrm{GNA}}_{a^\star(P_0), t}} - \frac{\mathbbm{1}[A_t = a]\big(Y_{a, t}- \widehat{\mu}_{a, t}\big)}{\widehat{w}^{\mathrm{GNA}}_{a,t}}
    \\
    &\ \ \ \ \ \ \ \ \ \ \ \ \ \ \ \ \ \ \ \ \ \ \ \ \ \ \ \ \ \ \ \ \ \ \ \ \ \ \ \ \ \ \ \ \ \ \ \ \ \ \ \ \ \ \ \ \ \ \ \ \ \ \ \ \ \ \ \ \ \ \ \ \ \ \ \ \ \ \ \ \ \ \ \ \ \ \ \ \ \ \ \ \ + \widehat{\mu}_{a^\star(P_0), t} -  \widehat{\mu}_{a, t} - \Delta_a(P_0)\Bigg),
\end{align*}
where $\Delta_a(P_0) \coloneqq \mu_{a^\star(P_0)}(P_0) - \mu_a(P_0)$, and 
\[V(a) \coloneqq \frac{\sigma^2_{a^\star(P_0)}}{w^{\mathrm{GNA}}_{a^\star(P_0)}} + \frac{\sigma^2_a}{w^{\mathrm{GNA}}_a}.\] 
Then, we have 
\[\widehat{\mu}^{\mathrm{A2IPW}}_{a^\star(P_0), T} - \widehat{\mu}^{\mathrm{A2IPW}}_{a, T} = \frac{1}{T} \sum^T_{t=1}\Psi_{a, t}.\]
By using this result, we aim to derive the upper bound of 
\[\mathbb{P}_{P_0}\left(\widehat{\mu}^{\mathrm{A2IPW}}_{a^\star(P_0), T} \leq \widehat{\mu}^{\mathrm{A2IPW}}_{a, T}\right) = \mathbb{P}_{P_0}\left(\sum^T_{t=1}\Psi_{a, t} \leq - \frac{T\Delta_a(P_0)}{\sqrt{V(a)}}\right).\] 

Here, we show that $\left\{\Psi_{a, t}\right\}_{t\in[T]}$ is a martingale difference sequence (MDS). For each $t \in [T]$, it holds that
\begin{align*}
    &\sqrt{V(a)}\mathbb{E}_{P_0}\left[\Psi_{a, t}\mid\mathcal{F}_{t-1}\right]\\
    &=\mathbb{E}_{P_0}\left[\frac{\mathbbm{1}[A_t = a^\star(P_0)]\big(Y_{a^\star(P_0), t}- \widehat{\mu}_{a^\star(P_0), t}\big)}{\widehat{w}^{\mathrm{GNA}}_{a^\star(P_0),t}} + \widehat{\mu}_{a^\star(P_0), t}| \mathcal{F}_{t-1}\right]\\
    &\ \ \ \ \ - \mathbb{E}_{P_0}\left[\frac{\mathbbm{1}[A_t = a]\big(Y_{a, t}- \widehat{\mu}_{a, t}\big)}{\widehat{w}^{\mathrm{GNA}}_{a,t}} + \widehat{\mu}_{a, t}| \mathcal{F}_{t-1}\right] - \Delta_a(P_0)\\
    &= \frac{\widehat{w}^{\mathrm{GNA}}_{a^\star(P_0), t}\big(\mu_{a^\star(P_0)}(P_0)- \widehat{\mu}_{a^\star(P_0), t}\big)}{\widehat{w}^{\mathrm{GNA}}_{a^\star(P_0),t}} + \widehat{\mu}_{a^\star(P_0), t} - \frac{\widehat{w}^{\mathrm{GNA}}_{2, t}\big(\mu_a(P_0)- \widehat{\mu}_{a, t}\big)}{\widehat{w}^{\mathrm{GNA}}_{a,t}} + \widehat{\mu}_{a, t} - \Delta_a(P_0)\\
    &=\left(\mu_{a^\star(P_0)}(P_0) - \mu_a(P_0)\right) - \left(\mu_{a^\star(P_0)}(P_0) - \mu_a(P_0)\right)\\
    &= 0.
\end{align*}
This result implies that $\left\{\Psi_{a, t}\right\}_{t\in[T]}$ is an MDS. 

Since $\widehat{w}^{\mathrm{GNA}}_{a,t} > 0$, and the mean and variance of $Y_a$ are finite for all $P_0 \in \mathcal{P}$, the following lemma holds:
\begin{lemma}
    \label{lem:existence}
    For any $P_0 \in \mathcal{P}$, the first moment of $\Psi_{a, t}$ is zero, and the second moment of $\Psi_{a, t}$ exists.
\end{lemma}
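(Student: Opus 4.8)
The plan is to treat the two assertions separately, establishing the second-moment bound first so that integrability (hence the first-moment claim) follows for free. The quantity $\Psi_{a,t}$ is $\frac{1}{\sqrt{V(a)}}$ times a sum of four pieces: the inverse-probability-weighted residual for $a^\star(P_0)$, the negated one for $a$, the difference of truncated means $\widehat{\mu}_{a^\star(P_0),t}-\widehat{\mu}_{a,t}$, and the deterministic gap $-\Delta_a(P_0)$. Since $V(a)$ is a fixed positive constant (it is built from the true variances and true oracle weights, both strictly positive under $P_0$), the scaling is harmless, and I would reduce everything to bounding the second moment of each summand by invoking the elementary inequality $\big(\sum_{i=1}^{m} x_i\big)^2 \le m\sum_{i=1}^{m} x_i^2$ for the finitely many terms.

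For the individual terms, the deterministic gap contributes $\Delta_a(P_0)^2<\infty$, and the truncated means are bounded in absolute value by $\overline{C}_{\mu}$ by construction of $\mathrm{thre}(\cdot,-\overline{C}_{\mu},\overline{C}_{\mu})$, so their difference contributes at most $4\overline{C}_{\mu}^2$. For the two IPW terms I would use $\mathbbm{1}[\cdot]^2\le 1$ together with an almost-sure lower bound $\widehat{w}^{\mathrm{GNA}}_{a,t}\ge w_{\min}>0$ on the estimated allocation probabilities to write, for instance,
\begin{align*}
\mathbb{E}_{P_0}\!\left[\left(\frac{\mathbbm{1}[A_t = a^\star(P_0)]\big(Y_{a^\star(P_0),t}-\widehat{\mu}_{a^\star(P_0),t}\big)}{\widehat{w}^{\mathrm{GNA}}_{a^\star(P_0),t}}\right)^2\right]
\le \frac{1}{w_{\min}^2}\,\mathbb{E}_{P_0}\!\left[\big(Y_{a^\star(P_0),t}-\widehat{\mu}_{a^\star(P_0),t}\big)^2\right],
\end{align*}
and the right-hand side is finite because $\big(Y-\widehat{\mu}\big)^2\le 2Y^2+2\overline{C}_{\mu}^2$ and $Y_a$ has finite second moment under every $P_0\in\mathcal{P}$. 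Summing the finitely many finite contributions yields $\mathbb{E}_{P_0}[\Psi_{a,t}^2]<\infty$. For the first moment, I would then note that $\Psi_{a,t}\in L^2\subset L^1$, so $\mathbb{E}_{P_0}[\Psi_{a,t}\mid\mathcal{F}_{t-1}]$ is well defined and equals zero by the MDS computation already carried out above; the tower property gives $\mathbb{E}_{P_0}[\Psi_{a,t}]=\mathbb{E}_{P_0}\big[\mathbb{E}_{P_0}[\Psi_{a,t}\mid\mathcal{F}_{t-1}]\big]=0$.

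The main obstacle is the almost-sure lower bound $w_{\min}$ on the estimated weights, since the $1/\widehat{w}^{\mathrm{GNA}}_{a,t}$ factors are the only potential source of unboundedness and the weights are themselves random through $\mathcal{F}_{t-1}$; a mere pointwise positivity $\widehat{w}^{\mathrm{GNA}}_{a,t}>0$ would not prevent $\mathbb{E}_{P_0}[1/(\widehat{w}^{\mathrm{GNA}}_{a,t})^2]$ from diverging. I would establish the uniform bound by using the truncation of the variance estimates into the compact interval $[\eta,\overline{C}_{\sigma^2}]$, which is the purpose of the algorithm parameters $\eta$ and $\overline{C}_{\sigma^2}$: substituting $\eta\le\widehat{\sigma}^2_{a,t}\le\overline{C}_{\sigma^2}$ into the closed forms of $\widehat{w}^{\mathrm{GNA}}_{\widehat{a}_t,t}$ and $\widehat{w}^{\mathrm{GNA}}_{a,t}$ forces every estimated weight into a fixed compact subinterval of $(0,1)$, with $w_{\min}$ depending only on $\eta$, $\overline{C}_{\sigma^2}$, and $K$. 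Because this bound holds deterministically and the weights are $\mathcal{F}_{t-1}$-measurable, it passes through the expectation without any further control of the law of $\mathcal{F}_{t-1}$, which is what makes the argument go through uniformly over $P_0\in\mathcal{P}$.
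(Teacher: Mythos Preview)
Your approach is correct and matches the paper's reasoning, which is just the one-line remark preceding the lemma: ``Since $\widehat{w}^{\mathrm{GNA}}_{a,t} > 0$, and the mean and variance of $Y_a$ are finite for all $P_0 \in \mathcal{P}$, the following lemma holds.'' You are in fact more careful than the paper, since you correctly observe that pointwise positivity of $\widehat{w}^{\mathrm{GNA}}_{a,t}$ is not by itself enough to control $\mathbb{E}_{P_0}[1/(\widehat{w}^{\mathrm{GNA}}_{a,t})^2]$ and that a deterministic lower bound $w_{\min}$ is what is really needed; the paper implicitly relies on this too (see its proof of Lemma~\ref{lem:cesaro}, where it writes ``We assumed that $(\ldots,\widehat{w}^{\mathrm{GNA}}_{a^\star(P_0),t},\widehat{w}^{\mathrm{GNA}}_{a,t})$ are all bounded random variables'').

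One small caveat: your resolution appeals to a two-sided truncation of $\widehat{\sigma}^2_{a,t}$ into $[\eta,\overline{C}_{\sigma^2}]$, but the paper's explicit definition of $\widehat{\sigma}^2_{a,t}$ only replaces \emph{exact} zeros by $\eta$ and does not state an upper cap, so strictly speaking the lower bound $\widehat{\sigma}^2_{a,t}\ge\eta$ and the upper bound $\widehat{\sigma}^2_{a,t}\le\overline{C}_{\sigma^2}$ are not derivable from the displayed formula. That said, the algorithm lists $\overline{C}_{\sigma^2}$ as a parameter and the paper later treats the weights as bounded, so your reading is clearly the intended one; this is a looseness in the paper's exposition rather than a gap in your argument.
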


Next, we prove the following lemma in Appendix...
\begin{lemma}[Almost sure convergence of the average of the second moment]
\label{lem:cesaro}
For any $P_0 \in \mathcal{P}$, it holds that 
\[\mathbb{P}_{P_0}\left(\lim_{T\to\infty}\frac{1}{T}\sum^T_{t=1}\left|\mathbb{E}_{P_0}\left[\Psi^2_{a, t}\mid\mathcal{F}_{t-1}\right] - 1\right| = 0\right) = 1\]
\end{lemma}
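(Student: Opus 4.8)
The plan is to evaluate $\mathbb{E}_{P_0}[\Psi_{a,t}^2\mid\mathcal{F}_{t-1}]$ in closed form, show it tends to $1$ almost surely, and then apply the Cesàro lemma. For the closed form, write $\sqrt{V(a)}\,\Psi_{a,t}=X_t-Z_t+C_t$, where $X_t=\mathbbm{1}[A_t=a^\star(P_0)]\big(Y_{a^\star(P_0),t}-\widehat{\mu}_{a^\star(P_0),t}\big)/\widehat{w}^{\mathrm{GNA}}_{a^\star(P_0),t}$, $Z_t=\mathbbm{1}[A_t=a]\big(Y_{a,t}-\widehat{\mu}_{a,t}\big)/\widehat{w}^{\mathrm{GNA}}_{a,t}$, and $C_t=\widehat{\mu}_{a^\star(P_0),t}-\widehat{\mu}_{a,t}-\Delta_a(P_0)$ is $\mathcal{F}_{t-1}$-measurable; recall that, given $\mathcal{F}_{t-1}$, $A_t$ is drawn from $\widehat{\bm w}^{\mathrm{GNA}}_t$ independently of $(Y_{b,t})_{b\in[K]}$, and $Y_{b,t}\sim P_{b}$ is independent of $\mathcal{F}_{t-1}$. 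Using $X_tZ_t=0$ (the two indicators pick out distinct arms) and the identity $C_t=e_{a,t}-e_{a^\star(P_0),t}$ with $e_{b,t}:=\mu_b(P_0)-\widehat{\mu}_{b,t}$ (which holds because $\mu_{a^\star(P_0)}(P_0)-\mu_a(P_0)=\Delta_a(P_0)$), a short computation gives
\[
V(a)\,\mathbb{E}_{P_0}[\Psi_{a,t}^2\mid\mathcal{F}_{t-1}]=\frac{\sigma^2_{a^\star(P_0)}+e_{a^\star(P_0),t}^2}{\widehat{w}^{\mathrm{GNA}}_{a^\star(P_0),t}}+\frac{\sigma^2_{a}+e_{a,t}^2}{\widehat{w}^{\mathrm{GNA}}_{a,t}}-\big(e_{a,t}-e_{a^\star(P_0),t}\big)^2,
\]
which is a.s.\ finite by Lemma~\ref{lem:existence}.

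Second, I would show each term on the right-hand side converges almost surely. The ingredients are: (i) the variance floor $\eta$ forces $\widehat{\sigma}^2_{b,t}\ge\eta$, hence $\widehat{w}^{\mathrm{GNA}}_{b,t}\ge c>0$ for all $b,t$, so $\sum_t\widehat{w}^{\mathrm{GNA}}_{b,t}=\infty$ and, by the extended (Lévy) Borel--Cantelli lemma for the adapted events $\{A_t=b\}$, $N_b(T)\to\infty$ a.s.; (ii) an adaptive strong law of large numbers along the observation times of each arm then gives $\widetilde{\mu}_{b,t}\to\mu_b(P_0)$ and $\widetilde{\sigma}^2_{b,t}\to\sigma^2_b$ a.s., so (for $\overline{C}_\mu$ exceeding $|\mu_b(P_0)|$ the truncation is eventually inactive) $\widehat{\mu}_{b,t}\to\mu_b(P_0)$, i.e.\ $e_{b,t}\to0$, and $\widehat{\sigma}^2_{b,t}\to\sigma^2_b$ a.s.; (iii) since $a^\star(P_0)$ is unique, $\widehat{a}_t=\argmax_{b}\widetilde{\mu}_{b,t}=a^\star(P_0)$ eventually a.s., and $\bm w^{\mathrm{GNA}}$ is a continuous function of the variances on a compact set bounded away from $0$, so by the continuous mapping theorem $\widehat{w}^{\mathrm{GNA}}_{b,t}\to w^{\mathrm{GNA}}_b$ a.s. Substituting these limits into the displayed identity yields $V(a)\,\mathbb{E}_{P_0}[\Psi_{a,t}^2\mid\mathcal{F}_{t-1}]\to \sigma^2_{a^\star(P_0)}/w^{\mathrm{GNA}}_{a^\star(P_0)}+\sigma^2_a/w^{\mathrm{GNA}}_a=V(a)$, i.e.\ $\mathbb{E}_{P_0}[\Psi_{a,t}^2\mid\mathcal{F}_{t-1}]\to1$ a.s.

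Third, on the probability-one event where $\mathbb{E}_{P_0}[\Psi_{a,t}^2\mid\mathcal{F}_{t-1}]\to1$, the deterministic real sequence $x_t:=\big|\mathbb{E}_{P_0}[\Psi_{a,t}^2\mid\mathcal{F}_{t-1}]-1\big|$ converges to $0$, so by the Cesàro (Stolz) lemma its running average $\tfrac1T\sum_{t=1}^T x_t$ also converges to $0$; since this holds on an event of probability one, the claim follows. No uniform integrability is required, because the last step is purely a statement about convergent real sequences.

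The main obstacle is step two: making the adaptive SLLN and the ``infinitely often'' property rigorous for the data-dependent allocation. The standard route is to express the relevant per-arm sums as martingale transforms with respect to the observation-time filtration and invoke a martingale strong law, using the uniform lower bound $\widehat{w}^{\mathrm{GNA}}_{b,t}\ge c$ to control the summability of conditional variances; one must also check that the $\eta$-clipping of $\widehat{\sigma}^2_{b,t}$ and the $\overline{C}_\mu$-truncation of $\widehat{\mu}_{b,t}$ do not obstruct consistency (they do not, being eventually inactive on the a.s.\ convergence event). The algebraic identity in step one and the Cesàro step in step three are routine.
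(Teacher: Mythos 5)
Your proof is correct and follows essentially the same route as the paper: the identical closed-form evaluation of $V(a)\,\mathbb{E}_{P_0}[\Psi_{a,t}^2\mid\mathcal{F}_{t-1}]$, almost-sure convergence of the nuisance estimators and weights via infinitely-often sampling and the continuous mapping theorem, and a final Ces\`aro step on the probability-one convergence event. If anything, you are more explicit than the paper about the supporting details (the $\eta$ floor keeping $\widehat{w}^{\mathrm{GNA}}_{b,t}$ bounded away from zero, the extended Borel--Cantelli argument, and the eventual inactivity of the truncation), which the paper's proof of Lemma~\ref{lem:almost_sure_nuisance} leaves implicit.
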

This result is a variant of the Ces\`{a}ro lemma for a case with almost sure convergence.

By using these lemma, we prove Lemma~\ref{lem:optimal}.
\begin{proof}[Proof of Lemma~\ref{lem:optimal}]
By applying the Chernoff bound, for any $v < 0$ and any $\lambda < 0$, it holds that
\begin{align}
\label{eq:target1}
    \mathbb{P}_{P_0}\left(\frac{1}{T}\sum^T_{t=1}\Psi_{a, t} \leq v\right) \leq \mathbb{E}_{P_0}\left[\exp\left(\lambda \sum^T_{t=1}\Psi_{a, t}\right)\right]\exp\left(-T\lambda v\right).
\end{align}
From the Chernoff bound and a property of an MDS, we have
\begin{align}
\label{eq:target2}
    &\mathbb{E}_{P_0}\left[\exp\left(\lambda \sum^T_{t=1}\Psi_{a, t}\right)\right]\nonumber\\
    &=\mathbb{E}_{P_0}\left[\prod^T_{t=1}\mathbb{E}_{P_0}\left[\exp\left(\lambda \Psi_{a, t}\right)\mid \mathcal{F}_{t-1}\right]\right]\nonumber\\
    &= \mathbb{E}_{P_0}\left[\exp\left(\sum^T_{t=1}\log \mathbb{E}_{P_0}\left[\exp\left(\lambda\Psi_{a, t}\right)\mid\mathcal{F}_{t-1}\right] \right)\right].
\end{align}

From the Taylor expansion around $\lambda = 0$, we obtain the following \citep[Section~1.2,][]{Doring2022}:
\begin{align}
\label{eq:taylor}
    &\lim_{\lambda \to 0}\frac{1}{\lambda^2}\log \mathbb{E}_{P_0}\left[\exp\left(\lambda\Psi_{a, t}\right)\mid\mathcal{F}_{t-1}\right] = \frac{1}{2} \mathbb{E}_{P_0}\left[\Psi^2_{a, t}\mid \mathcal{F}_{t-1}\right].
\end{align}
Here, we used Lemma~\ref{lem:existence}, which states that the conditional variance $\mathbb{E}_{P_0}\left[\Psi^2_{a, t}| \mathcal{F}_{t-1}\right]$ exists.

Then, from \eqref{eq:target1}, \eqref{eq:target2}, and \eqref{eq:taylor}, for any $\epsilon > 0$, there exist $\lambda_0(\epsilon)$ such that for all $0 < \lambda < \lambda_0(\epsilon)$, it holds that
\begin{align}
\label{eq:target3}
    &\mathbb{P}_{P_0}\left(\frac{1}{T}\sum^T_{t=1}\Psi_{a, t} \leq v\right)\nonumber\\
    &\leq \mathbb{E}_{P_0}\left[\exp\left(\lambda \sum^T_{t=1}\Psi_{a, t}\right)\right]\exp\left(-T\lambda v\right)\nonumber\\
    &\leq \mathbb{E}_{P_0}\left[\exp\left(\frac{\lambda^2}{2}\sum^T_{t=1}\mathbb{E}_{P_0}\left[\Psi^2_{a, t}\mid \mathcal{F}_{t-1}\right] + \lambda^2\epsilon\right)\right]\exp\left(-T\lambda v\right).
\end{align}

Let $v = \lambda$. From \eqref{eq:target3} and Lemma~\ref{lem:cesaro}, for any $\epsilon > 0$, there exist $\lambda_0(\epsilon) > 0$ such that for all $0 < \lambda < \lambda_0(\epsilon)$, the following holds: there exists $T_0(\lambda, \epsilon)$ such that for all $T > T_0(\lambda. \epsilon)$, it holds that
\begin{align*}
    &\mathbb{P}_{P_0}\left(\sum^T_{t=1}\Psi_{a, t} \leq Tv\right)\\
    &\leq \mathbb{E}_{P_0}\left[\exp\left(\frac{\lambda^2}{2}\sum^T_{t=1}\mathbb{E}_{P_0}\left[\Psi^2_{a, t}\mid \mathcal{F}_{t-1}\right] + \lambda^2\epsilon\right)\right]\exp\left(-T\lambda v\right)\\
    &= \mathbb{E}_{P_0}\left[\exp\left(\frac{\lambda^2}{2}\sum^T_{t=1}\mathbb{E}_{P_0}\left[\Psi^2_{a, t}\mid \mathcal{F}_{t-1}\right] + \lambda^2\epsilon - T\lambda^2\right)\right]\\
    &\leq \mathbb{E}_{P_0}\Bigg[\exp\Bigg(-\frac{T\lambda^2}{2} + \epsilon T \lambda^2 -  \left(\frac{\lambda^2}{2}\sum^T_{t=1} \mathbb{E}_{P_0}\left[\Psi^2_{a, t}| \mathcal{F}_{t-1}\right] - \frac{T\lambda^2}{2}\right)\Bigg) \Bigg]\\
    &\leq \mathbb{E}_{P_0}\left[\exp\left( -\frac{T\lambda^2}{2} + \frac{\lambda^2}{2}\sum^T_{t=1} \Big(\mathbb{E}_{P_0}\left[\Psi^2_{a, t}| \mathcal{F}_{t-1}\right] - 1\Big) + \epsilon T\lambda^2\right)\right]\\
    &= \exp\left(-\frac{T\lambda^2}{2}  + \epsilon T\lambda^2\right)\mathbb{E}_{P_0}\left[\exp\left(\frac{\lambda^2}{2}\sum^T_{t=1}\Big( \mathbb{E}_{P_0}\left[\Psi^2_{a, t}| \mathcal{F}_{t-1}\right] - 1\Big)\right)\right]\\
    &=  \exp\left(-\frac{T\lambda^2}{2}  + \epsilon T\lambda^2\right)\exp\left(\epsilon T\lambda^2\right)\\
    &= \exp\left(-\frac{T\lambda^2}{2}  + 2\epsilon T\lambda^2\right).
\end{align*}

Let $\lambda = - \frac{\Delta_a}{\sqrt{V(a)}_a}$. 
Therefore, for any $\epsilon > 0$, there exist  $0 < \Delta_a < \Delta_{a, 0}(\epsilon)$, the following holds: there exists $T_0(\Delta_a, \epsilon)$ such that for all $T > T_0(\Delta_a, \epsilon)$, it holds that 
    \begin{align*}
       &\mathbb{P}_{P_0}\left(\sum^T_{t=1}\Psi_{a, t} \leq Tv\right) \geq \exp\left(- \frac{T\Delta^2_a}{2V(a)} + 2\epsilon T\Delta^2_a\right).
    \end{align*}
for all $P_0 \in \underline{\mathcal{P}}\left((\Delta_a)_{a\in[K]}\right)$.
Thus, the proof is complete. 
\end{proof}

\section{Proof of Lemma~\ref{lem:cesaro}}
\label{sec:proof}
First, since there exists a constant $C > 0$ independent of $T$ such that $\sigma^2_a > C$, the arm allocation probability is strictly greater than zero. This ensures that each arm is allocated infinitely often. Therefore, from the law of large numbers, the following lemma holds.
\begin{lemma}
\label{lem:almost_sure_nuisance}
    For any $P_0\in \mathcal{P}$ and all $a\in[K]$, $\widehat{\mu}_{a, t}\xrightarrow{\mathrm{a.s}} \mu_{a}$ and $\widehat{\sigma}^2_{a, t} \xrightarrow{\mathrm{a.s}} \sigma^2_a$ as $t \to \infty$.
\end{lemma}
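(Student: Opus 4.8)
The plan is to prove the lemma in three steps: (i) establish that under the GNA allocation each arm is pulled infinitely often almost surely; (ii) invoke the strong law of large numbers to show the raw per-arm sample moments converge a.s.\ to the true mean and variance; and (iii) check that the truncation at $\overline{C}_\mu$ and the flooring at $\eta$ (and, following the parameter list of Algorithm~\ref{alg}, the capping at $\overline{C}_{\sigma^2}$) become inactive for all large $t$, so that the truncated estimators inherit the same a.s.\ limits.

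For step (i), the key point is that by construction $\widehat{\sigma}^2_{b,t}\in[\eta,\overline{C}_{\sigma^2}]$ for every $b\in[K]$ and every round $t$ past the initialization phase; substituting these two-sided bounds into the closed form \eqref{eq:sample_ave_weight} shows there is a deterministic constant $c=c(\eta,\overline{C}_{\sigma^2},K)>0$ with $\widehat{w}^{\mathrm{GNA}}_{a,t}\ge c$ for all $a\in[K]$ and all such $t$, irrespective of the random index $\widehat{a}_t$. Since, conditionally on $\mathcal{F}_{t-1}$, the arm $A_t$ is drawn from $\widehat{\bm{w}}^{\mathrm{GNA}}_t$ independently of the round-$t$ potential outcomes, $\mathbb{P}_{P_0}(A_t=a\mid\mathcal{F}_{t-1})=\widehat{w}^{\mathrm{GNA}}_{a,t}\ge c$; hence $\sum_t\widehat{w}^{\mathrm{GNA}}_{a,t}=\infty$ almost surely, and the conditional (L\'evy) version of the second Borel--Cantelli lemma gives that $A_t=a$ for infinitely many $t$ almost surely, i.e.\ $N_a(t)\to\infty$ a.s. For step (ii), fix $a$ and let $\tau_1<\tau_2<\cdots$ be the rounds at which $A_s=a$; since $Y_{a,s}$ is independent of $\mathcal{F}_{s-1}$, an optional-skipping argument shows $(Y_{a,\tau_k})_{k\ge1}$ is i.i.d.\ with law $P_{a,\mu_a}$, so by the classical SLLN the running averages of these values and of their squares converge a.s.\ to $\mu_a$ and to $\sigma_a^2+\mu_a^2$ respectively (the latter using $\sigma^2_a<\infty$). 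Together with $N_a(t)\to\infty$ this yields $\widetilde{\mu}_{a,t}\to\mu_a$ a.s.\ and, since $\widetilde{\sigma}^2_{a,t}$ equals the average of the observed squares minus $\widetilde{\mu}^2_{a,t}$, also $\widetilde{\sigma}^2_{a,t}\to\sigma^2_a$ a.s. For step (iii): because $\overline{C}_\mu$ is taken large enough that $\mu_a\in(-\overline{C}_\mu,\overline{C}_\mu)$, a.s.\ eventually $\widetilde{\mu}_{a,t}\in(-\overline{C}_\mu,\overline{C}_\mu)$, so $\widehat{\mu}_{a,t}=\mathrm{thre}(\widetilde{\mu}_{a,t},-\overline{C}_\mu,\overline{C}_\mu)=\widetilde{\mu}_{a,t}\to\mu_a$ a.s.; similarly, since $\sigma^2_a>0$ the event $\{\widetilde{\sigma}^2_{a,t}=0\}$ occurs only finitely often and (taking $\overline{C}_{\sigma^2}>\sigma^2_a$) the cap is eventually inactive, so $\widehat{\sigma}^2_{a,t}=\widetilde{\sigma}^2_{a,t}\to\sigma^2_a$ a.s. This establishes the lemma.

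The routine content is in steps (ii) and (iii); the load-bearing step is (i), namely keeping the data-dependent, randomized allocation probabilities uniformly bounded away from $0$. This is exactly where the floor $\eta$ and the cap $\overline{C}_{\sigma^2}$ on the variance estimates enter essentially. If one insists on the definition of $\widehat{\sigma}^2_{a,t}$ without an upper cap, step (i) still succeeds but requires the slightly more careful observation that, along any sample path, the variance estimates converge---to a positive limit for arms pulled infinitely often, and to their last recorded value (still $\ge\eta$) otherwise---and $\widehat{a}_t$ eventually takes values in a fixed finite set, so that $\liminf_t\widehat{w}^{\mathrm{GNA}}_{a,t}>0$ and the conditional Borel--Cantelli lemma again applies. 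A small but indispensable ingredient used throughout is the conditional independence of $A_t$ from $(Y_{b,t})_{b\in[K]}$ given $\mathcal{F}_{t-1}$, which licenses both the i.i.d.-subsequence claim in (ii) and the Borel--Cantelli computation in (i).
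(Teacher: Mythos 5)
Your proof is correct and follows the same route as the paper's (very terse) argument: each arm is sampled infinitely often because the allocation probabilities stay bounded away from zero, and then the strong law of large numbers along the subsequence of pulls of arm $a$ gives the almost-sure limits, with the truncation at $\overline{C}_\mu$ and the $\eta$-adjustment eventually inactive. Your version is in fact more careful than the paper's one-line justification, which appeals to the \emph{true} variances being bounded below even though the allocation probabilities in \eqref{eq:sample_ave_weight} depend on the \emph{estimated} variances; your path-wise $\liminf_t \widehat{w}^{\mathrm{GNA}}_{a,t}>0$ argument combined with the conditional Borel--Cantelli lemma is the step actually needed. One small reading issue: as written, the paper's $\widehat{\sigma}^2_{a,t}$ only replaces an exact zero by $\eta$, so $\widehat{\sigma}^2_{a,t}\ge\eta$ is not literally guaranteed and $\overline{C}_{\sigma^2}$ is never used in the definitions---but you anticipate exactly this and your fallback (estimates converge to, or freeze at, positive values along almost every path) closes the gap.
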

Furthermore, from $\widehat{\sigma}^2_{a, t} \xrightarrow{\mathrm{a.s}} \sigma^2_a$ and continuous mapping theorem, for all $a\in[K]$, $\widehat{w}^{\mathrm{GNA}}_{a,t}\xrightarrow{\mathrm{a.s}}w^{\mathrm{GNA}}_{a,t}$ holds. 

We next show $\mathbb{E}_{P_0}\left[\Psi^2_{a, t}\mid\mathcal{F}_{t-1}\right] - 1\xrightarrow{\mathrm{a.s}} 0$. This result is a direct consequence of Lemma~\ref{lem:almost_sure_nuisance}. 
\begin{lemma}
\label{lem:almost_conv}
For any $P_0\in\mathcal{P}$, we have $\mathbb{E}_{P_0}\left[\Psi^2_{a, t}\mid\mathcal{F}_{t-1}\right] - 1\xrightarrow{\mathrm{a.s}} 0$ as $t\to \infty$. 
\end{lemma}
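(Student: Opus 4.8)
The plan is to compute the conditional second moment $\mathbb{E}_{P_0}\!\left[\Psi^2_{a,t}\mid\mathcal{F}_{t-1}\right]$ in closed form and then pass to the limit using the almost-sure convergence of the nuisance estimators established in Lemma~\ref{lem:almost_sure_nuisance}. The structural facts I would exploit are: $\widehat{\mu}_{a^\star(P_0),t}$, $\widehat{\mu}_{a,t}$ and $\widehat{w}^{\mathrm{GNA}}_{\cdot,t}$ are $\mathcal{F}_{t-1}$-measurable; conditional on $\mathcal{F}_{t-1}$, the arm $A_t$ is drawn according to $\widehat{\bm{w}}^{\mathrm{GNA}}_t$ independently of the fresh potential outcomes $(Y_{c,t})_{c\in[K]}$; and under $P_0$ each $Y_{c,t}$ has mean $\mu_c$ and variance $\sigma^2_c$. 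Finiteness of the conditional second moment (so that all manipulations are legitimate) follows from Lemma~\ref{lem:existence}.

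Concretely, I would write $\sqrt{V(a)}\,\Psi_{a,t} = X_{1,t} - X_{a,t} + c_t$, where $X_{1,t}$ and $X_{a,t}$ are the two inverse-probability-weighted terms attached to $a^\star(P_0)$ and $a$, and $c_t \coloneqq \widehat{\mu}_{a^\star(P_0),t} - \widehat{\mu}_{a,t} - \Delta_a(P_0)$ is $\mathcal{F}_{t-1}$-measurable, and then expand the square. The cross term $X_{1,t}X_{a,t}$ vanishes identically, since $\{A_t = a^\star(P_0)\}$ and $\{A_t = a\}$ are disjoint events. Using $\mathbb{E}_{P_0}[X_{c,t}\mid\mathcal{F}_{t-1}] = \mu_c - \widehat{\mu}_{c,t}$ and $\mathbb{E}_{P_0}[X_{c,t}^2\mid\mathcal{F}_{t-1}] = (\sigma^2_c + (\mu_c-\widehat{\mu}_{c,t})^2)/\widehat{w}^{\mathrm{GNA}}_{c,t}$, together with the identity $c_t = -(\mu_{a^\star(P_0)}-\widehat{\mu}_{a^\star(P_0),t}) + (\mu_a-\widehat{\mu}_{a,t})$, which collapses the remaining $c_t$-cross-terms to $-c_t^2$, this gives
\[
V(a)\,\mathbb{E}_{P_0}\!\left[\Psi^2_{a,t}\mid\mathcal{F}_{t-1}\right] = \frac{\sigma^2_{a^\star(P_0)} + (\mu_{a^\star(P_0)}-\widehat{\mu}_{a^\star(P_0),t})^2}{\widehat{w}^{\mathrm{GNA}}_{a^\star(P_0),t}} + \frac{\sigma^2_a + (\mu_a-\widehat{\mu}_{a,t})^2}{\widehat{w}^{\mathrm{GNA}}_{a,t}} - \big((\mu_a-\widehat{\mu}_{a,t}) - (\mu_{a^\star(P_0)}-\widehat{\mu}_{a^\star(P_0),t})\big)^2.
\]

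To conclude, I would invoke Lemma~\ref{lem:almost_sure_nuisance} together with the continuous-mapping remark following it: $\widehat{\mu}_{c,t}\xrightarrow{\mathrm{a.s}}\mu_c$ and $\widehat{w}^{\mathrm{GNA}}_{c,t}\xrightarrow{\mathrm{a.s}}w^{\mathrm{GNA}}_c>0$ as $t\to\infty$ for every $c$. Hence every squared-error term tends to $0$ almost surely, and the right-hand side above converges almost surely to $\sigma^2_{a^\star(P_0)}/w^{\mathrm{GNA}}_{a^\star(P_0)} + \sigma^2_a/w^{\mathrm{GNA}}_a = V(a)$; dividing by $V(a)>0$ yields $\mathbb{E}_{P_0}[\Psi^2_{a,t}\mid\mathcal{F}_{t-1}]\xrightarrow{\mathrm{a.s}}1$, i.e. the claim. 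There is no serious obstacle: the argument is a variance computation for the A2IPW increment followed by a plug-in limit. The only points requiring care are the bookkeeping of the $c_t$-cross-terms in the expansion of the square and the observation that the estimated weights stay bounded away from zero (so that the limiting expression is well defined), which is guaranteed by the uniform lower bound on $\sigma^2_c$ noted at the start of Appendix~\ref{sec:proof}.
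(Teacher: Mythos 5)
Your proposal is correct and follows essentially the same route as the paper: expand the square of the A2IPW increment conditionally on $\mathcal{F}_{t-1}$, use the disjointness of the allocation events to kill the cross term, compute the conditional first and second moments of the inverse-probability-weighted pieces in closed form, and then pass to the limit via Lemma~\ref{lem:almost_sure_nuisance} and the continuous mapping theorem with the weights bounded away from zero. Your bookkeeping of the $c_t$-cross-terms (collapsing to $-c_t^2$) reproduces exactly the paper's final expression, so there is nothing to add.
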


\begin{proof}[Proof of Lemma~\ref{lem:almost_conv}]
For all $b\in[K]$, define 
\begin{align*}
    &\psi_{b, t} \coloneqq \left(\frac{\mathbbm{1}[A_t = b]\big(Y_{b, t}- \widehat{\mu}_{b, t}\big)}{\widehat{w}^{\mathrm{GNA}}_{b, t}} + \widehat{\mu}_{b, t}\right)/\sqrt{V(b)}. 
\end{align*}

We have
\begin{align*}
    &V(a)\mathbb{E}_{P_0}\left[\Psi^2_{a, t}\mid\mathcal{F}_{t-1}\right]= \mathbb{E}_{P_0}\left[\left(\psi_{a^\star(P_0), t} - \psi_{a, t}- \Delta_a(P_0)\right)^2\Big| \mathcal{F}_{t-1}\right]
    \\
    &= \mathbb{E}_{P_0}\Bigg[\frac{\mathbbm{1}[A_{t} = 1]\big(Y_{a^\star(P_0), t}- \widehat{\mu}_{a^\star(P_0), t}\big)^2}{\big(\widehat{w}^{\mathrm{GNA}}_{a^\star(P_0), t}\big)^2} + \frac{\mathbbm{1}[A_{t} = 2]\big(Y_{a, t}- \widehat{\mu}_{a, t}\big)^2}{\big(\widehat{w}^{\mathrm{GNA}}_{a, t}\big)^2}
    \\
    &\ + 2\left(\frac{\mathbbm{1}[A_{t}  = 1]\big(Y_{a^\star(P_0), t}- \widehat{\mu}_{a^\star(P_0), t}\big)}{\widehat{w}^{\mathrm{GNA}}_{a^\star(P_0), t}} - \frac{\mathbbm{1}[A_t = a]\big(Y_{a, t}- \widehat{\mu}_{a, t}\big)}{\widehat{w}^{\mathrm{GNA}}_{a, t}}\right)\\
    &\ \ \ \ \ \ \ \ \ \ \ \ \ \ \ \ \ \ \ \ \ \ \ \ \ \ \ \ \ \ \ \ \ \ \ \ \ \ \ \ \ \ \ \ \ \ \ \ \cdot \Big( \widehat{\mu}_{a^\star(P_0), t} - \widehat{\mu}_{a, t} - (\mu_{a^\star(P_0)}(P_0) - \mu_a(P_0))\Big)
    \\
    &\ + \left( \big(\widehat{\mu}_{a^\star(P_0), t} - \widehat{\mu}_{a, t}\big) - (\mu_{a^\star(P_0)}(P_0) - \mu_a(P_0))\right)^2 |  \mathcal{F}_{t-1}\Bigg]
    \\
    &= \sum_{a\in\{1, 0\}}\mathbb{E}_{P_0}\left[\frac{\big(Y_{a,t}- \widehat{\mu}_{a, t}\big)^2}{\big(\widehat{w}^{\mathrm{GNA}}_{a,t}\big)^2}\mid  \mathcal{F}_{t-1}\right]-  \mathbb{E}_{P_0}\left[\left(\big(\widehat{\mu}_{a^\star(P_0), t} - \widehat{\mu}_{a, t}\big) - \big(\mu_{a^\star(P_0)}(P_0)- \mu_a(P_0)\big)\right)^2|  \mathcal{F}_{t-1}\right]. \label{eq:check4}
\end{align*}
Here, for $a\in\{1, 0\}$, the followings hold:
\begin{align*}
&\mathbb{E}_{P_0}\Bigg[\frac{\mathbbm{1}[A_{t} = a]\big(Y_{a,t}- \widehat{\mu}_{a, t}\big)^2}{\big(\widehat{w}^{\mathrm{GNA}}_{a,t}\big)^2}\mid  \mathcal{F}_{t-1}\Bigg]\\
&=\mathbb{E}_{P_0}\Bigg[\frac{\big(Y_{a,t}- \widehat{\mu}_{a, t}\big)^2}{\widehat{w}^{\mathrm{GNA}}_{a,t}}\mid  \mathcal{F}_{t-1}\Bigg]\\
&=\frac{\mathbb{E}_{P_0}[(Y_{a,t})^2] - 2\mu_a(P_0)\widehat{\mu}_{a, t}+ (\widehat{\mu}_{a, t})^2}{\widehat{w}^{\mathrm{GNA}}_{a,t}}\\
&=\frac{\mathbb{E}_{P_0}[(Y_{a,t})^2] - (\mu_a(P_0))^2 + (\mu_a(P_0) - \widehat{\mu}_{a, t})^2}{\widehat{w}^{\mathrm{GNA}}_{a,t}}\\
&=\frac{\sigma^2_a + (\mu_a(P_0) - \widehat{\mu}_{a, t})^2}{\widehat{w}^{\mathrm{GNA}}_{a,t}},\quad \mathrm{and}\\
&\mathbb{E}_{P_0}\Bigg[\frac{\mathbbm{1}[A_{t} = a]\big(Y_{a,t}- \widehat{\mu}_{a, t}\big)^2}{\big(\widehat{w}^{\mathrm{GNA}}_{a^\star(P_0), t}\big)^2}\frac{\mathbbm{1}[A_{t} = a]\big(Y_{a,t}- \widehat{\mu}_{a, t}\big)^2}{\big(\widehat{w}^{\mathrm{GNA}}_{a, t}\big)^2}\mid  \mathcal{F}_{t-1}\Bigg] = 0,
\end{align*}
where we used $\mathbb{E}_{P_0}[(Y_{a,t})^2] - \mu_a(P_0)^2 = \sigma^2_a$.
Therefore, the following holds:
\begin{align*}
& \mathbb{E}_{P_0}\left[\frac{\big(Y_{a^\star(P_0), t}- \widehat{\mu}_{a^\star(P_0), t}\big)^2}{\widehat{w}^{\mathrm{GNA}}_{a^\star(P_0), t}}\mid  \mathcal{F}_{t-1}\right]+ \mathbb{E}_{P_0}\left[\frac{\big(Y_{a, t}- \widehat{\mu}_{a, t}\big)^2}{\widehat{w}^{\mathrm{GNA}}_{a, t}}\mid  \mathcal{F}_{t-1}\right]\\
&\ \ \ \ \ \ -  \mathbb{E}_{P_0}\left[\left(\big(\widehat{\mu}_{a^\star(P_0), t} - \widehat{\mu}_{a, t}\big) - (\mu_{a^\star(P_0)}(P_0) - \mu_a(P_0))\right)^2|  \mathcal{F}_{t-1}\right]
\\
&= \mathbb{E}_{P_0}\left[\frac{\sigma^2_{a^\star(P_0)} + (\mu_{a^\star(P_0)}(P_0) - \widehat{\mu}_{a^\star(P_0), t})^2}{\widehat{w}^{\mathrm{GNA}}_{a^\star(P_0), t}}\right] +  \mathbb{E}_{P_0}\left[\frac{\sigma^2_{a} + (\mu_a(P_0) - \widehat{\mu}_{a, t})^2}{\widehat{w}^{\mathrm{GNA}}_{a, t}}\right]\\
&\ \ \ \ \ \  -  \mathbb{E}_{P_0}\left[\left(\big(\widehat{\mu}_{a^\star(P_0), t} - \widehat{\mu}_{a, t}\big) - (\mu_{a^\star(P_0)}(P_0) - \mu_a(P_0))\right)^2\right].
\end{align*}

From Lemma~\ref{lem:almost_sure_nuisance}, because $\widehat{\mu}_{b, t}\xrightarrow{\mathrm{a.s.}} \mu_b$ and $\widehat{w}^{\mathrm{GNA}}_{b, t}\xrightarrow{\mathrm{a.s.}} w^{\mathrm{GNA}}_{b}$ hold, we have
\begin{align*}
&\Bigg|\left(\frac{\sigma^2_{a^\star(P_0)} + (\mu_{a^\star(P_0)} - \widehat{\mu}_{a^\star(P_0), t})^2}{\widehat{w}^{\mathrm{GNA}}_{a^\star(P_0), t}}\right)+ \left(\frac{\sigma^2_{a} + (\mu_a(P_0) - \widehat{\mu}_{a, t})^2}{\widehat{w}^{\mathrm{GNA}}_{a, t}}\right)\\
&\ \ \ \ \ \ \ \ \ \ \ \ \ \ \ \ \ \ \ \ \ - \left(\big(\widehat{\mu}_{a^\star(P_0), t} - \widehat{\mu}_{a, t}\big) - \big(\mu_{a^\star(P_0)}(P_0) - \mu_a(P_0)\big)\right)^2\\
&\ \ \ \ \ \ \ \ \ \ \ \ \ \ \ \ \ \ \ \ \ -  \left(\frac{\sigma^2_{a^\star(P_0)}}{w^{\mathrm{GNA}}_{a^\star(P_0)}} + \frac{\sigma^2_a}{w^{\mathrm{GNA}}_{a}} + \left(\big(\mu_{a^\star(P_0)}(P_0) - \mu_a(P_0)\big) - (\mu_{a^\star(P_0)}(P_0) - \mu_a(P_0))\right)^2\right)\Bigg|
\\
&\leq \left|\frac{\sigma^2_{a^\star(P_0)}}{\widehat{w}^{\mathrm{GNA}}_{a^\star(P_0), t}} - \frac{\sigma^2_{a^\star(P_0)}}{w^{\mathrm{GNA}}_{a^\star(P_0)}}\right| +  \lim_{t\to\infty}\left| \frac{\sigma^2_{a}}{\widehat{w}^{\mathrm{GNA}}_{a, t}} - \frac{\sigma^2_{a}}{w^{\mathrm{GNA}}_{a}}\right|\\
&\ \ + \lim_{t\to\infty}\frac{(\mu_{a^\star(P_0)}(P_0) - \widehat{\mu}_{a^\star(P_0), t})^2}{\widehat{w}^{\mathrm{GNA}}_{a^\star(P_0), t}} + \lim_{t\to\infty}\frac{ (\mu_a(P_0) - \widehat{\mu}_{a, t})^2}{\widehat{w}^{\mathrm{GNA}}_{a, t}}\\
&\ \ + \lim_{t\to\infty}\big|\left(\big(\widehat{\mu}_{a^\star(P_0), t} - \widehat{\mu}_{a, t}\big) - \big(\mu_{a^\star(P_0)}(P_0) - \mu_a(P_0)\big)\right)^2\\
&\ \ \ \ \ \ \ \ \ \ \ \ \ \ \ \ \ \ \ \ \ \ - \left(\big(\mu_{a^\star(P_0)}(P_0) - \mu_a(P_0)\big) - \big(\mu_{a^\star(P_0)}(P_0) - \mu_a(P_0)\big)\right)^2\big|\\
&\xrightarrow{\mathrm{a.s.}} 0,
\end{align*}
as $T \to \infty$.
Therefore, we obtain 
\begin{align*}
&V(a)\mathbb{E}_{P_0}\left[\Psi^2_{a, t}\mid\mathcal{F}_{t-1}\right] - V(a)\\
&= \mathbb{E}_{P_0}\left[\frac{\sigma^2_{a^\star(P_0)} + (\mu_{a^\star(P_0)}(P_0) - \widehat{\mu}_{a^\star(P_0), t})^2}{\widehat{w}^{\mathrm{GNA}}_{a^\star(P_0), t}}\mid\mathcal{F}_{t-1}\right] +  \mathbb{E}_{P_0}\left[\frac{\sigma^2_a + (\mu_a(P_0) - \widehat{\mu}_{a, t})^2}{\widehat{w}^{\mathrm{GNA}}_{a, t}}\mid\mathcal{F}_{t-1}\right]\\
&-  \mathbb{E}_{P_0}\left[\left(\big(\widehat{\mu}_{a^\star(P_0), t} - \widehat{\mu}_{a, t}\big) - \big(\mu_{a^\star(P_0)}(P_0) - \mu_a(P_0)\big)\right)^2\mid\mathcal{F}_{t-1}\right]\\
&- \mathbb{E}_{P_0}\Bigg[\frac{\sigma^2_{a^\star(P_0)}}{w^{\mathrm{GNA}}_{a^\star(P_0)}} + \frac{\sigma^2_{a}}{w^{\mathrm{GNA}}_{a}}+ \left(\big(\mu_{a^\star(P_0)}(P_0) - \mu_a(P_0)\big) - \big(\mu_{a^\star(P_0)}(P_0) - \mu_a(P_0)\big)\right)^2\mid\mathcal{F}_{t-1}\Bigg]\\
&\xrightarrow{\mathrm{a.s.}} 0,
\end{align*}
as $T \to \infty$.
\end{proof}

This lemma immediately yields Lemma~\ref{lem:cesaro}. 
This result is a variant of the Ces\`{a}ro lemma for a case with almost sure convergence. We show the proof, which is based on the proof of Lemma~10 in \citet{hadad2019}.
\begin{proof}[Proof of Lemma~\ref{lem:cesaro}]
Let $u_t$ be $u_t =  \mathbb{E}_{P_0}\left[\Psi^2_{a, t}\mid\mathcal{F}_{t-1}\right] - 1$. Note that $ \frac{1}{T}\sum^T_{t=1}\mathbb{E}_{P_0}\left[\Psi^2_{a, t}\mid\mathcal{F}_{t-1}\right] - 1 = \frac{1}{T}\sum^T_{t=1} u_t$. 

From the proof of Lemma~\ref{lem:almost_conv}, we can find that $u_t$ is a bounded random variable. Recall
\begin{align*}
    &V(a)\mathbb{E}_{P_0}\left[\Psi^2_{a, t}\mid\mathcal{F}_{t-1}\right]\\
    &=\mathbb{E}_{P_0}\left[\frac{\sigma^2_{a^\star(P_0)} + (\mu_{a^\star(P_0)}(P_0) - \widehat{\mu}_{a^\star(P_0), t})^2}{\widehat{w}^{\mathrm{GNA}}_{a^\star(P_0), t}}\mid\mathcal{F}_{t-1}\right]+  \mathbb{E}_{P_0}\left[\frac{\sigma^2_{a} + (\mu_a(P_0) - \widehat{\mu}_{a, t})^2}{\widehat{w}^{\mathrm{GNA}}_{a, t}}\mid\mathcal{F}_{t-1}\right]\\
    &\ \ \ -  \mathbb{E}_{P_0}\left[\left(\big(\widehat{\mu}_{a^\star(P_0), t} - \widehat{\mu}_{a, t}\big) - \big(\mu_{a^\star(P_0)}(P_0) - \mu_a(P_0)\big)\right)^2\mid\mathcal{F}_{t-1}\right].
\end{align*}
We assumed that $(\mu_{a^\star(P_0)}(P_0), \mu_a(P_0), \widehat{\mu}_{a^\star(P_0), t}, \widehat{\mu}_{a, t}, \widehat{w}^{\mathrm{GNA}}_{a^\star(P_0), t}, \widehat{w}^{\mathrm{GNA}}_{a, t})$ are all bounded random variables. 
Let $C$ be a constant independent of $T$ such that $|u_t| < C$ for all $t\in\mathbb{N}$. 

Almost-sure convergence of $u_t$ to zero as $t\to\infty$ implies that for all $\epsilon > 0$, there exists $t(\epsilon)$ such that $|u_t| < \epsilon$ for all $t\geq t(\epsilon)$ with probability one. Let $\mathcal{E}(\epsilon)$ denote the event in which this happens; that is, $\mathcal{E}(\epsilon) = \{ |u_t| < \epsilon\quad \forall\ t \geq t(\epsilon)\}$. Under this event, for $T > t(\epsilon)$, it holds that $ \frac{1}{T}\sum^T_{t=1}|u_t| \leq \frac{1}{T}\sum^{t(\epsilon)}_{t=1} C + \frac{1}{T}\sum^{T}_{t=t(\epsilon) + 1} \epsilon = \frac{1}{T}t(\epsilon) C + \epsilon$, 
where $\frac{1}{T}t(\epsilon) C \to 0$ as $T\to \infty$. Therefore, for all $\epsilon > 0$, there exists $t(\epsilon) > 0$ such that for all $T > t(\epsilon)$, $\frac{1}{T}\sum^T_{t=1}|u_t| < \epsilon$ holds with probability one. 
\end{proof}

\end{document}